\theoremstyle{plain}
\newtheorem{theorem}{Theorem}[section]
\newtheorem{lemma}[theorem]{Lemma}
\theoremstyle{definition}
\newtheorem{assumption}[theorem]{Assumption}
\theoremstyle{remark}
\newtheorem{remark}[theorem]{Remark}
\definecolor{mygray}{gray}{.9}
\newwrite\apxtocfile
\newcommand{\apxcontentslineSec}[4]{%
  \noindent\textbf{#1.}~\hyperref[#4]{#2}\dotfill\hyperref[#4]{#3}\par
}
\newcommand{\apxcontentslineSub}[4]{%
  \noindent\hspace*{1.5em}#1.~\hyperref[#4]{#2}\dotfill\hyperref[#4]{#3}\par
}
\icmltitlerunning{AdaGC: Enhancing LLM Pretraining Stability via Adaptive Gradient Clipping}
\begin{document}

\twocolumn[
  \icmltitle{AdaGC: Enhancing LLM Pretraining Stability via Adaptive Gradient Clipping}



  \icmlsetsymbol{equal}{*}
  \icmlsetsymbol{intern}{\ensuremath{\dagger}}

  \begin{icmlauthorlist}
    \icmlauthor{Guoxia Wang}{baidu,equal}
    \icmlauthor{Shuai Li}{baidu,nudt,equal}
    \icmlauthor{Congliang Chen}{cuhksz}
    \icmlauthor{Jinle Zeng}{baidu} \\
    \icmlauthor{Jiabin Yang}{baidu}
    \icmlauthor{Dianhai Yu}{baidu,pku}
    \icmlauthor{Yanjun Ma}{baidu}
    \icmlauthor{Li Shen}{sysu}
  \end{icmlauthorlist}

  \icmlaffiliation{baidu}{Baidu Inc., China}
  \icmlaffiliation{nudt}{National University of Defense Technology, China}
  \icmlaffiliation{cuhksz}{The Chinese University of Hong Kong (Shenzhen), China}
  \icmlaffiliation{sysu}{Shenzhen Campus of Sun Yat-sen University, China}
  \icmlaffiliation{pku}{Peking University, China}

  \icmlcorrespondingauthor{Dianhai Yu}{yudianhai@baidu.com}
  \icmlcorrespondingauthor{Yanjun Ma}{mayanjun02@baidu.com}
  \icmlcorrespondingauthor{Li Shen}{mathshenli@gmail.com}

  \icmlkeywords{Machine Learning, ICML}

  \vskip 0.25in
]



\printAffiliationsAndNotice{\icmlEqualContribution}

\begin{abstract}
\vskip -0.06cm
Loss spikes remain a persistent obstacle in large-scale language model pretraining. While previous research has attempted to identify the root cause of loss spikes by investigating individual factors, we observe that, in practice, such spikes are typically triggered by the confluence of heterogeneous factors. Empirically, loss spikes may arise from a combination of data outliers, hardware or transient computational faults, numerical precision issues, and hyperparameter settings. Regardless of the underlying cause, these spikes manifest as unstable optimizer updates, as abnormal gradients contaminate both first- and second-moment states. In this paper, we propose a principled gradient-centric remedy: AdaGC, an adaptive per-tensor gradient clipping scheme that mitigates such contamination by bounding gradient norms relative to a tensor-wise exponential moving average of their historical clipped values. AdaGC is optimizer-agnostic, introduces negligible memory overhead, and reduces communication costs compared to GlobalGC, particularly in hybrid-parallel distributed training. Experiments on Llama-2 7B, Mixtral 8×1B, and ERNIE 10B-A1.4B demonstrate that AdaGC robustly eliminates training instabilities, consistently reducing spike scores to zero for all models and improving downstream accuracy over GlobalGC by 1.32\%, 1.27\%, and 2.48\%, respectively. Furthermore, AdaGC seamlessly integrates with optimizers such as Muon and Lion, consistently yielding higher average accuracy and zero spike scores. The code is available at \href{https://github.com/PaddlePaddle/PaddleFleet}{PaddlePaddle/PaddleFleet} (see Research/AdaGC).

\end{abstract}
\section{Introduction}
\label{introduction}

\begin{figure}[!t]
    \vskip -0.1cm
    \centering
    \begin{subfigure}{0.5\linewidth}
        \includegraphics[width=1\linewidth]{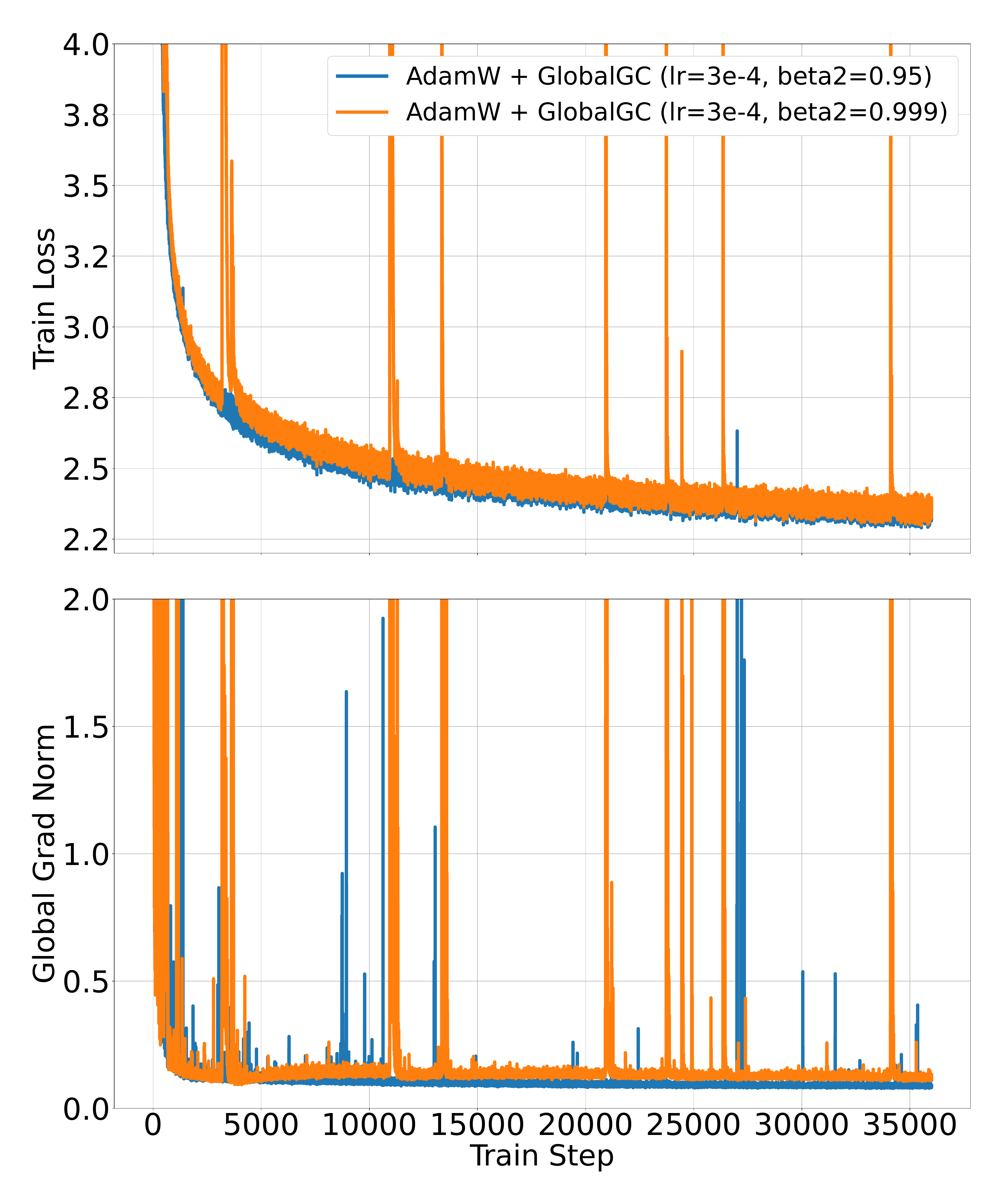}
        \vskip -0.2cm
        \caption{Mixtral}
        \label{fig:adam_beta2}
    \end{subfigure}\hfill
    \begin{subfigure}{0.5\linewidth}
        \includegraphics[width=1\linewidth]{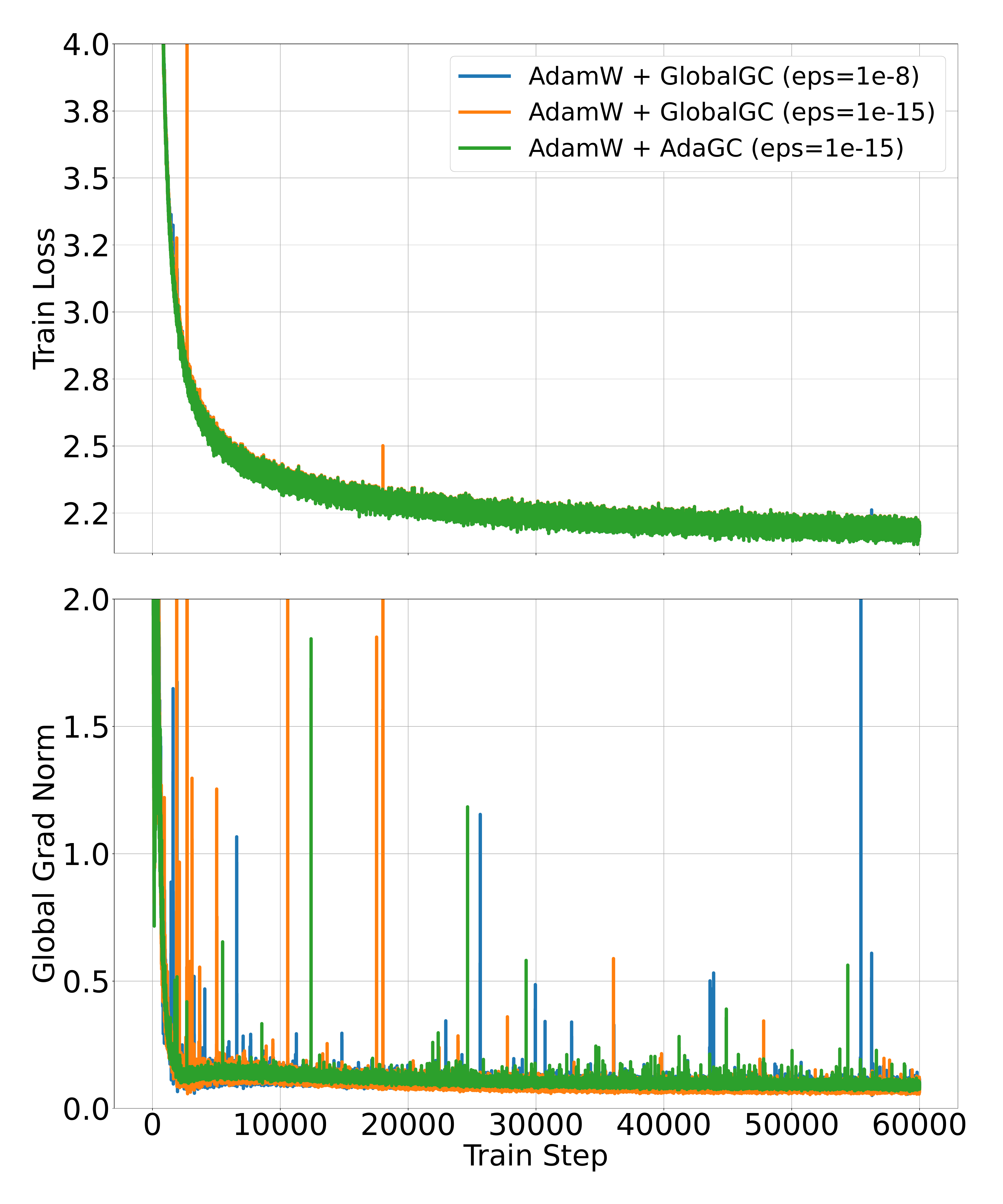}
        \vskip -0.2cm
        \caption{ERNIE}
        \label{fig:adam_epsilon}
    \end{subfigure}\vfill
    \vskip -0.1cm
    \begin{subfigure}{0.5\linewidth}
        \includegraphics[width=1\linewidth]{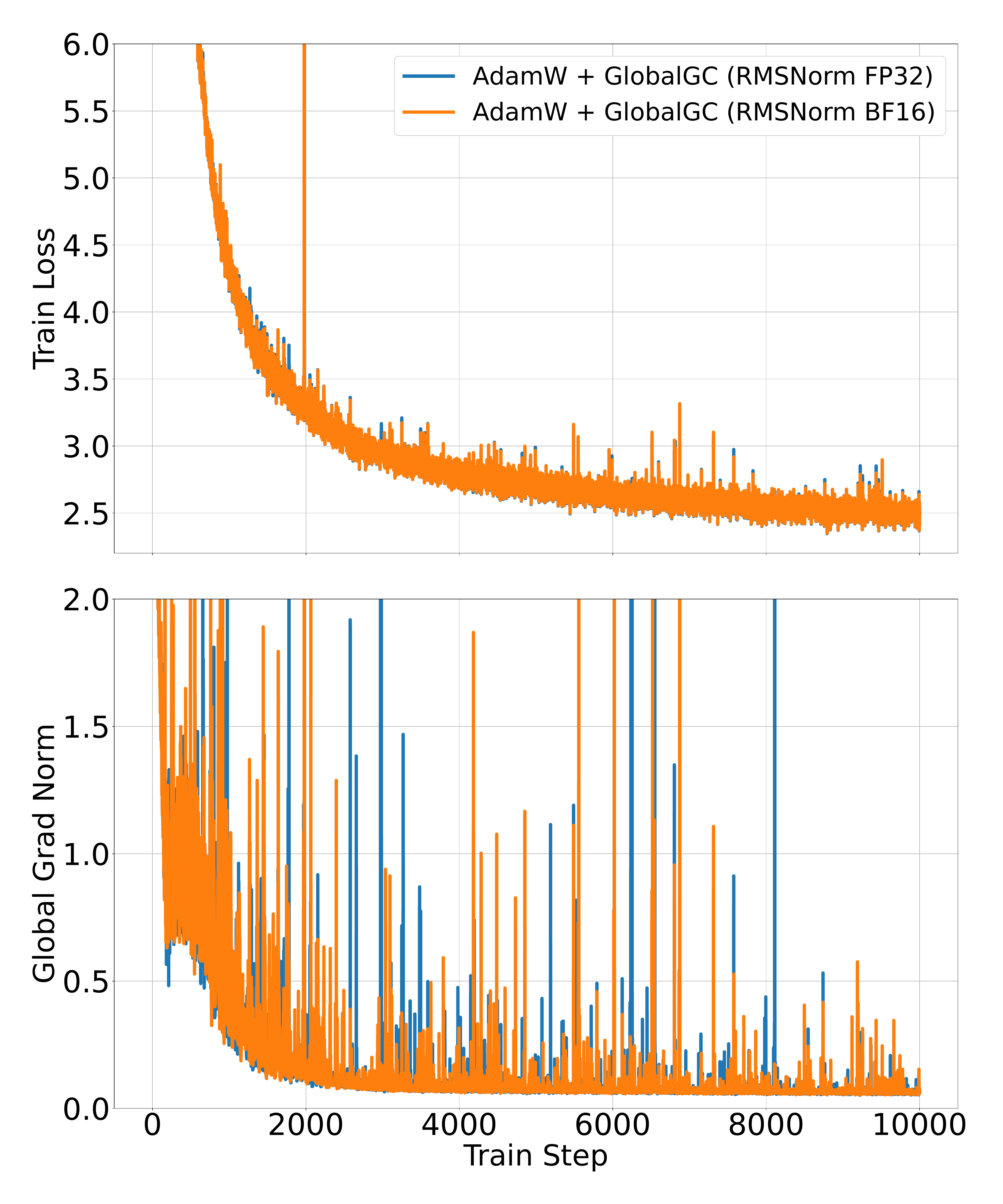}
        \vskip -0.2cm
        \caption{ERNIE}
        \label{fig:rmsnorm_spike}
    \end{subfigure}\hfill
    \begin{subfigure}{0.5\linewidth}
        \includegraphics[width=1\linewidth]{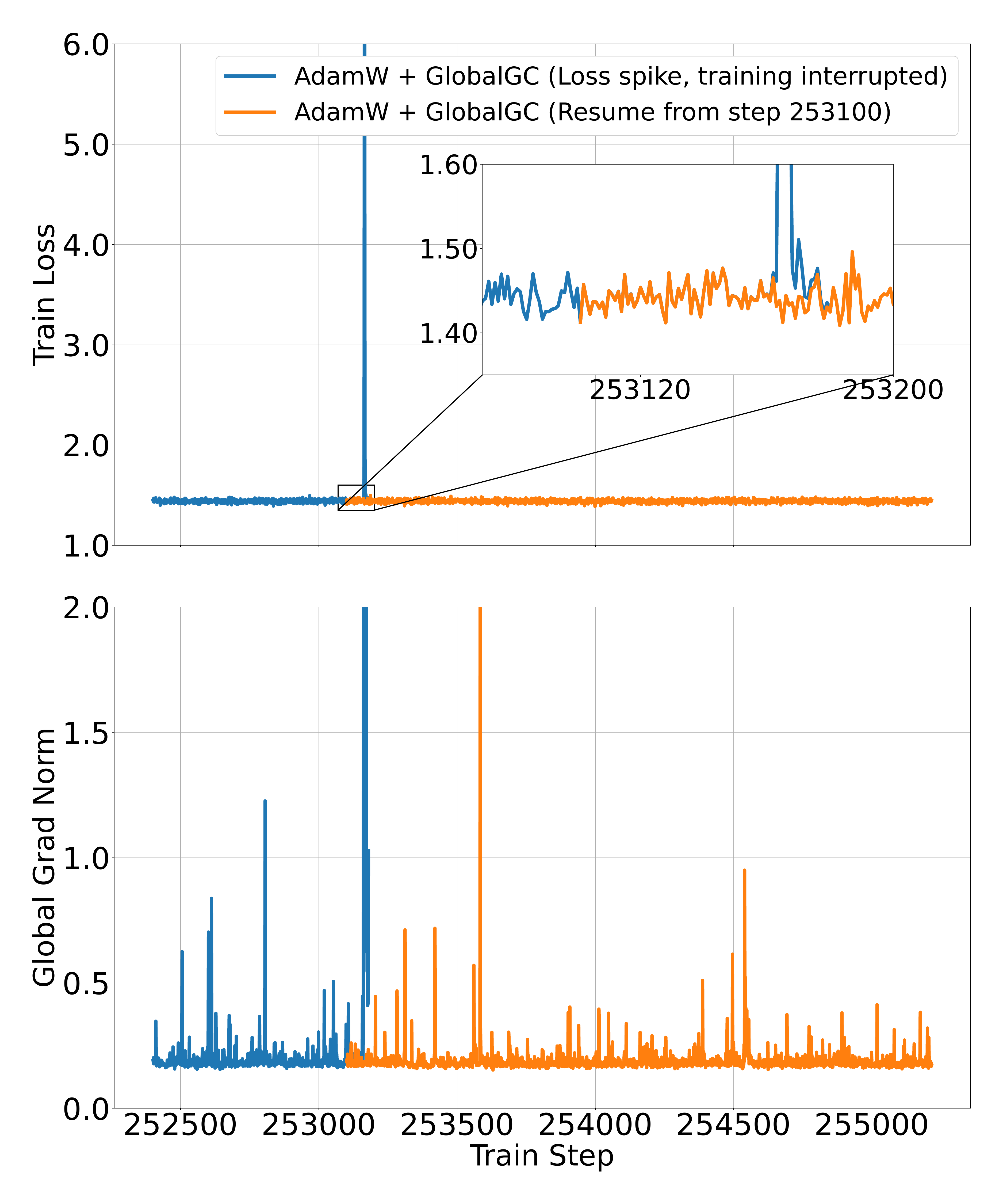}
        \vskip -0.2cm
        \caption{ERNIE}
        \label{fig:rerun_fix}
    \end{subfigure}
    \caption{\textbf{Reproduced cases of loss spikes and mitigation via resuming.} Loss spikes are triggered by (a) increasing $\beta_2$ or (b) reducing $\epsilon$ in AdamW, (c) using lower-precision RMSNorm, even under global gradient clipping, and (d) are resolved by resuming due to stochasticity in FlashAttention backward passes.}
    \label{fig:loss_spike_reproduction}
    \vskip -0.05cm
\end{figure}

\begin{table*}[!t]
\centering
\caption{Common sources of loss spikes in LLM pretraining, with typical triggers and practical mitigations.}
\label{tab:spike_sources_summary}
\small
\begin{tabularx}{\linewidth}{l >{\raggedright\arraybackslash}X >{\raggedright\arraybackslash}X}
\toprule
\textbf{Source}             & \textbf{Typical trigger}                  & \textbf{Common mitigation} \\
\midrule
Data quality                & Noisy/ill-formed samples                  & Resume and skip data batches \\
\specialrule{0em}{2pt}{2pt}
Hardware / transient faults & GPU silent errors                         & Fault tolerance and machine replacement \\
\specialrule{0em}{2pt}{2pt}
Numerical precision         & Activation outlier                        & BF16 to FP32 or blockwise quantization \\
\specialrule{0em}{2pt}{2pt}
Optimizer / LN hyperparams  & AdamW/RMSNorm $\epsilon$; AdamW $\beta_2$ & Careful hyperparameter selection \\
\bottomrule
\end{tabularx}
\end{table*}







The rapid scaling of large language models (LLM) has introduced new challenges in pretraining stability, often manifesting as abrupt loss spikes or transient divergences across a wide range of model architectures and data scales~\citep{chowdhery2023palm,touvron2023llama1,liu2024deepseek,team2025kimi,ernie2025technicalreport}. Despite extensive empirical studies, the fundamental causes of these instabilities remain elusive. Recent research, alongside our own analyses, indicates that loss spikes can arise from a variety of sources, including: (i) data quality issues~\citep{chowdhery2023palm}; (ii) hardware or transient computational faults~\citep{kexuefm10657}; (iii) variations in numerical precision (for example, FP32 typically offers greater robustness than BF16, whereas FP8 can sometimes enhance stability by suppressing outlier values via implicit quantization~\citep{hftransformerspr29402, liu2024deepseek}); and (iv) the selection of optimizer and layer normalization hyperparameters, such as the $\epsilon$ parameter in RMSNorm or AdamW, and $\beta_{2}$ in AdamW~\citep{ma2021qualitativestudydynamicbehavior, CattaneoShigida2025, bai2025adaptive}. For instance, we observe that increasing $\beta_{2}$ or decreasing $\epsilon$ in AdamW can trigger loss spikes, whereas increasing the precision of RMSNorm from BF16 to FP32 significantly improves stability. In summary, Table~\ref{tab:spike_sources_summary} presents the common sources, typical triggers, and practical mitigations for loss spikes identified from both previous studies and our experimental findings, while Figure~\ref{fig:loss_spike_reproduction} demonstrates several representative cases we have reproduced.

Although the upstream causes of instability are diverse and often subtle, these events consistently converge at the optimizer level, manifesting as abnormal gradients. Such outlier gradients are incorporated into the optimizer's first- and second-moment estimates, thereby corrupting parameter updates and propagating instability through subsequent training. Notably, we find that even resuming interrupted training (while keeping the random seed and data unchanged) can mitigate a loss spike, merely due to the stochastic nature of $dQ$, $dK$, and $dV$ in FlashAttention~\citep{dao2023flashattention} (see Figure~\ref{fig:rerun_fix}). This observation further suggests that, in certain model states, even minute numerical differences can trigger a loss spike, with gradient outliers playing a critical role in both the initiation and propagation of instabilities during optimizer state updates.

While the slight stochasticity introduced by FlashAttention can sometimes circumvent a loss spike, repeatedly interrupting and resuming training imposes substantial computational overhead. Given that these instabilities stem from diverse upstream causes but ultimately converge at the optimizer level, our work \emph{does not attempt to identify the precise root causes}. Instead, we adopt a \textbf{gradient-centric} perspective: irrespective of the initial trigger, loss spikes consistently arise when outlier gradients contaminate the optimizer states. Therefore, by preventing such gradients from entering the first- and second-moment accumulators, we provide a unified and effective strategy to mitigate training instability.

A standard mitigation strategy is global gradient clipping (GlobalGC), which bounds the global $\ell_2$ norm of the aggregated gradient. However, this approach is fundamentally mismatched to modern large-scale pretraining in two key respects: (1) \emph{Temporal mismatch:} The optimal global clipping threshold typically decreases over the course of training; a fixed threshold risks under-clipping in later phases. (2) \emph{Spatial mismatch:} Gradient statistics and rare spikes vary asynchronously across different parameter tensors, making a single global threshold insufficient—protecting one tensor may under-serve or over-constrain others.

To address these challenges, we introduce \emph{Adaptive Gradient Clipping based on Local Gradient Norm} (AdaGC): a simple, per-tensor clipping rule that leverages an EMA of each tensor’s historical gradient norm as a reference. Each tensor’s gradient is clipped relative to its own EMA, preventing transient outliers from contaminating the first- and second-moment accumulators and, ultimately, the parameter updates. A brief warm-up period applies global clipping and initializes the EMA to avoid early overestimation. AdaGC is optimizer-agnostic and can be seamlessly integrated with AdamW, Lion, and Muon. Our main contributions are as follows:

\begin{table*}[!t]
\small
\centering
\caption{Comparison of major gradient/update clipping methods for stability. Here, $\boldsymbol{\theta}_t$ denotes the model parameters, $\boldsymbol{g}_t$ the gradients, $\Delta_t$ the optimizer update, $\eta_t$ the learning rate, $\lambda_{abs}$ the absolute threshold, and $\lambda_{rel}$ the relative threshold. For ZClip, $\mu_t$ and $\sigma_t$ denote the EMA mean and standard deviation of the global gradient norm, 
$z_t$ is the z-score, and $\xi(\cdot)$ is the z-score adjustment function.}
\label{tab:eval_algorithms}
\resizebox{1.0\linewidth}{!}{\begin{tabular}{l|ccccc}
\toprule
\specialrule{0em}{2pt}{2pt}
\textbf{Method}
& \textbf{Algorithm}
& \textbf{Gradient}
& \textbf{Update}
& \textbf{Granularity}
& \textbf{Threshold Type} \\
\specialrule{0em}{2pt}{2pt}
\midrule
\specialrule{0em}{2pt}{2pt}

GlobalGC~\citep{pascanu2013difficulty}
& $\min\{1.0, \lambda_{abs}\frac{1}{\|\boldsymbol{g}_t\|}\}$
& \Checkmark
& \XSolidBrush
& Global
& Fixed constant \\

\specialrule{0em}{2pt}{2pt}
ClipByValue
& $clamp(-\lambda_{abs}, \lambda_{abs})$
& \Checkmark
& \XSolidBrush
& Element
& Fixed constant \\

\specialrule{0em}{2pt}{2pt}
AGC~\citep{brock2021high}
& $\min \{1.0, \lambda_{rel} \frac{\|\boldsymbol{\theta}_t\|}{\|\Delta_t\|}\}$
& \XSolidBrush
& \Checkmark
& Unit
& Weight $\ell_2$ norm \\

\specialrule{0em}{2pt}{2pt}
Clippy~\citep{tang2023improving}
& $\min\{1.0, \min (\frac{\lambda_{rel} \|\boldsymbol{\theta}_t\|_{\infty} + \lambda_{abs}}{\eta_{t} * \|\Delta_t\|_{\infty}})\}$
& \XSolidBrush
& \Checkmark
& Tensor
& Weight $\ell_{\infty}$ norm \\


\specialrule{0em}{2pt}{2pt}
& $\min\{1.0, \frac{\tau_t}{\|\boldsymbol{g}_t\|}\}$
& & & & \\
\multirow{-2}{*}{ZClip~\citep{kumar2025zclip}}
& $\tau_t=\mu_t+\xi(z_t)\sigma_t,\quad z_t=\frac{\|\boldsymbol{g}_t\|-\mu_t}{\sigma_t}$
& \multirow{-2}{*}{\Checkmark}
& \multirow{-2}{*}{\XSolidBrush}
& \multirow{-2}{*}{Global}
& \multirow{-2}{*}{EMA z-score} \\

\specialrule{0em}{2pt}{2pt}
\midrule
\rowcolor{mygray}
& $\min\{1.0, \lambda_{rel}\frac{\gamma_{t-1, i}}{\|\boldsymbol{g}_{t, i}\|}\}$
& & & & \\
\rowcolor{mygray}
\multirow{-2}{*}{\textbf{AdaGC (ours)}}
& $\gamma_{t,i} = \beta \gamma_{t-1, i} + (1-\beta)\|\boldsymbol{g}_{t,i}\|$
& \multirow{-2}{*}{\Checkmark}
& \multirow{-2}{*}{\XSolidBrush}
& \multirow{-2}{*}{Tensor}
& \multirow{-2}{*}{EMA of gradient norm} \\

\bottomrule
\end{tabular}}
\end{table*}

\begin{itemize}
    \item \textbf{A unified, gradient-centric perspective}: We clarify how loss spikes universally propagate via abnormal gradients polluting optimizer states, irrespective of their origin, motivating intervention at the gradient level prior to moving-average accumulation.
    \item \textbf{An adaptive, per-tensor clipping rule}: By tracking each tensor’s gradient norm statistics with an EMA, AdaGC provides both temporal adaptivity and spatial specificity, suppressing outliers while minimally disturbing typical learning dynamics.
    \item \textbf{System efficiency}: We analyze computational and communication overhead, showing that AdaGC reduces communication relative to GlobalGC under hybrid parallel distributed training.
    \item \textbf{Empirical validation at scale}: On Llama-2 7B, Mixtral 8$\times$1B, and ERNIE 10B-A1.4B models, AdaGC robustly eliminates training instabilities and improves accuracy compared to GlobalGC by +1.32\%, +1.27\%, and +2.48\%, respectively. The method is similarly effective with AdamW, Lion, and Muon optimizers.
\end{itemize}
\section{Related Work}

\textbf{Stability in large-scale pretraining:} Dozens of approaches address instability during large-model pretraining, including: architectural advances (Pre-LN~\cite{xiong2020layer}, RMSNorm~\citep{zhang2019root}), careful initialization~\citep{nguyen2019transformers, takase2023spike, nishida2024initialization}, auxiliary loss terms (Max-z loss~\citep{yang2023baichuan}). Recent work~\cite{olmo20242} also explores combining multiple stabilization strategies. These measures improve average stability but do not directly prevent abnormal gradients from corrupting optimizer states.

\textbf{Gradient/Update Clipping:} Gradient and update clipping achieve stability by limiting the magnitude of gradients and parameter updates, preventing excessively large weight updates. Global gradient clipping~\citep{pascanu2013difficulty} is prevalent, with innovative approaches like AGC~\citep{brock2021high}
and Clippy~\citep{tang2023improving}, which use model weights to adjust the clipping threshold. The SPAM~\citep{huang2025spam} method stabilizes the model training process by introducing a momentum reset mechanism and an element-wise gradient clipping strategy based on second-moment estimation. ZClip~\cite{kumar2025zclip} adaptively mitigates loss spikes by tracking EMA statistics of the global gradient norm and applying z-score-based anomaly detection to clip abnormal gradient spikes. Alternatives like Adafactor~\citep{shazeer2018adafactor}, StableAdamW~\citep{wortsman2023stable}, and LAMB~\citep{you2019large} offer update clipping techniques better suited for stability training of large-scale models. Nonetheless, a significant number of loss spikes still occur during the training of large language models, even with the application of these methodologies. Due to our gradient-centric perspective, we focus our discussion on \emph{clipping-based} methods. These methods fall into two categories: \emph{value-based} approaches, which truncate individual gradient components exceeding a predefined limit, and \emph{norm-based} approaches, which rescale the entire gradient vector only when its overall magnitude exceeds a threshold. AdaGC belongs to the \emph{norm-based} category, leveraging adaptive per-tensor norm thresholds to stabilize training. For a comparative summary, see Table~\ref{tab:eval_algorithms}. 
\begin{figure*}[!t]
    \centering
    \includegraphics[width=1\linewidth]{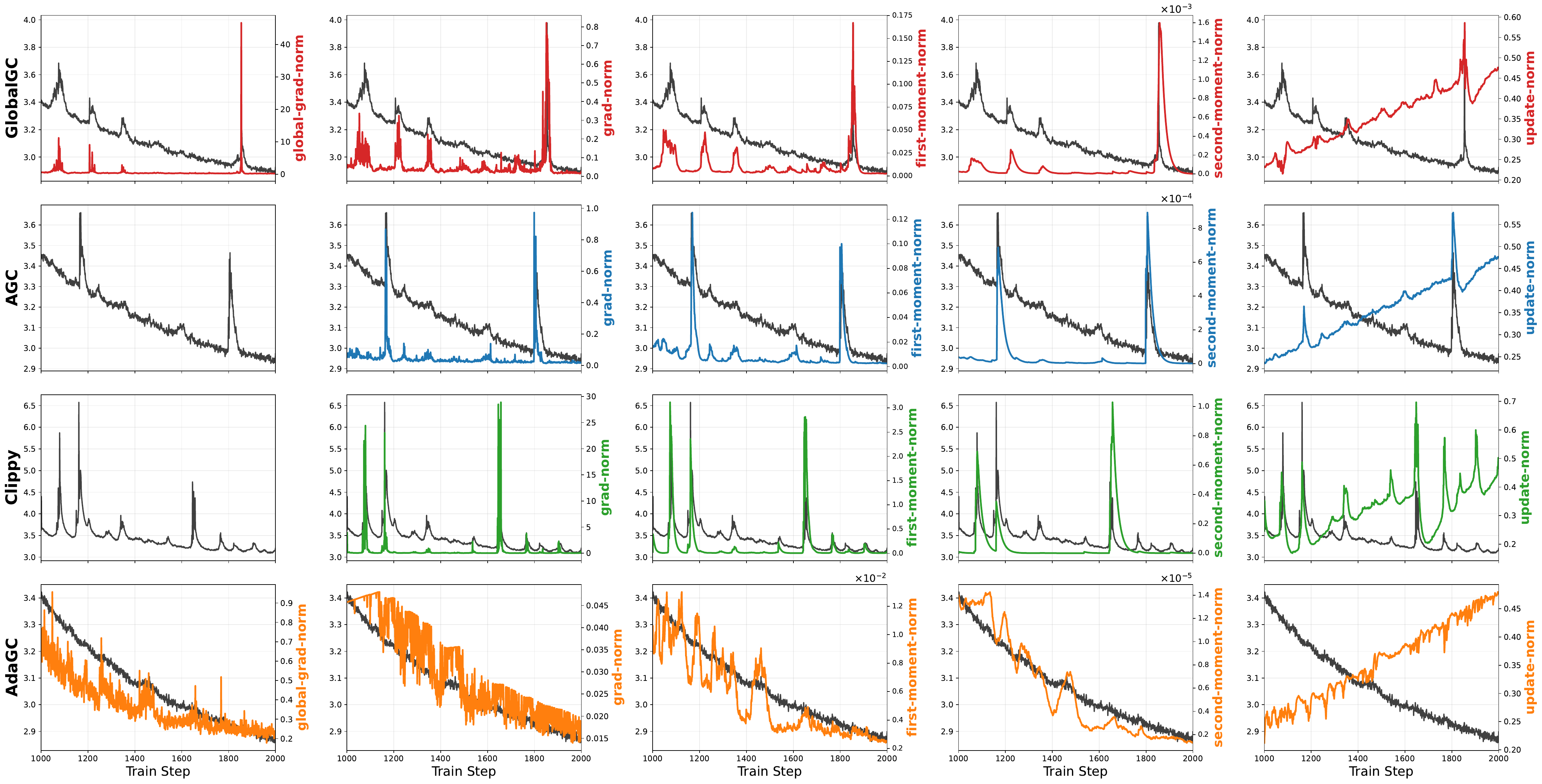}
    \caption{Visualization of the gradient norm, first-moment norm, second-moment norm, update norm, loss, and global gradient norm for the \texttt{embedding} of Llama-2 1.3B during warmup phase. Each row represents a different clipping method: the first row is GlobalGC, the second is AGC, the third is Clippy, and the fourth is our AdaGC. The black curve in each plot shows the loss trajectory.}
    \label{fig:optimizer_dynamic}
\end{figure*}

\section{Motivation: From Root-Cause Diversity to a Unified Gradient-Centric Remedy}

Through a series of experiments (see Figure~\ref{fig:loss_spike_reproduction} and Figure~\ref{fig:optimizer_dynamic}), we observe that loss spikes encountered under diverse settings consistently coincide with abrupt fluctuations in the gradient norm. Comparative analyses further reveal limitations of existing methods such as GlobalGC, AGC, and Clippy: GlobalGC's static global threshold cannot detect or suppress localized abnormal gradients, allowing outliers to contaminate optimizer states and trigger instability. AGC and Clippy focus on controlling parameter updates, leaving internal moments vulnerable to large gradient outliers.

As discussed in the Introduction (Section ~\ref{introduction}), loss spikes typically result from a combination of multiple factors. While the specific triggers may vary, these loss spikes share a common manifestation: abnormally large gradients are incorporated into the optimizer's moment estimates, leading to unstable updates. Based on these analyses, we propose a unified remedy: \textbf{\emph{regardless of the root cause, instability in large-scale training is effectively addressed via gradient-centric clipping.}} Specifically, only localized and adaptive clipping, applied \emph{before} gradients are integrated into the optimizer's moment estimates, can effectively constrain the influence of outlier gradients. We thus distill two key principles for loss spike mitigation: \emph{(1) Locality:} clip gradients for each parameter tensor individually, avoiding the insensitivity of a global threshold; \emph{(2) Adaptivity:} dynamically adjust each tensor's clipping threshold, e.g., using an EMA of its recent gradient norms.
\section{Methodology: AdaGC}
\label{sec:adagc}

\subsection{Preliminaries}

\textbf{Notations.} Let $x_t \in \mathbb{R}^d$ denote a parameter vector where $x_t^j$ represents its $j$-th coordinate for $j \in [d]$. We write $\nabla_x f(x)$ for the gradient of any differentiable function $f: \mathbb{R}^d \rightarrow \mathbb{R}$, and use $u^2$ and $u/v$ to denote element-wise square and division operations for vectors $u,v \in \mathbb{R}^d$. The $\ell_2$-norm and $\ell_\infty$-norm are denoted by $\|\cdot\|$ and $\|\cdot\|_\infty$, respectively. For asymptotic comparisons, we write $f = \mathcal{O}(g)$ if $\exists c > 0$ such that $f(x) \leq cg(x)$ for all $x$ in the domain.

\textbf{Gradient Clipping Fundamentals.} Consider a stochastic optimization problem with parameters $\theta \in \mathbb{R}^d$ and loss function $f(\theta; X_t)$ evaluated on mini-batch $X_t$ at step $t$. Standard gradient descent updates follow:
\begin{equation}
    \theta_t = \theta_{t-1} - \eta_t \nabla_\theta f(\theta_{t-1}, X_t)
\end{equation}
To prevent unstable updates from gradient explosions, GlobalGC~\citep{pascanu2013difficulty} modifies the update rule as:
\begin{equation}
    \begin{array}{c}
    \theta_t = \theta_{t-1} - \eta_t h_t \nabla_{\theta}f(\theta_{t-1}, X_t) \\[0.4em]
    \text{ where } h_t := \min \left\{\frac{\lambda_{abs}}{\|\nabla_{\theta}f(\theta_{t-1}; X_{t})\|}, 1.0 \right\} 
    \end{array}
\end{equation}
Here $\lambda_{abs}$ is an absolute clipping threshold requiring careful tuning, and $\eta_t$ is the learning rate. Our work focuses on \textit{norm-based} clipping (scaling entire gradients exceeding $\lambda_{abs}$) rather than \textit{value-based} clipping (element-wise truncation).

\subsection{Adaptive Gradient Clipping based on Local Gradient Norm}
This section introduces a novel gradient clipping strategy termed AdaGC, which distinguishes itself by not relying on a global gradient norm. Instead, AdaGC focuses on the local gradient norm of each tensor and utilizes a dynamic adaptive mechanism for gradient clipping. The proposed method employs an EMA mechanism to maintain smoothed estimates of historical gradient norms per tensor, thus enhancing the accuracy of anomalous gradient detection and enabling independent clipping adjustments tailored to each tensor's specific conditions. EMA is widely used in deep learning, and within AdaGC, it facilitates the balancing of historical and current gradient norms. The formulation is as follows:
\begin{equation}
    \label{eq:adagc}
    \begin{array}{c}
    \boldsymbol{g}_{t, i} \leftarrow h_{t, i} \cdot \boldsymbol{g}_{t, i} , \text{where } h_{t, i} = \min \left\{\lambda_{rel} \frac{\gamma_{t-1, i}}{\|\boldsymbol{g}_{t, i}\|}, 1.0\right\}, \\[1em]
    \gamma_{t, i} = \beta \gamma_{t-1, i} + (1 - \beta) \|\boldsymbol{g}_{t, i}\|.
    \end{array}
\end{equation}
Here, $\lambda_{rel}$ is a predefined relative clipping threshold, $\boldsymbol{g}_{t,i}$ represents the gradient of the $i$-th tensor at time step $t$, and $h_{t,i}$ is a clipping function activated when $\|\boldsymbol{g}_{t, i}\| > \lambda_{rel} \cdot \gamma_{t-1,i}$, thereby scaling the gradient norm to $\lambda_{rel} \cdot \gamma_{t-1,i}$. Additionally, $\beta$ is the smoothing coefficient for EMA. \emph{We consistently incorporate the clipped gradient norm into the historical observations rather than the pre-clipped values.}

Despite its simplicity, AdaGC adaptively adjusts based on the magnitude of each tensor's gradient norm. Whenever the gradient norm at a current timestep exceeds a predefined range of average norms within a historical window, it effectively suppresses these outlier gradients.

However, during the initial stages of model training (e.g., the first 100 steps), the gradient norms are typically large and fluctuate significantly, indicating a substantial decreasing trend. Direct application of AdaGC during this period could lead to two issues: first, erroneously accumulating the early large gradient norms into the historical values, resulting in compounded errors; second, compared to GlobalGC, AdaGC might delay clipping, thus potentially slowing down the loss reduction. To address these issues, we introduce a hyperparameter $T_{start}$ (default set to 100), representing a warm-up period during which traditional GlobalGC is applied.

Additionally, AdaGC is optimizer-agnostic, can be seamlessly integrated with various optimizers, such as AdamW~\citep{loshchilov2017decoupled}, Lion~\citep{chen2024symbolic}, and Muon~\citep{jordan2024muon}, enhancing its practicality and flexibility. Algorithm~\ref{algorithm1} in Appendix~\ref{appendix:algorithm} demonstrates its implementation with the AdamW optimizer.

\begin{table*}[!t]
\begin{minipage}[h]{0.45\linewidth}
    \centering
    \caption{Zero-shot accuracy of AdaGC on Llama-2 7B under different hyperparameters.}
    \label{tab:hyper_params_zero_shot}
    \resizebox{0.75\linewidth}{!}{\begin{tabular}{c|cccc}
\toprule
\diagbox{$\lambda_{rel}$}{$\beta$}  & 0.98  & 0.985 & 0.99 & 0.999          \\
\midrule
1.03                                & 50.06 & 50.92 & 50.95 & 50.96         \\
1.04                                & 48.88 & 50.59 & \textbf{51.04} & 50.76\\
1.05                                & \underline{51.01} & 49.95 & 50.57 & 50.74            \\
\bottomrule
\end{tabular}}
\end{minipage}\hfill
\begin{minipage}[h]{0.45\linewidth}
    \centering
    \caption{Two-shot accuracy of AdaGC on Llama-2 7B under different hyperparameters.}
    \label{tab:hyper_params_two_shot}
    \resizebox{0.75\linewidth}{!}{\begin{tabular}{c|cccc}
\toprule
\diagbox{$\lambda_{rel}$}{$\beta$} & 0.98  & 0.985 & 0.99 & 0.999          \\
\midrule
1.03                               & 52.31 & 52.68 & 53.13 & 53.42         \\
1.04                               & 52.68 & 53.01 & \underline{53.47} & 52.85 \\
1.05                               & 52.68 & 52.67 & 51.96 & \textbf{53.51}         \\
\bottomrule
\end{tabular}}
\end{minipage}
\end{table*}

\subsection{Memory, Computation, and Communication}

\textbf{Memory.} As a tensor-wise method, AdaGC maintains an EMA of gradient norms for each parameter tensor, requiring storage of a single 32-bit float (4 bytes) per tensor. For ERNIE models, the total additional memory overhead has complexity of $\mathcal{O}((9 + 3E)\times L + 3)$, where $L$ and $E$ denote the number of transformer layers and experts, respectively. Specifically, this includes four tensors from the attention module per layer, $3\times (1+E)$ tensors from the shared and router experts per layer, and two RMSNorm tensors per layer; plus one tensor each for the embedding layer, the final layer normalization, and the language modeling head. In practice, this added memory footprint is negligible compared to the overall memory requirements of large-scale model training.

\textbf{Computation.} The computational cost of computing $\ell_2$ norms is the same for both AdaGC and GlobalGC. The difference is that GlobalGC applies a uniform scaling to all gradients, while AdaGC scales each gradient tensor independently.

\textbf{Communication.} In setups involving data parallelism (DP), tensor parallelism (TP), and pipeline parallelism (PP), GlobalGC requires an all-reduce operation across all DP, TP, and PP groups to aggregate the global norm. In contrast, AdaGC only needs an all-reduce within each TP group to compute per-tensor local norms. This design substantially reduces communication overhead, offering increasing benefits as model and cluster sizes grow.

\section{Experiments}

\begin{figure*}[!t]
    \centering
    \begin{subfigure}{0.5\textwidth}
        \centering
        \includegraphics[width=1\textwidth]{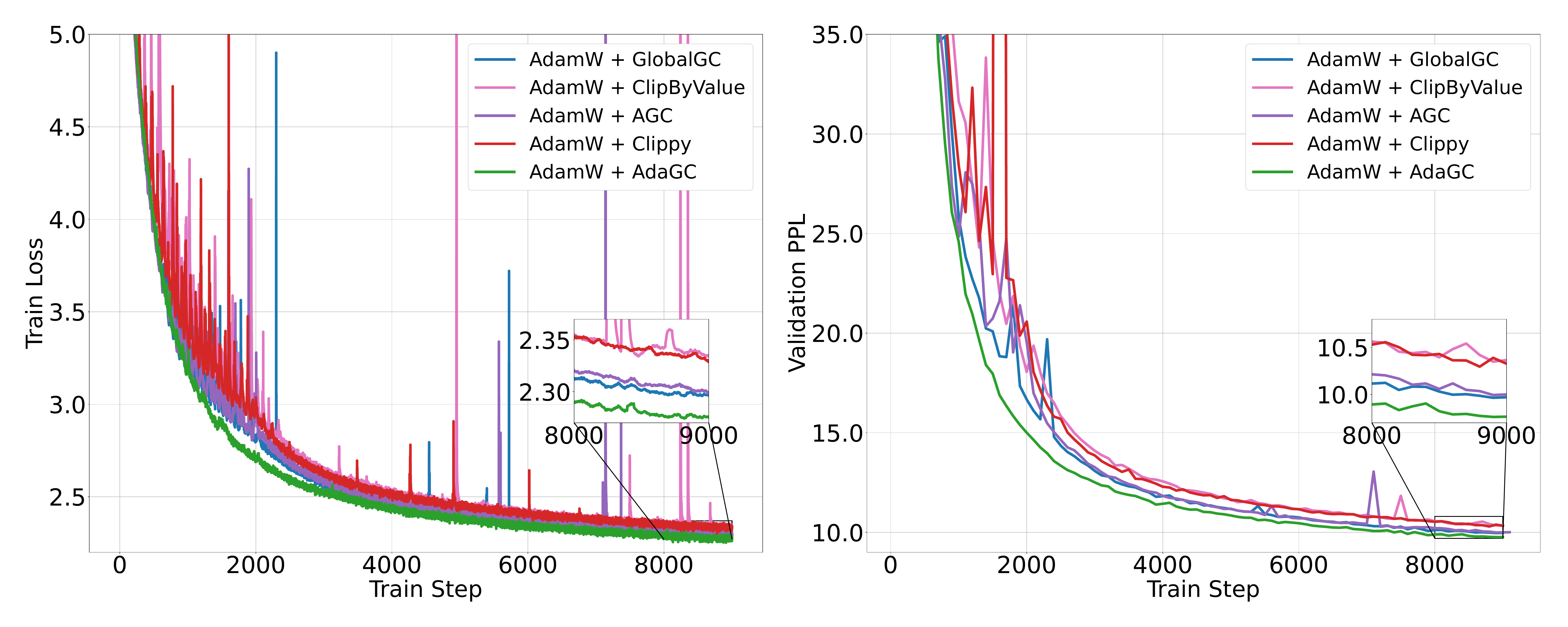}
        \vskip -0.2cm
        \caption{Llama-2 7B training dynamics.}
        \label{fig:llama_7b}
    \end{subfigure}\hfill
    \begin{subfigure}{0.5\textwidth}
        \centering
        \includegraphics[width=1.0\linewidth]{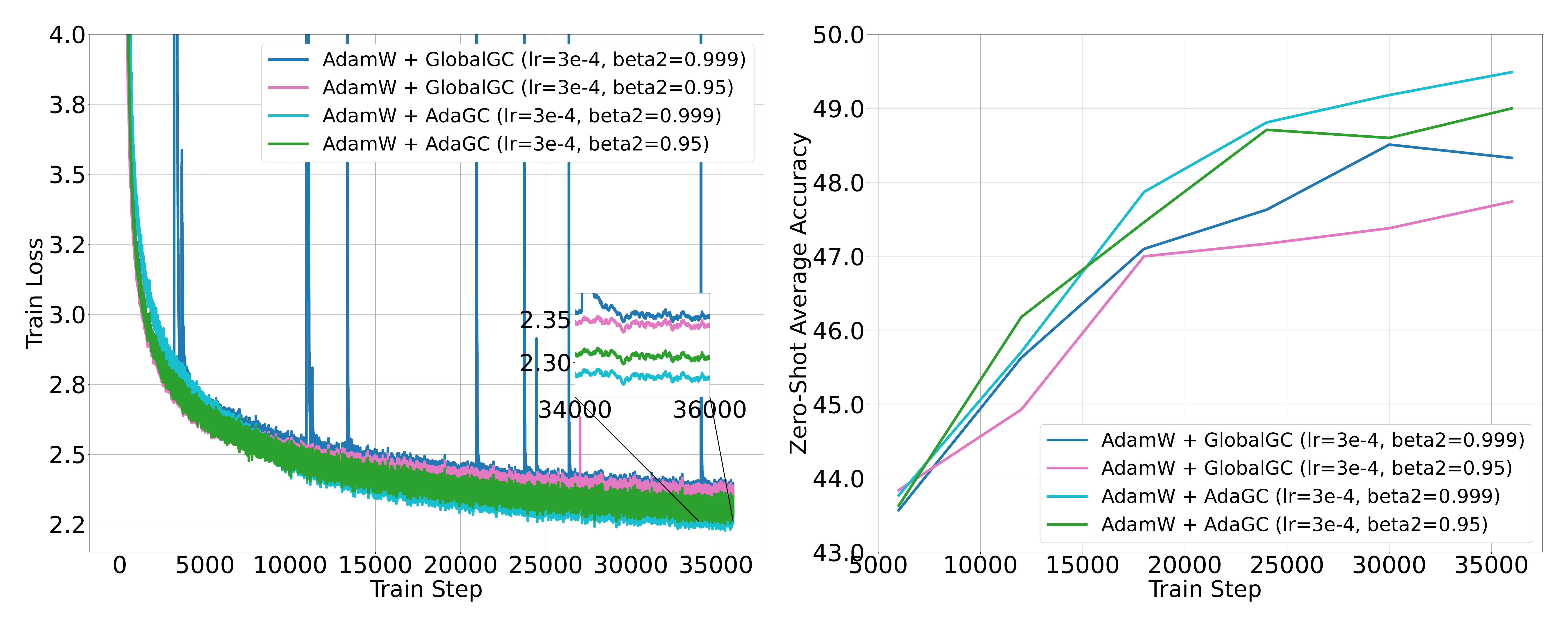}
        \vskip -0.2cm
        \caption{Mixtral 8x1B training dynamics.}
        \label{fig:mixtal_moe}
    \end{subfigure}
    \caption{Large language model training analysis: Llama-2 7B and Mixtral 8x1B model comparison shows AdaGC's loss spike elimination and performance gains.}
    \label{fig:llama_all}
\end{figure*}

\subsection{Experimental Setup}

\textbf{Models and Datasets.}
AdaGC is designed to enhance training stability during large language model pretraining. We evaluate its effectiveness on both dense and MoE (Mixture-of-Experts) architectures. For dense models, we use Llama-2~\citep{touvron2023llama} at 1.3B and 7B scales, and Qwen3-1.7B~\citep{yang2025qwen3} (with QK-Norm~\citep{dehghani2023scaling} enabled by default). For MoE models, we experiment with Mixtral 8$\times$1B~\citep{jiang2024mixtral} and ERNIE 10B-A1.4B~\citep{ernie2025technicalreport}, where Mixtral 8$\times$1B is a scaled-down version of Mixtral 8$\times$7B, and ERNIE 10B-A1.4B is derived from ERNIE-4.5 21B-A3B. For pre-training, we use C4-en~\citep{raffel2020exploring}, a clean English text corpus extracted from Common Crawl.

\textbf{Comparison Methods.} We focus on \emph{clipping-based} methods and compare gradient and update clipping baselines, including  GlobalGC~\citep{pascanu2013difficulty}, Gradient Value Clipping (ClipByValue), AGC~\citep{brock2021high}, Clippy~\citep{tang2023improving}, and ZClip~\citep{kumar2025zclip}. We also evaluate recent representative stabilization methods, including SPAM~\citep{huang2025spam}, Scaled Embed~\citep{takase2023spike}, and WeSaR~\citep{nishida2024initialization}. Detailed results are in Appendix~\ref{appendix:results_other_baseline} Table~\ref{tab:compare_with_other_methods} and Table~\ref{tab:spike_score_stat_other_methods}.

\textbf{Training Details}. Pre-training large-scale models is typically resource-intensive. In this study, our primary focus was to investigate training instability rather than to pursue ultimate accuracy. To facilitate multiple experimental runs, we conducted 9,000 training steps over 36 billion tokens for both Llama-2 1.3B and 7B, 19,000 steps over 160 billion tokens for Qwen3-1.7B, 36,000 steps over 36 billion tokens for Mixtral 8x1B, and 60,000 steps over 1 trillion tokens for ERNIE 10B-A1.4B, the latter additionally serving to validate the long-term stability of AdaGC. Further training details can be found in Table~\ref{tab:llm_hyper_param} in Appendix~\ref{appendix:hyper_parameters}.

\textbf{Evaluation Metrics.} To quantitatively assess training stability, we follow~\citep{olmo20242, karpathy2024spike} and adopt the \textit{spike score} as an objective metric. Specifically, the spike score is defined as the percentage of values in a time series that deviate by at least ten standard deviations from a rolling average of the preceding 1,000 values. This metric is primarily applied to training loss to detect sudden instabilities. Additionally, we evaluate performance using the training loss and validation perplexity (PPL) curves, as well as standard benchmark results, to provide a comprehensive assessment of convergence efficiency and model quality. 

\textbf{Standard Benchmark}. We conducted a comprehensive evaluation of the model's zero-shot and two-shot capabilities across seven well-established benchmarks: ARC~\citep{yadav2019quick}, BoolQ~\citep{clark2019boolq}, HellaSwag~\citep{zellers2019hellaswag}, OBQA~\citep{mihaylov2018can}, PIQA~\citep{bisk2020piqa}, WinoGrande~\citep{sakaguchi2021winogrande}, and MMLU~\citep{hendrycks2020measuring}. Following standard practice~\citep{zhang2025tensor}, we report accuracy norm for ARC-E, ARC-C, HellaSwag, OBQA, and SciQ, as well as standard accuracy for all other tasks. For ERNIE 10B-A1.4B, which has been trained on 1T tokens, we evaluate its general abilities on a range of benchmarks, including MMLU~\citep{hendrycks2020measuring}, GSM8K~\citep{cobbe2021training}, BBH~\citep{suzgun2022challenging}, TruthfulQA~\citep{lin2021truthfulqa}, and HumanEval~\citep{chen2021evaluating}. These benchmarks assess the model’s enhanced capabilities in performing diverse downstream tasks, such as examination, reasoning, factuality, and coding.

\subsection{Critical Hyperparameter Selection} We systematically evaluated two key hyperparameters in AdaGC: the EMA coefficient $\beta$ and the relative clipping threshold $\lambda_{rel}$. Specifically, we performed a grid search on the Llama-2 7B model to optimize these two hyperparameters, using zero-shot and two-shot performance across multiple tasks as evaluation metrics. As shown in Tables~\ref{tab:hyper_params_zero_shot} and~\ref{tab:hyper_params_two_shot}, the best performance was achieved when $\lambda_{rel} = 1.04$ and $\beta = 0.99$. We therefore adopted this configuration as the default setting for subsequent experiments and terminated further hyperparameter search. In addition, as observed in Tables~\ref{tab:hyper_params_zero_shot} and~\ref{tab:hyper_params_two_shot}, AdaGC's performance remains relatively stable across different hyperparameter values, suggesting that the method is robust to hyperparameter variations. 

\begin{table*}[!t]
  \centering
  \caption{Comparison of spike scores for various models under different clipping methods.}
  \label{tab:spike_score_stat}
  \resizebox{\linewidth}{!}{
  \begin{tabular}{l|ccccc|ccc|cc|cc}
    \toprule
    \specialrule{0em}{2pt}{2pt}
    Model       
    & \multicolumn{5}{c|}{Llama-2 7B}   
    & \multicolumn{3}{c|}{Qwen3-1.7B}
    & \multicolumn{2}{c|}{Mixtral 8x1B}   
    & \multicolumn{2}{c}{ERNIE 10B-A1.4B} \\
    \specialrule{0em}{2pt}{2pt} 
    Method      
    & GlobalGC  & ClipByValue & AGC     & Clippy  & \textbf{AdaGC} 
    & GlobalGC  & ZClip        & \textbf{AdaGC}
    & GlobalGC  & \textbf{AdaGC} 
    & GlobalGC  & \textbf{AdaGC} \\
    \specialrule{0em}{2pt}{2pt}
    \midrule
    \specialrule{0em}{2pt}{2pt}
    Total Steps 
    & 9K        & 9K          & 9K      & 9K      & 9K
    & 19K       & 19K         & 19K
    & 36K       & 36K
    & 21K       & 21K \\
    Num Spikes  
    & 3         & 9           & 8       & 3       & 0
    & 54        & 8           & 1
    & 52        & 0
    & 2         & 0 \\
    Spike Score (\%) 
    & 0.0333    & 0.1000      & 0.0889  & 0.0333  & \textbf{0.0000}
    & 0.2842    & 0.0421      & \textbf{0.0053}
    & 0.0144    & \textbf{0.0000}
    & 0.0100    & \textbf{0.0000} \\
    \specialrule{0em}{2pt}{2pt}
    \bottomrule
  \end{tabular}
  }
\end{table*}

\subsection{Main Experimental Results}

\textbf{Training Stability.} Our comprehensive evaluation shows AdaGC's effectiveness in improving training stability across a range of model scales and architectures. As shown in Figure~\ref{fig:llama_all}, we compare the training dynamics of Llama-2 7B and Mixtral 8$\times$1B models in terms of loss trajectories, validation perplexity, and zero-shot average accuracy. For the 7B models, baseline methods (GlobalGC, ClipByValue, AGC, Clippy) consistently exhibit frequent loss spikes during training, while AdaGC effectively eliminates these instability events. On Mixtral 8$\times$1B, using the default $\beta_{2}=0.999$ leads to recurrent loss spikes, whereas decreasing $\beta_{2}$ to 0.95 helps mitigate this issue, indicating the strong impact of $\beta_{2}$ on training stability. AdaGC, however, can eliminate loss spikes for both $\beta_{2}=0.999$ and $\beta_{2}=0.95$, further demonstrating its robustness. The zero-shot average accuracy curves also reveal that AdaGC not only stabilizes training under $\beta_{2}=0.999$, but also improves convergence performance. We further evaluate AdaGC on Qwen3-1.7B, which is equipped with QK-Norm for architectural stabilization. As shown in Figure~\ref{fig:qwen3_1b} in the Appendix, GlobalGC still suffers from recurrent loss spikes, while ZClip only partially mitigates these instabilities. In contrast, AdaGC yields a smoother loss trajectory and consistently lower validation perplexity, showing that AdaGC provides complementary stability benefits on top of QK-Norm. 

For the ERNIE 10B-A1.4B, Figure~\ref{fig:adam_epsilon} shows that stable convergence is achieved with $\epsilon=1\mathrm{e}{-15}$, which is particularly advantageous for large-scale models as it enables more parameters to fully utilize the adaptive learning rate in AdamW~\citep{wortsman2023small, team2025longcat, deepseekai2026deepseekv4}. However, a smaller $\epsilon$ also makes training more spike-sensitive, as it reduces the damping effect in AdamW updates and amplifies the influence of abnormal moment estimates. AdaGC mitigates this sensitivity by preventing outlier gradients from contaminating the optimizer states, thereby enabling stable training under a smaller $\epsilon$. 

Furthermore, Figure~\ref{fig:optimizer_dynamic} illustrates AdaGC's clipping process, which prevents abnormal gradients from entering optimizer states, further smoothing parameter updates and reducing oscillations, thereby benefiting training stability. Additional training dynamics are provided in Appendix~\ref{appendix:more_vis_results}.

\textbf{Spike Score Analysis.} Table~\ref{tab:spike_score_stat} quantitatively summarizes the reduction in spike score achieved by  AdaGC and the baseline methods across various settings. For Llama-2 7B, the spike score is reduced from 0.0333 with GlobalGC to 0 with AdaGC; for Qwen3-1.7B, it is reduced from 0.2842 to 0.0053; for Mixtral 8$\times$1B, it drops from 0.0144 to 0; and for ERNIE 10B-A1.4B, from 0.01 to 0. These results consistently demonstrate that AdaGC effectively suppresses loss spikes compared to existing clipping methods. Due to space limitations, spike score comparisons with other stabilization methods are reported in Appendix~\ref{appendix:results_other_baseline} Table~\ref{tab:spike_score_stat_other_methods}.

\textbf{Results on Downstream Benchmarks.} Downstream zero-shot and two-shot evaluation results on the Llama-2 1.3B/7B, Qwen3-1.7B, and Mixtral 8$\times$1B models (see Table~\ref{tab:llama_eval_zero_shot_benchmarks_all} and Table~\ref{tab:llama_eval_two_shot_benchmarks_all}) clearly demonstrate the practical benefits of stable training. Across all model scales, AdaGC consistently achieves state-of-the-art performance or matches the best baselines. Specifically, on Llama-2 7B, Qwen3-1.7B, and Mixtral 8$\times$1B, AdaGC obtains superior zero-shot (51.01\% / 50.37\% / 49.01\%) and two-shot (53.47\% / 52.64\% / 51.61\%) average accuracy, surpassing the GlobalGC baseline by +1.32\% / +1.95\% / +1.27\% and +0.83\% / +1.84\% / +1.14\%, respectively. Furthermore, as shown in Table~\ref{tab:ernie_downstream_eval}, long-term training of ERNIE 10B-A1.4B on 350B tokens shows that AdaGC achieves more stable convergence with $\epsilon=1e-15$, resulting in a 2.48\% improvement over GlobalGC on the general abilities validation set. 

Beyond the standard clipping baselines, we also compare AdaGC with other representative stabilization methods, including SPAM, Scaled Embed, and WeSaR. Due to space limitations, full results are reported in Appendix~\ref{appendix:results_other_baseline} Table~\ref{tab:compare_with_other_methods}. AdaGC remains competitive with or outperforms these methods, achieving 46.33\% (+0.75\%) / 51.01\% (+2.16\%) zero-shot average accuracy on Llama-2 1.3B / 7B, compared with 45.58\% / 48.85\% for SPAM. 

These findings establish a strong correlation between training stability and final model quality, indicating that the stability enabled by AdaGC facilitates better convergence and enhanced downstream performance.

\begin{table*}[!t]
\small
\centering
\caption{The Zero-Shot evaluation results of Llama-2 1.3B/7B, Qwen3-1.7B, and Mixtral 8x1B models on standard benchmarks.}
\label{tab:llama_eval_zero_shot_benchmarks_all}
\resizebox{1.0\linewidth}{!}{\begin{tabular}{c|l|ccccccccc|c}
\toprule
\multirow{2}{*}{Model} & \multirow{2}{*}{Method}
& ARC-E & ARC-C & BoolQ & HellaSw. & OBQA & PIQA & W.G. & MMLU & SciQ
& \multirow{2}{*}{Avg.} \\
& & acc\_norm & acc\_norm & acc & acc\_norm & acc\_norm & acc\_norm & acc & acc & acc\_norm & \\
\midrule

\multirow{4}{*}{Llama-2 1.3B}
& GlobalGC             & \textbf{43.18} & \textbf{25.68} & 57.19          & 46.62          & 30.20          & \textbf{69.97} & 52.64          & 22.97          & 68.40          & 46.32 \\
& ClipByValue          & 42.17          & \textbf{25.68} & \textbf{59.94} & 44.11          & \textbf{30.40} & 69.59          & 53.28          & \textbf{22.99} & 68.00          & 46.24 \\
& Clippy               & 41.71          & 24.66          & 56.51          & 45.43          & 30.00          & 69.21          & \textbf{54.85} & 22.90          & 67.50          & 45.86 \\
& \textbf{AdaGC}       & 42.09          & 25.51          & 58.01          & \textbf{47.29} & \textbf{30.40} & 69.70          & 52.33          & 22.98          & \textbf{68.70} & \textbf{46.33} \\
\midrule

\multirow{5}{*}{Llama-2 7B}
& GlobalGC             & 49.49          & 27.56          & 56.30          & 56.06          & \textbf{33.60} & \textbf{74.59} & 55.33          & 23.12          & 71.20          & 49.69 \\
& ClipByValue          & 46.21          & 26.88          & 57.03          & 53.49          & 33.20          & 71.65          & 53.59          & 23.36          & 70.50          & 48.43 \\
& AGC                  & 48.15          & 28.16          & 52.87          & 55.47          & 32.80          & 72.74          & 57.85          & \textbf{24.33} & 71.70          & 49.34 \\
& Clippy               & 47.69          & 27.73          & \textbf{57.46} & 53.34          & 32.40          & 72.74          & 54.38          & 25.36          & 73.40          & 49.39 \\
& \textbf{AdaGC}       & \textbf{49.58} & \textbf{28.92} & 57.28          & \textbf{57.94} & 32.80          & 74.32          & \textbf{58.09} & 23.62          & \textbf{76.60} & \textbf{51.01} \\
\midrule

\multirow{3}{*}{Qwen3-1.7B}
& GlobalGC             & 45.88          & 25.77          & 55.50          & 52.89          & 32.00          & 71.98          & 55.56          & 23.16          & 73.00          & 48.42 \\
& ZClip                & 46.97          & 26.96          & 55.99          & 53.98          & 31.40          & 72.52          & 53.43          & 23.07          & 71.40          & 48.42 \\
& \textbf{AdaGC}       & \textbf{48.15} & \textbf{27.13} & \textbf{59.60} & \textbf{56.08} & \textbf{33.40} & \textbf{73.12} & \textbf{56.75} & \textbf{23.28} & \textbf{75.80} & \textbf{50.37} \\
\midrule

\multirow{2}{*}{Mixtral 8x1B}
& GlobalGC             & 44.70          & 25.94          & 56.57          & 53.08          & \textbf{33.00} & 71.60          & \textbf{54.70} & 22.91          & 67.20          & 47.74 \\
& \textbf{AdaGC}       & \textbf{46.68} & \textbf{26.37} & \textbf{58.93} & \textbf{55.85} & 32.20          & \textbf{73.12} & 54.38          & \textbf{23.22} & \textbf{70.30} & \textbf{49.01} \\

\bottomrule
\end{tabular}}
\end{table*}

\begin{table*}[!t]
\centering
\caption{Evaluation results of ERNIE 10B-A1.4B on multiple benchmarks after 21,000 (350B tokens) and 60,000 (1T tokens) training steps, comparing different optimization configurations.}
\label{tab:ernie_downstream_eval}
\resizebox{0.8\linewidth}{!}{
\begin{tabular}{l|l|c|ccccc|c}
\toprule
Steps (tokens) & Method & AdamW eps & MMLU        & GSM8K          & BBH            & TruthfulQA      & HumanEval      & Avg. \\
\midrule
\multirow{3}{*}{21k (350B)}
& GlobalGC & 1e-8  & 47.75          & \textbf{28.35}  & 28.80          & 22.02           & 19.51          & 28.09 \\
& GlobalGC & 1e-15 & 39.11          & 21.46           & \textbf{29.35} & 23.39           & 15.24          & 25.71  \\
& AdaGC    & 1e-15 & \textbf{42.07} & 25.32           & 27.89          & \textbf{24.92}  & \textbf{20.73} & \textbf{28.19} \\
\midrule
\multirow{3}{*}{60k (1T)}
& GlobalGC & 1e-8  & 48.61          & 39.88          & 30.84          & 30.73           & 22.56          & 34.52 \\
& GlobalGC & 1e-15 & 48.48          & \textbf{40.79} & 30.59          & 28.29           & \textbf{23.78} & 34.38  \\
& AdaGC    & 1e-15 & \textbf{48.70} & 36.01          & \textbf{31.38} & \textbf{35.02}  & 22.56          & \textbf{34.73} \\
\bottomrule
\end{tabular}}
\end{table*}

\subsection{Optimizer Compatibility: Muon and Lion}
AdaGC is an optimizer-agnostic gradient clipping method that can be seamlessly integrated not only with AdamW, but also with other optimizers. To verify the generality of AdaGC, we conducted experiments on both LLM and VLM tasks by combining Llama-2 1.3B and CLIP ViT-Base models with the Muon and Lion optimizers, respectively, and compared them against GlobalGC (See Figure~\ref{fig:muon_results} and Figure~\ref{fig:openclip_vit_b_32_combine}). Although no loss spikes were observed under either experimental setting, AdaGC consistently demonstrated strong compatibility and generalization. In downstream zero-shot average accuracy, AdaGC outperformed GlobalGC by 0.14\% (47.18\% vs. 47.04\%) with Muon and by 0.16\% (40.81\% vs. 40.65\%) with Lion. These results further confirm that AdaGC can be effectively applied across different optimizers, providing stable training and improved downstream performance.

\subsection{End-to-End Training Wall-clock}
Table~\ref{tab:energy_consumption} compares the GPU hours required for training various models using different distributed parallelism strategies. Compared to GlobalGC, AdaGC reduces end-to-end GPU hours by 0.27\% on Llama-2 1.3B, 4.48\% on Llama-2 7B, 1.24\% on Mixtral 8x1B, and 1.53\% on ERNIE 10B-A1.4B, thanks to reduced communication overhead. This highlights AdaGC’s additional communication and efficiency benefits in large-scale distributed training.

\begin{table*}[!t]
\centering
\caption{GPU hours under the same configuration. DPS denotes distributed parallel strategies.}
\label{tab:energy_consumption}
\resizebox{0.8\linewidth}{!}{

\begin{tabular}{l|cccc}
\toprule
Model                      & Llama-2 1.3B            & Llama-2 7B            & Mixtral 8x1B          & ERNIE 10B-A1.4B \\
\midrule
DPS                        & DP=256, TP=1, PP=1      & DP=32, TP=2, PP=1     & DP=256, TP=1, PP=1, EP=1    & DP=64, TP=1, PP=4, EP=8 \\
\midrule
Steps                      & 9K                      & 9K                    & 36K                   & 21K          \\
\midrule
GlobalGC                   & 513.0                   & 1468.2                & 2060.8              & 22922       \\
\textbf{AdaGC}             & 511.6                   & 1402.4                & 2035.2              & 22572       \\
\bottomrule
\end{tabular}}
\end{table*}

\subsection{Ablation Study}
\label{sec:ablation}

We conduct systematic ablation studies across three critical dimensions of AdaGC: (1) EMA gradient norm initialization strategies, (2) GlobalGC warm-up steps, (3) adaptivity efficacy, and (4) locality granularity.

\begin{figure*}[!t]
\begin{minipage}[t]{0.49\textwidth}
    \centering
    \begin{subfigure}[t]{0.49\textwidth} 
        \includegraphics[width=\linewidth]{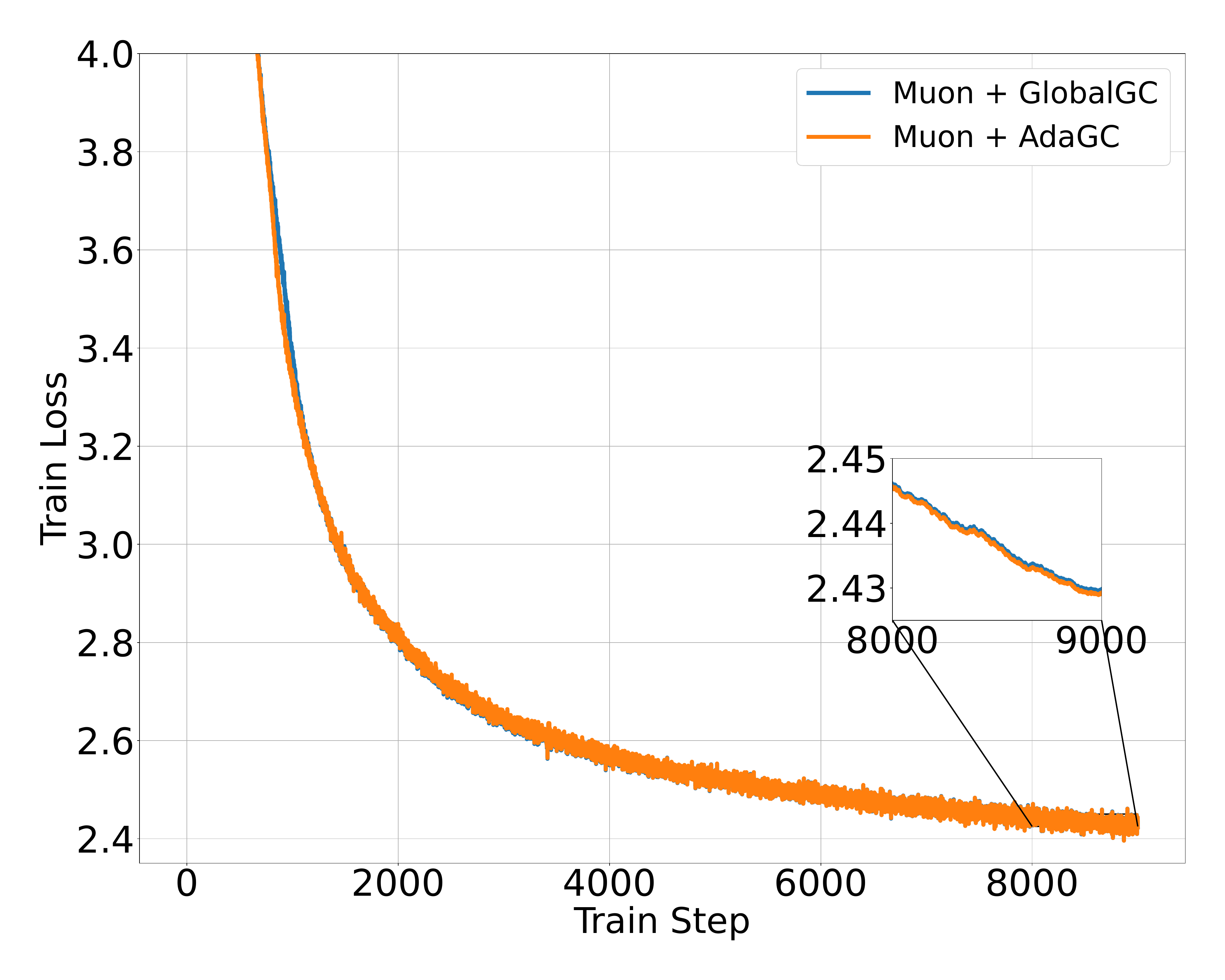} 
        \vskip -0.2cm
        \caption{Training dynamics.}
        \vskip -0.2cm
    \end{subfigure}\hfill 
    \begin{subfigure}[t]{0.49\textwidth}
        \includegraphics[width=\linewidth]{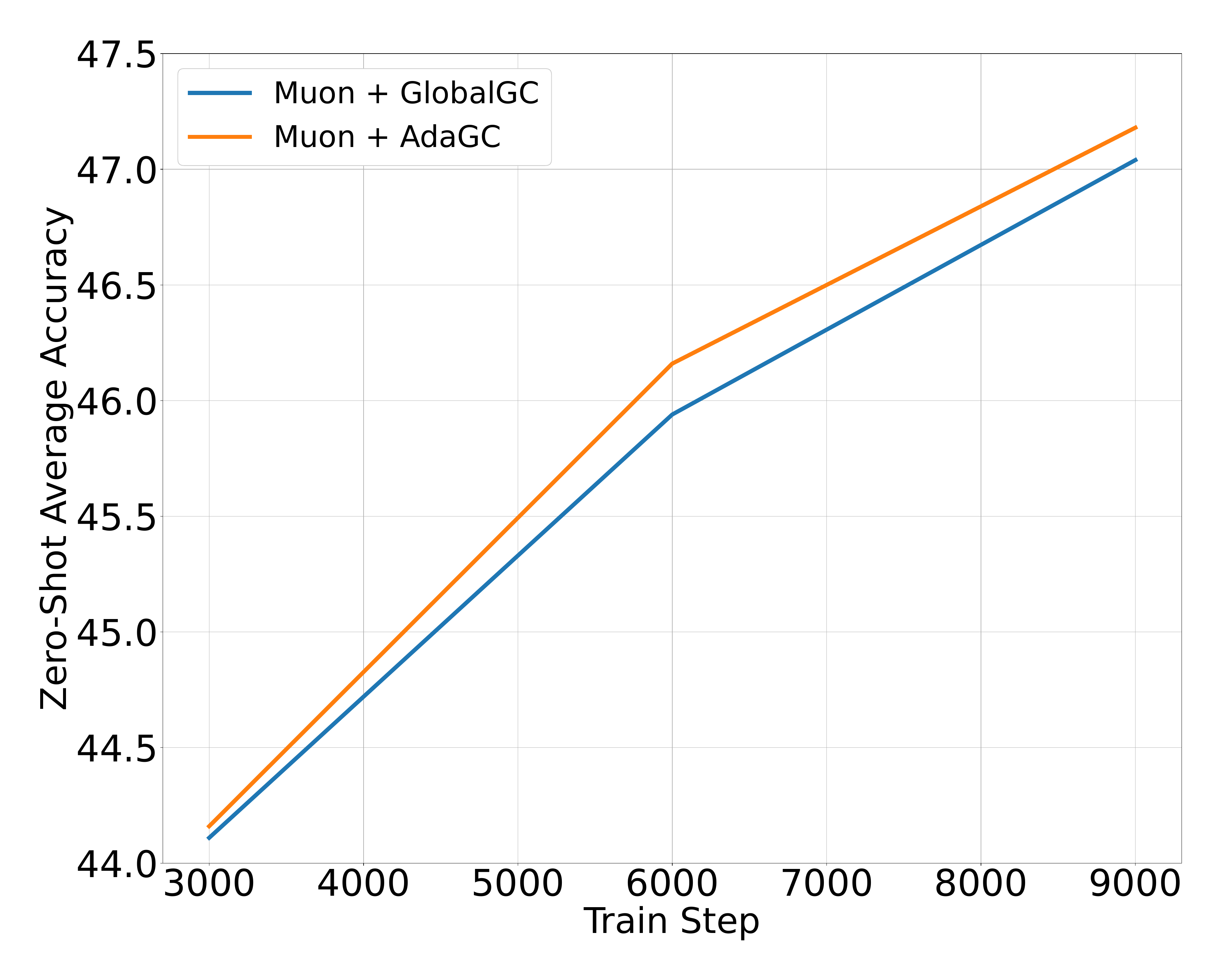}
        \vskip -0.2cm
        \caption{Average accuracy.}
    \end{subfigure}
    \caption{AdaGC with Muon on Llama-2 1.3B.}
    \label{fig:muon_results}
\end{minipage}\hfill 
\begin{minipage}[t]{0.49\textwidth}
    \centering
    \begin{subfigure}[t]{0.49\linewidth}
        \includegraphics[width=\linewidth]{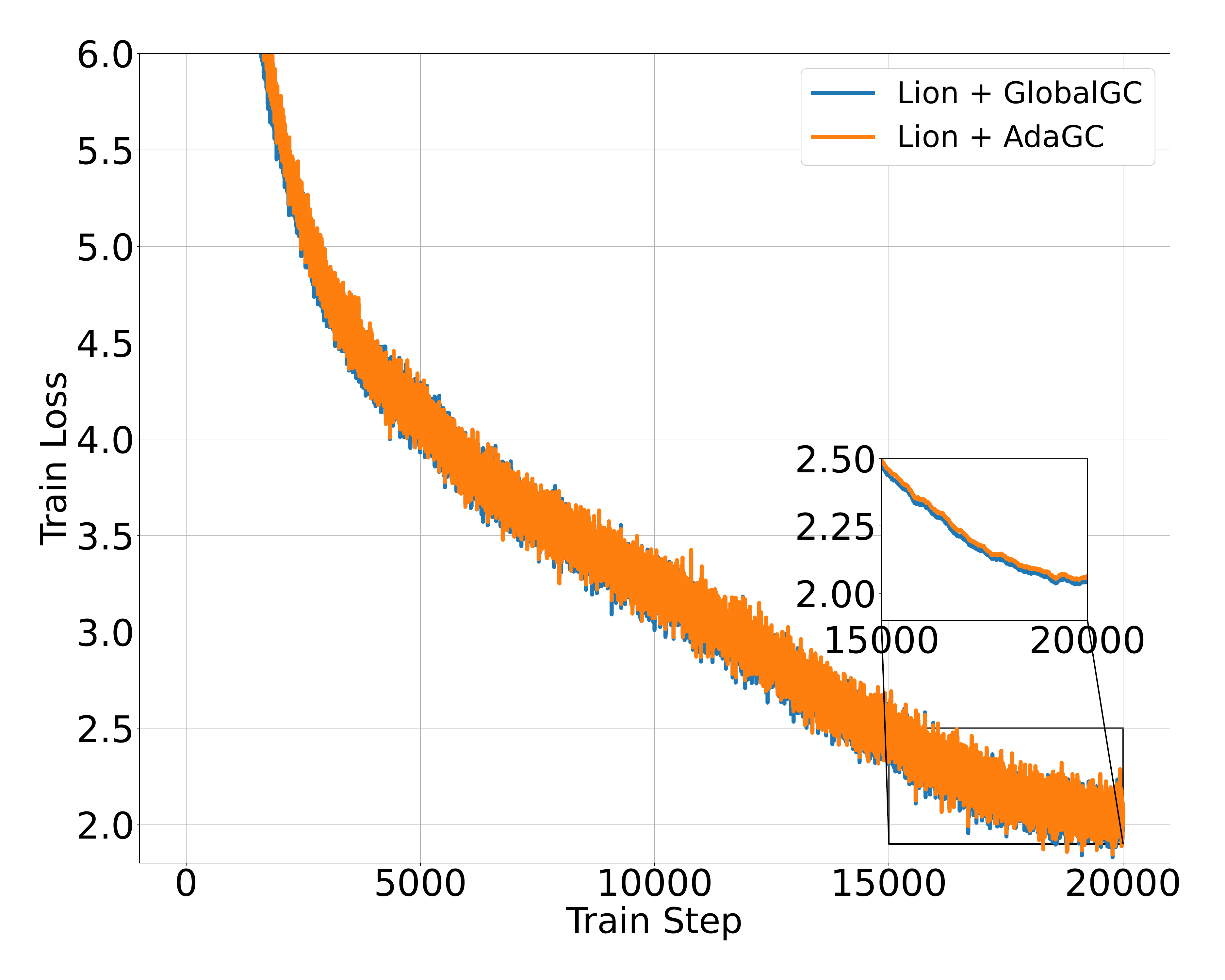}
        \vskip -0.2cm
        \caption{Training dynamics.}
        \vskip -0.2cm
    \end{subfigure} \hfill
    \begin{subfigure}[t]{0.49\linewidth}
        \includegraphics[width=\linewidth]{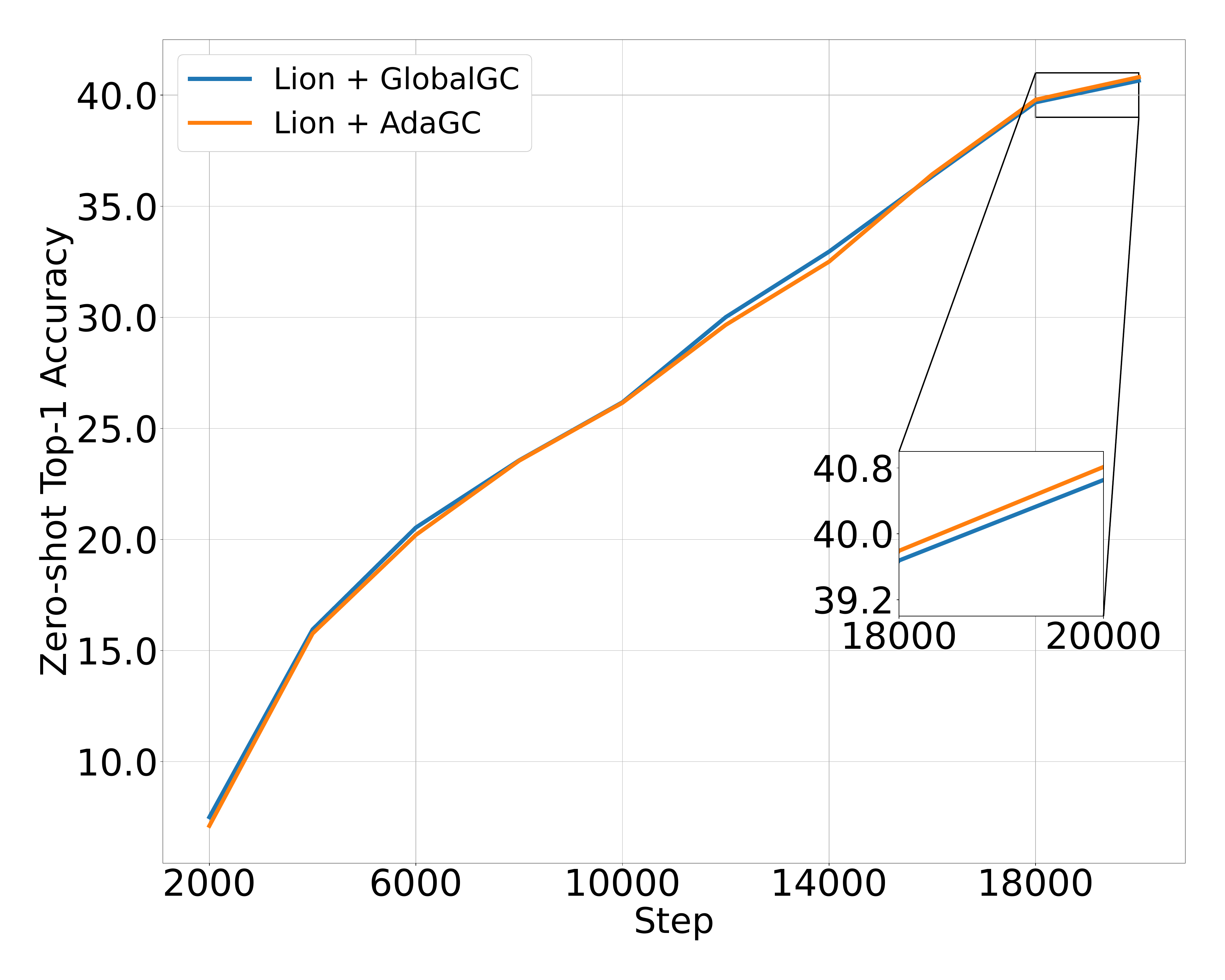}
        \vskip -0.2cm
        \caption{Average accuracy.}
    \end{subfigure}
    \caption{AdaGC with Lion on CLIP ViT-Base.}
    \label{fig:openclip_vit_b_32_combine}
\end{minipage}
\end{figure*}

\begin{figure*}[!t]
  \centering
  \begin{subfigure}[t]{0.26\linewidth}
    \includegraphics[width=\linewidth]{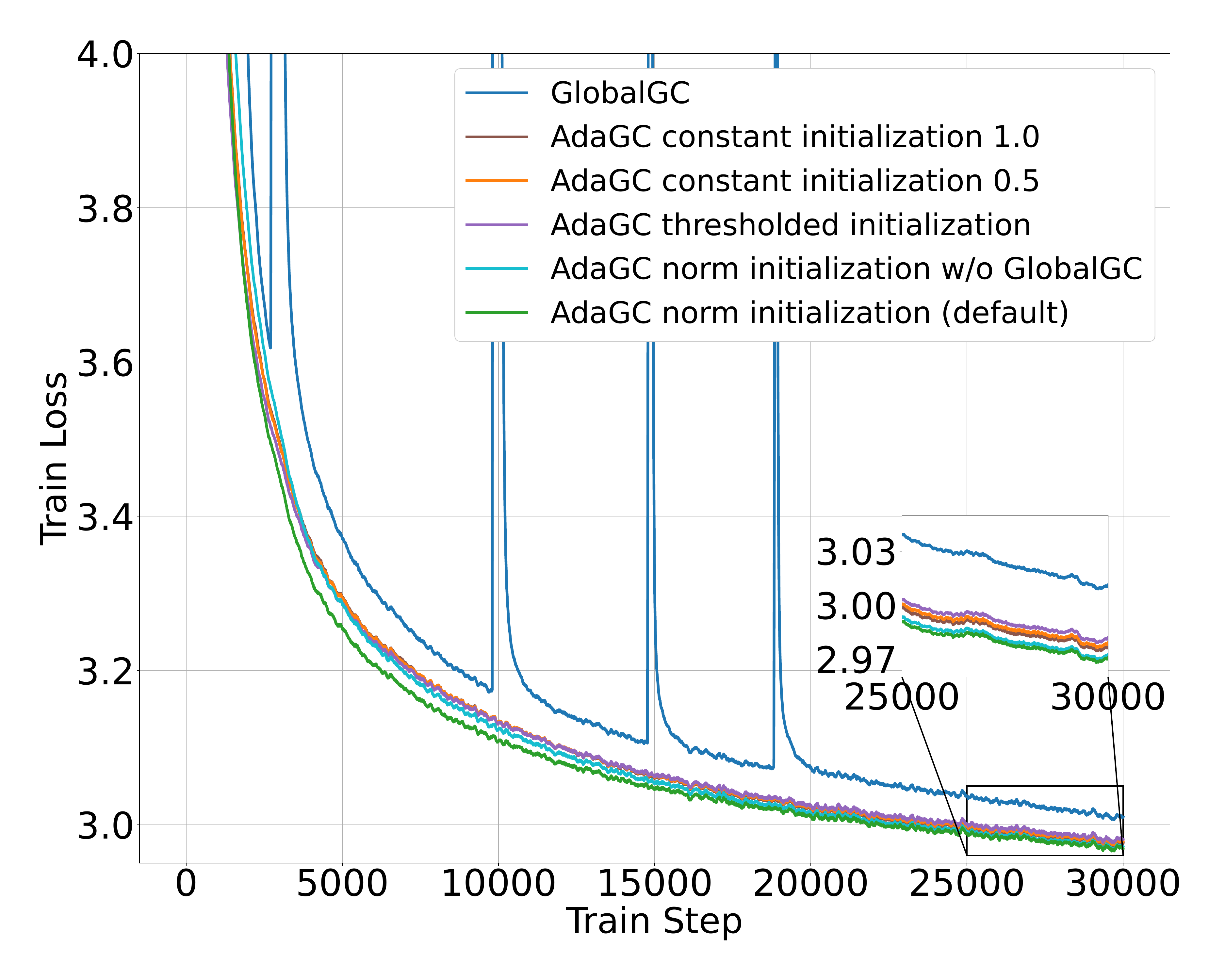}
    \vskip -0.2cm
    \caption{$\gamma_{t, i}$ initialization.}
    \label{fig:init_method}
  \end{subfigure}\hfill
  \begin{subfigure}[t]{0.26\linewidth}
    \includegraphics[width=\linewidth]{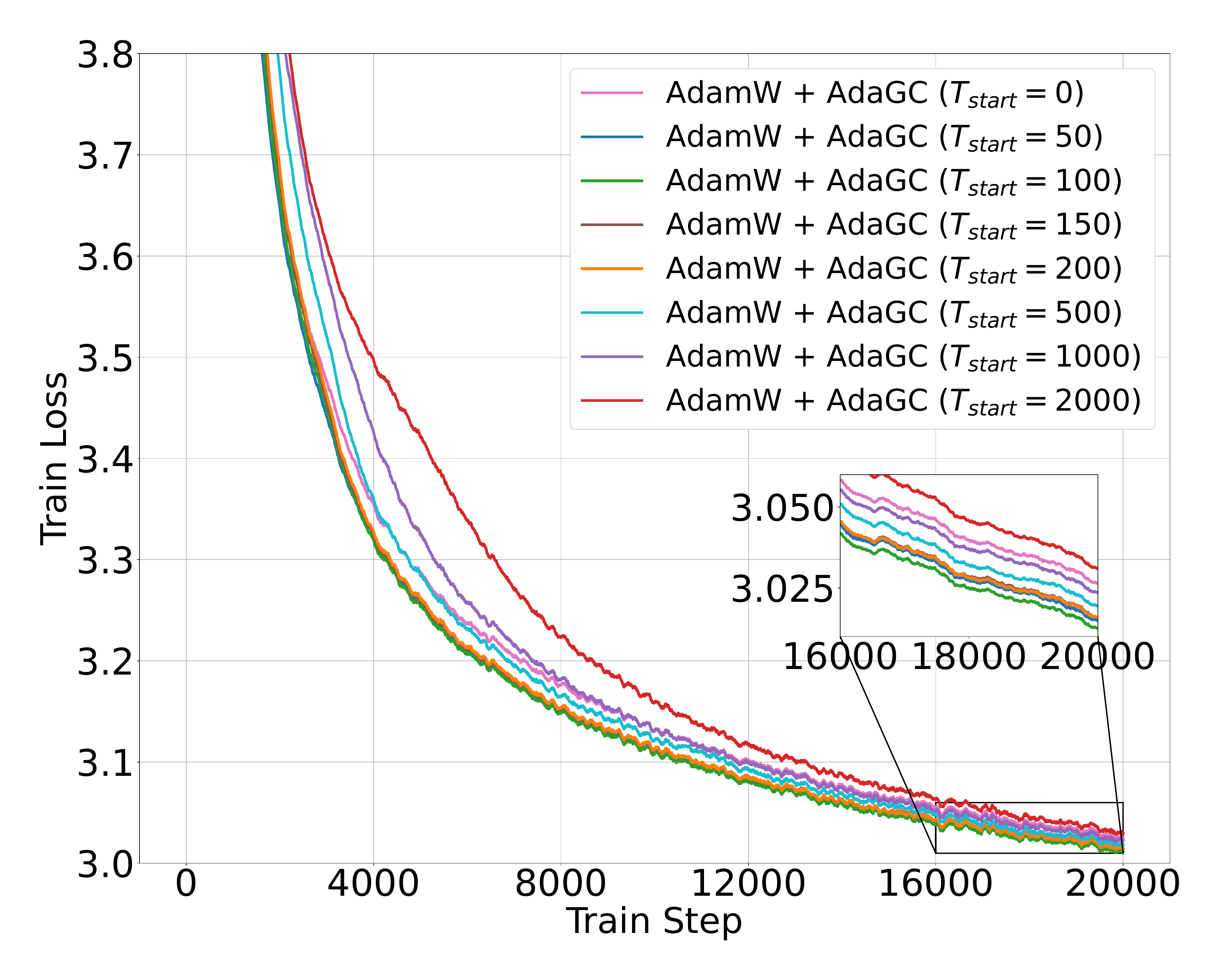}
    \vskip -0.2cm
    \caption{$T_{start}$ warm-up.}
    \label{fig:warmup_steps}
  \end{subfigure}\hfill
  \begin{subfigure}[t]{0.26\linewidth}
    \includegraphics[width=\linewidth]{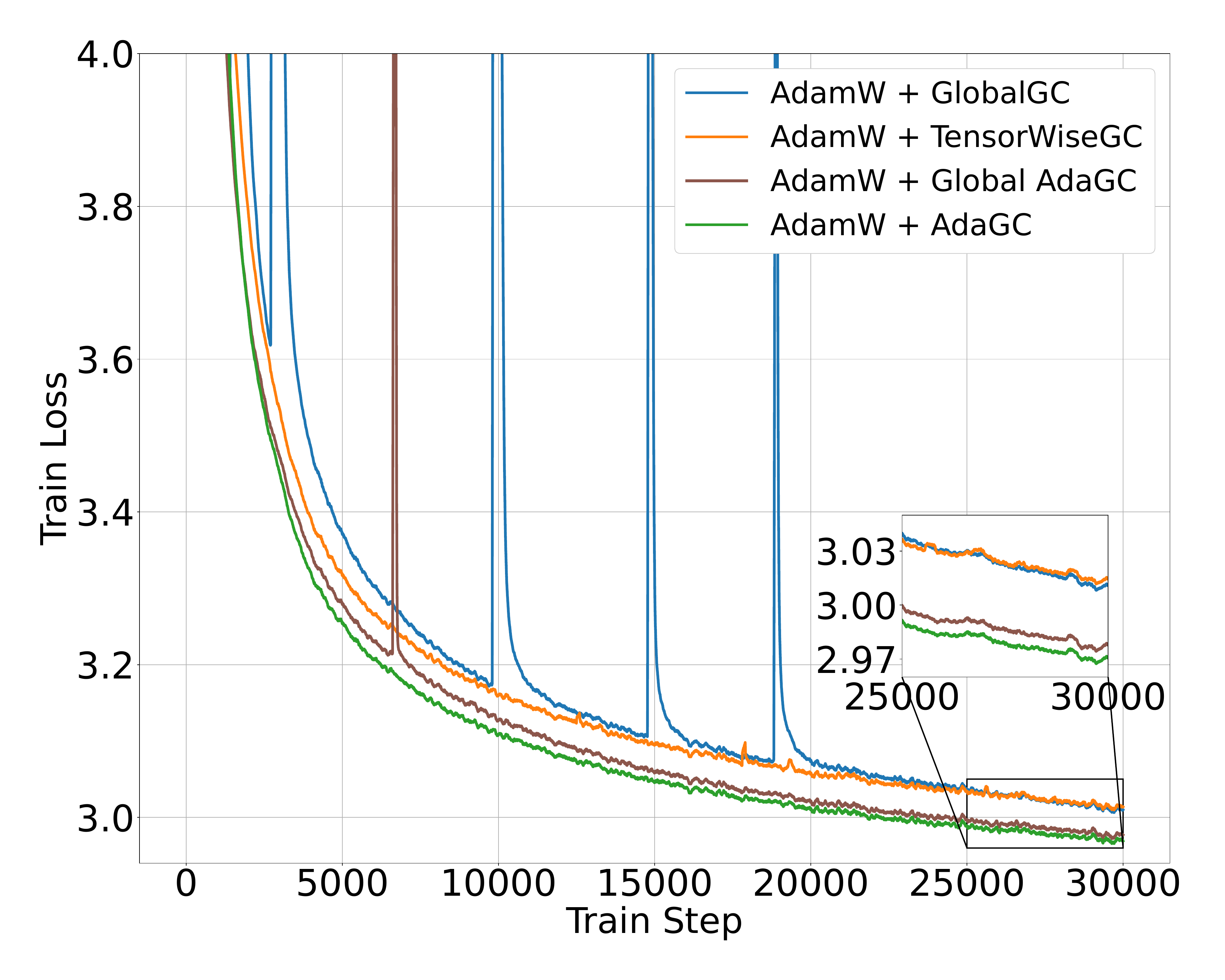}
    \vskip -0.2cm
    \caption{Adaptivity, locality.}
    \label{fig:adagc_global_local_ablation_study}
  \end{subfigure}
  \caption{Training dynamics of AdaGC ablations, showing (a) the influence of different EMA initialization strategies; (b) the impact of the GlobalGC warm-up steps $T_{start}$; and (c) the effects of adaptivity and locality granularity on gradient clipping efficacy and final loss.}
\end{figure*}

\textbf{EMA Initialization Strategy}. The initialization of EMA gradient norms requires careful design due to large initial gradient fluctuations during early training phases (first 100 steps). We evaluate five initialization variants: The default AdaGC strategy employs GlobalGC during warm-up while tracking minimum per-parameter norms ($\gamma_{t, i} = \min(\|\boldsymbol{g}_{t, i}\|, \gamma_{t-1, i})$). Comparative approaches include: (1) norm initialization without GlobalGC warm-up (directly using $\gamma_{t, i} = \min(\|\boldsymbol{g}_{t, i}\|, \gamma_{t-1, i})$ from step 0), (2) constant initialization ($\gamma_{0, i} \in \{0.5, 1.0\}$), and (3) thresholded initialization ($\gamma_{t, i} = \min(\|\boldsymbol{g}_{t, i}\|, 0.1)$). Figure~\ref{fig:init_method} demonstrates that while all variants eliminate loss spikes, convergence quality varies within 0.36\%. The default strategy achieves optimal final loss (2.9708 vs 2.9725 for next-best), showing that GlobalGC-guided warm-up better preserves parameter update consistency than direct initialization. This establishes the importance of phased initialization for gradient norm adaptation.

\textbf{Warm-up Steps $T_{start}$}. To further investigate whether the choice of GlobalGC warm-up steps $T_{start}$ has a significant impact and to provide practical guidance for practitioners, we additionally evaluate $T_{start} = \{0, 50, 100, 150, 200, 500, 1000, 2000\}$. The results in Figure~\ref{fig:warmup_steps} show that $T_{start} = 100$ consistently achieves the best performance. According to the EMA initialization formula $\gamma_{t, i} = \min(\|\boldsymbol{g}_{t, i}\|, \gamma_{t-1, i})$, an excessively large $T_{start}$ accumulates lower $\gamma_{t, i}$ values due to early training dynamics, which may lead to over-clipping and suppressed convergence in later training. Conversely, an overly small $T_{start}$ accumulates larger $\gamma_{t, i}$ values, which may delay clipping and hinder timely suppression of abnormal gradients. In contrast, $T_{start} = 100$ introduces negligible additional overhead for large-scale training while providing consistently stable performance improvements.

\textbf{Adaptivity Efficacy and Locality Granularity}. We conduct three sets of ablation experiments to evaluate the adaptivity and locality of AdaGC. The baseline uses GlobalGC (no adaptivity, no locality) with a fixed threshold of 1.0. In comparison, we examine (1) adaptive global gradient norm clipping (Global AdaGC, adaptive but non-local), which employs a single adaptive threshold for the entire model, (2) tensor-wise gradient norm clipping (TensorWiseGC, local but non-adaptive), which allocated each tensor's fixed clipping threshold proportionally to its parameter count relative to the entire model, and (3) tensor-wise adaptation (AdaGC, adaptive and local), which adjusts thresholds independently for each tensor. As shown in Figure~\ref{fig:adagc_global_local_ablation_study}, Global AdaGC reduces but does not completely eliminate spike events (1 event vs.\ 0 for tensor-wise) and yields a 0.25\% higher final loss (2.9639 vs.\ 2.9712). Although TensorWiseGC also mitigates loss spikes, it noticeably slows down convergence and requires careful per-tensor threshold tuning to perform well. These results demonstrate that tensor-wise adaptive clipping provides both greater spike suppression and lower loss than other approaches.

\section{Limitations and Future Work}

AdaGC is an optimizer-level safety mechanism rather than a root-cause fix for training instability. It is most effective when instability manifests as sporadic gradient outliers relative to historical statistics, such as those caused by corrupted batches, transient hardware faults, optimizer hyperparameter sensitivity, or certain numerical precision issues. However, AdaGC complements rather than replaces data preprocessing, architectural normalization such as QK-Norm, fault-tolerant infrastructure, or precision-aware kernel design.

AdaGC may be less effective for systematic, cumulative, or gradient-invisible failure modes, where instability does not appear as gradient outliers. Examples include cumulative numerical errors from low-precision implementations~\citep{qiu2025low} and silent training bugs or misconfigurations~\citep{jiang2025training}. We provide a discussion of addressable and challenging instability sources in Appendix~\ref{appendix:AdaGC_Scope}.

Future work should develop diagnostics for distinguishing optimizer-mitigable gradient anomalies from failures that require root-cause interventions, as well as more principled hyperparameter selection rules and stronger theoretical analysis of tensor-wise adaptive clipping.
\section{Conclusion}

The factors triggering loss spikes in large-scale pretraining are diverse and remain an open research problem, with no unified solution to date. Unlike prior work that seeks to identify root causes, we focus on a gradient-centric remedy and introduce AdaGC, an adaptive per-tensor gradient clipping method that prevents abnormal gradients from contaminating optimizer states. This approach ensures smoother updates and effectively eliminates loss spikes. Extensive experiments demonstrate that AdaGC delivers robust and stable training across both dense and MoE models, from 1.3B to 10B parameters, consistently reducing spike scores to zero and improving benchmark performance. Our results highlight AdaGC as a simple and effective solution for stable large-scale model pretraining.

\section*{Acknowledgments}
We thank the anonymous reviewers, area chairs, and senior area chairs for their insightful comments and constructive suggestions. We also acknowledge the support from the National Engineering Research Center of Deep Learning Technology and Application. Li Shen is supported by NSFC Grant (No.  62576364), GuangDong Basic and Applied Basic Research Foundation (2026B1515020071), and CCF-Baidu Open fund (NO. CCF-Baidu202413).

\section*{Impact Statement}
This work aims to improve the stability and efficiency of large-scale language model pretraining by mitigating loss spikes through adaptive gradient clipping. The primary positive impact is to provide a simple, optimizer-agnostic, and communication-efficient training mechanism that can reduce unstable training dynamics, failed runs, and wasted computational resources in large-scale distributed settings. By improving the reliability of pretraining, AdaGC may help practitioners train large models more efficiently and with lower engineering overhead.

This work does not introduce new datasets, deployed systems, or user-facing applications, and therefore, we do not anticipate direct negative societal impacts beyond those generally associated with large-scale model pretraining. Since the proposed method is a general training stabilization technique, its broader impact depends on the downstream use of the models trained with it. We encourage practitioners to apply AdaGC together with responsible data curation, evaluation, and deployment practices.

\bibliography{icml2026}
\bibliographystyle{icml2026}

\newpage
\appendix
\onecolumn

\clearpage
\phantomsection
\pdfbookmark[1]{Appendix Contents}{apxcontents} 
\section*{Appendix Contents}
\label{sec:appendix_contents}
\InputIfFileExists{\jobname.apx}{}{\textit{(Run LaTeX twice to generate Appendix Contents.)}}
\clearpage

\immediate\openout\apxtocfile=\jobname.apx

\let\oldsection\section
\renewcommand{\section}[2][]{%
  \oldsection[#1]{#2}%
  \edef\apxlabel{apx:auto:sec:\number\value{section}}%
  \label{\apxlabel}%
  \immediate\write\apxtocfile{%
    \string\apxcontentslineSec{\thesection}{\unexpanded{#2}}{\thepage}{\apxlabel}%
  }%
}

\let\oldsubsection\subsection
\renewcommand{\subsection}[2][]{%
  \oldsubsection[#1]{#2}%
  \edef\apxlabel{apx:auto:sub:\number\value{section}.\number\value{subsection}}%
  \label{\apxlabel}%
  \immediate\write\apxtocfile{%
    \string\apxcontentslineSub{\thesubsection}{\unexpanded{#2}}{\thepage}{\apxlabel}%
  }%
}

\section{Scope of AdaGC: Addressable and Challenging Instability Sources}
\label{appendix:AdaGC_Scope}

In this section, we clarify the scope of AdaGC by discussing which types of instability sources can be effectively mitigated by optimizer-level adaptive gradient clipping, and which cases remain beyond its scope.

\paragraph{Addressable sources: sporadic gradient anomalies.}
AdaGC is most effective when training instability manifests as statistical outliers relative to the recent historical gradient distribution. These are typically discrete and sporadic events within otherwise normal training dynamics. Examples include corrupted data batches, transient hardware faults such as GPU memory bit flips, optimizer hyperparameter sensitivity such as a small AdamW $\epsilon$ amplifying gradient spikes, and numerical precision issues that produce abrupt gradient anomalies. The key criterion is whether the abnormal gradient can be detected as an outlier by comparing it against the EMA-tracked gradient history. If so, AdaGC can suppress it before it contaminates the optimizer states.

\paragraph{Challenging sources beyond optimizer-level clipping.}
AdaGC is an optimizer-level safety mechanism and does not directly resolve instability sources that do not appear as detectable gradient outliers. We identify two representative categories.

\begin{itemize}
    \item \textit{Cumulative numerical errors from algorithmic implementations.}
    Our concrete failure case with low-precision FlashAttention falls into this category. In this case, the root cause is not a sporadic gradient spike, but small numerical approximation errors introduced repeatedly at each training step by the engineering implementation of the algorithm~\citep{qiu2025low}. These errors may accumulate gradually and cause the gradient distribution to drift over time. Since AdaGC's EMA statistics adapt to the evolving distribution, such drift may not appear as an isolated outlier and is therefore difficult to detect with an EMA-based clipping rule.

    \item \textit{Silent bugs in training infrastructure.}
    Large-scale training systems may also suffer from silent bugs, including logic errors in framework code, misconfigurations, communication data loss, or other infrastructure-level issues. In such cases, the training process may appear to execute normally while actually following incorrect logic. The resulting gradients can remain statistically normal, even though the final training outcome is wrong. Such failures are fundamentally beyond the scope of optimizer-level clipping methods. We refer readers to \citet{jiang2025training} for a systematic taxonomy of silent training errors.
\end{itemize}

In summary, the distinguishing criterion is whether the instability manifests as a detectable gradient-level anomaly. If it does, AdaGC can help by clipping the abnormal gradient before it enters the optimizer states. If the failure is systematic, cumulative, or invisible at the gradient level, complementary approaches such as numerical analysis, correctness checking, fault-tolerant infrastructure, or architectural stabilization are necessary.

AdaGC is therefore not the end of studying training stability, but rather a complementary optimizer-level defense. It reduces the impact of abnormal gradients without identifying every upstream root cause. We hope that this perspective provides useful guidance for future research on LLM training stability, especially in distinguishing optimizer-mitigable gradient anomalies from root-cause failures that require system-, architecture-, or data-level interventions.

\section{Pseudocode of AdamW with AdaGC}
\label{appendix:algorithm}
Algorithm~\ref{algorithm1} presents the pseudocode of AdamW integrated with AdaGC. For clearer exposition, we highlight different components according to their origins: \textcolor{orange}{\textbf{orange}} indicates the procedures inherited from the original GlobalGC algorithm, while \textcolor{blue}{\textbf{blue}} is used to denote the new contributions and modifications introduced by AdaGC. Specifically, the GlobalGC steps include the global gradient clipping implemented via the scaling factor and the use of the clipped gradient in subsequent moments. The AdaGC components mainly comprise adaptive per-parameter clipping, the initialization and update of the adaptive threshold $\gamma_{t,i}$, and the warm-up strategy governed by $T_{start}$. 

\begin{algorithm}[!t]
    \small
    \caption{AdamW with AdaGC}
    \label{algorithm1}
    \setlength{\baselineskip}{1.25em}
\begin{algorithmic}[1]
    \STATE {\bfseries given:} $\{\eta_t\}_{t=1}^{T}, \lambda_{w}, \epsilon, \beta_1, \beta_2, \textcolor{blue}{\beta \in [0, 1)}, \textcolor{orange}{\lambda_{abs}}, \textcolor{blue}{T_{start}}$
    \STATE {\bfseries initialize:} $\boldsymbol{\theta}_{0}, m_0 \leftarrow 0, v_0 \leftarrow 0, t \leftarrow 0$
    \REPEAT
    \STATE \textbf{compute} $\boldsymbol{g}_{t}$ = $\nabla_{\boldsymbol{\theta}} f_t(\boldsymbol{\theta}_{t-1}, X_t)$ 
    \IF{\textcolor{blue}{$t < T_{start}$}}
    \STATE $\textcolor{orange}{h_{t} = \min \left\{\frac{\lambda_{abs}}{\|\boldsymbol{g}_{t}\|}, 1.0\right\}}$
    \STATE $\textcolor{orange}{\widehat{\boldsymbol{g}_t} = h_{t} \cdot \boldsymbol{g}_t}$
    \FOR{\textcolor{blue}{$i \in |\boldsymbol{\theta}|$}}
        \STATE $\textcolor{blue}{\gamma_{t, i} = \min \left\{\gamma_{t-1, i}, \|\widehat{\boldsymbol{g}_{t, i}}\|\right\}, \gamma_{0, i} = \|\widehat{\boldsymbol{g}_{1, i}}\|}$
    \ENDFOR
    \ELSE
    \FOR{\textcolor{blue}{$i \in |\boldsymbol{\theta}|$}}
        \STATE $\textcolor{blue}{h_{t, i} = \min \left\{\lambda_{rel} \frac{\gamma_{t-1, i}}{\|\boldsymbol{g}_{t, i}\|}, 1.0\right\}}$
        \STATE $\textcolor{blue}{\widehat{\boldsymbol{g}_{t, i}} = h_{t, i} \cdot \boldsymbol{g}_{t, i}}$
        \STATE $\textcolor{blue}{\gamma_{t, i} = \beta \gamma_{t-1, i} + (1 - \beta) \|\widehat{\boldsymbol{g}_{t, i}}\|}$
    \ENDFOR
    \ENDIF
    \STATE $\boldsymbol{m}_{t} = \beta_1 \boldsymbol{m}_{t-1} + (1-\beta_1) \textcolor{orange}{\widehat{\boldsymbol{g}_{t}}}$
    \STATE $\boldsymbol{v}_{t} = \beta_2 \boldsymbol{v}_{t-1} + (1-\beta_2) \textcolor{orange}{\widehat{\boldsymbol{g}_{t}}^2} $
    \STATE $\widehat{\boldsymbol{m}_{t}} = \boldsymbol{m}_{t}/(1 - \beta_1^t), \quad \widehat{\boldsymbol{v}_{t}} = \boldsymbol{v}_{t}/(1 - \beta_2^t) $
    \STATE $\boldsymbol{\theta}_t = \boldsymbol{\theta}_{t-1} - \eta_t \lambda_{w} \boldsymbol{\theta}_{t-1} - \eta_t \widehat{\boldsymbol{m}_{t}}/(\sqrt{\widehat{\boldsymbol{v}_{t}}} + \epsilon)$
    \UNTIL{$\boldsymbol{\theta}_{t}$ not converge}
\end{algorithmic}
\end{algorithm}

\section{Hyper-Parameters}\label{appendix:hyper_parameters}

\subsection{Model Hyper-Parameters}
Table~\ref{tab:llm_hyper_param} summarizes the model hyperparameters used for all experiments. For each model, we report the core architecture settings (such as number of layers, hidden dimension, attention heads, and feedforward dimension), MoE-related configurations, and main optimization hyperparameters (including learning rate, warmup, weight decay, and Adam parameters). Clipping thresholds $\lambda_{abs}$, $\lambda_{rel}$, and momentum $\beta$ are also listed, in correspondence with the techniques discussed in the main text. All experiments use a batch size and sequence length as shown, and we employ bfloat16 precision for most models except ERNIE, which uses float8. The symbol ‘–’ indicates settings not applicable to a specific architecture.

\begin{table*}[!h]
    \small
    \centering
    \caption{Hyperparameters used in our LLMs experiments. $\lambda_{abs}$ represents the absolute global clipping threshold of GlobalGC. $\lambda_{rel}$ and $\beta$ represent the relative clipping threshold and the momentum of our AdaGC, respectively. The symbol `–' indicates that the parameter is not applicable.}
    \label{tab:llm_hyper_param}
    \resizebox{0.8\linewidth}{!}{

\begin{tabular}{ c |c c c c c}
    \toprule
    Model                         & Llama-2 1.3B            & Llama-2 7B              & Qwen3-1.7B              & ERNIE 10B-A1.4B       & Mixtral 8x1B \\
    \specialrule{0em}{2pt}{1pt}
    \midrule
    \specialrule{0em}{2pt}{1pt}
    Precision                     & bfloat16                & bfloat16                & bfloat16                & float8                & bfloat16     \\
    \specialrule{0em}{0.5pt}{1pt}
    Num layers                    & 24                      & 32                      & 28                      & 25                    & 24           \\
    \specialrule{0em}{0.5pt}{1pt}
    Hidden dim size               & 2048                    & 4096                    & 2048                    & 2560                  & 2048         \\
    \specialrule{0em}{0.5pt}{1pt}
    FFN dim size                  & 5461                    & 11008                   & 6144                    & 1024                  & 5632         \\
    \specialrule{0em}{0.5pt}{1pt}
    Num attention heads           & 32                      & 32                      & 16                      & 20                    & 32           \\
    \specialrule{0em}{0.5pt}{1pt}
    Num key value heads           & 32                      & 32                      & 8                       & 4                     & 4            \\
    \specialrule{0em}{0.5pt}{1pt}
    Attention bias                & \XSolidBrush            & \XSolidBrush            & \XSolidBrush            & \XSolidBrush          & \XSolidBrush \\
    \specialrule{0em}{0.5pt}{1pt}    
    Num shared experts            & -                       & -                       & -                       & 1                     & 0            \\
    \specialrule{0em}{0.5pt}{1pt}    
    Num router experts            & -                       & -                       & -                       & 48                    & 8            \\
    \specialrule{0em}{0.5pt}{1pt}
    Num experts per token         & -                       & -                       & -                       & 3                     & 2            \\
    \specialrule{0em}{0.5pt}{1pt}
    Sequence length               & 2048                    & 2048                    & 4096                    & 4096                  & 2048         \\
    \specialrule{0em}{0.5pt}{1pt}
    Batch size                    & 2048                    & 2048                    & 2048                    & 4096                  & 512          \\
    \specialrule{0em}{0.5pt}{1pt}
    Iterations                    & 9000                    & 9000                    & 19000                   & 21000                 & 36000        \\
    \specialrule{0em}{0.5pt}{1pt}
    Learning rate                 & $3.0 \times 10^{-4}$    & $3.0 \times 10^{-4}$    & $4.5 \times 10^{-3}$    & $3.0 \times 10^{-4}$  & $3.0 \times 10^{-4}$ \\
    \specialrule{0em}{0.5pt}{1pt}
    LR decay                      & cosine                  & cosine                  & cosine                  & wsd                   & cosine       \\
    \specialrule{0em}{0.5pt}{1pt}
    Warmup iterations             & 2000                    & 2000                    & 2000                    & 2000                  & 500          \\
    \specialrule{0em}{0.5pt}{1pt}
    Weight decay                  & 0.1                     & 0.1                     & 0.1                     & 0.1                   & 0.1          \\
    \specialrule{0em}{0.5pt}{1pt}
    Adam $\beta_1$                & 0.90                    & 0.90                    & 0.90                    & 0.90                  & 0.90         \\
    \specialrule{0em}{0.5pt}{1pt}
    Adam $\beta_2$                & 0.95                    & 0.95                    & 0.95                    & 0.95                  & 0.999        \\
    \specialrule{0em}{0.5pt}{1pt}
    $\lambda_{abs}$               & 1.0                     & 1.0                     & 1.0                     & 1.0                   & 1.0          \\
    \specialrule{0em}{0.5pt}{1pt}
    $\lambda_{rel}$               & 1.04                    & 1.04                    & 1.04                    & 1.04                  & 1.04         \\
    \specialrule{0em}{0.5pt}{1pt}
    $\beta$                       & 0.99                    & 0.99                    & 0.99                    & 0.99                  & 0.99         \\
    \specialrule{0em}{2pt}{1pt}
    \bottomrule
\end{tabular}}
\end{table*}

\subsection{Clipping Hyper-Parameters}
For the baseline clipping methods, we primarily followed the recommended default settings from prior work, and performed limited tuning only when necessary to ensure a fair comparison.

Specifically:
\begin{itemize}
    \item \textbf{GlobalGC}: We used the commonly adopted global clipping threshold $\lambda_{abs} = 1.0$ in large-scale pretraining.
    \item \textbf{ClipByValue}: Following the SPAM~\citep{huang2025spam} setting, we set the clipping threshold to $\lambda_{abs} = 1e-3$.
    \item \textbf{AGC}: We performed small-range tuning over $\lambda_{rel} \in \{1e-2, 1e-3, 1e-4\}$ to find the best setting.
    \item \textbf{Clippy}: We tuned over $\lambda_{abs} \in \{0.1, 0.3, 0.5\}$ and $\lambda_{rel} \in \{1e-2, 1e-3, 1e-4\}$ to select the optimal combination.
    \item \textbf{SPAM}: We adopted the default hyperparameters recommended for standard pretraining in the original paper, which were reported to perform well across diverse settings. Specifically, we set the interval to $\Delta T = 500$, the threshold to $\theta = 5000$, and the warmup steps to $N = 150$.
    \item \textbf{ZClip}: Following the hyperparameter guidance in the original paper, we set the EMA smoothing factor to $\alpha=0.97$ and the z-score threshold to $z_{thres}=2.0$, which lies within the recommended effective range.
\end{itemize}

Table~\ref{tab:clipping_hyper} summarizes the final hyperparameters used for the baseline clipping methods.

\begin{table*}[!h]
    \small
    \centering
    \caption{Hyperparameters for the baseline clipping methods.}
    \label{tab:clipping_hyper}
    \resizebox{0.4\linewidth}{!}{
        \begin{tabular}{c|c}
            \toprule
            \specialrule{0em}{2pt}{2pt}
            Method        & Hyperparameters  \\
            \specialrule{0em}{2pt}{2pt}
            \midrule
            \specialrule{0em}{2pt}{2pt}
            GlobalGC      & $\lambda_{abs} = 1.0$ \\
            \specialrule{0em}{2pt}{2pt}
            ClipByValue   & $\lambda_{abs} = 1e-3$ \\
            \specialrule{0em}{2pt}{2pt}
            AGC           & $\lambda_{rel} = 1e-3$ \\
            \specialrule{0em}{2pt}{2pt}
            Clippy        & $\lambda_{abs}=0.5, \lambda_{rel} = 1e-2$  \\
            \specialrule{0em}{2pt}{2pt}
            SPAM          & $\Delta T = 500$, $\theta = 5000$, $N = 150$ \\
            \specialrule{0em}{2pt}{2pt}
            ZClip         & $\alpha = 0.97, z_{thres} = 2.0$ \\
            \specialrule{0em}{2pt}{2pt}
            \bottomrule
        \end{tabular}
    }
\end{table*}

\section{Experimental Details for CLIP}
\label{appendix:exp_details_clip}

To further investigate the optimizer compatibility of AdaGC, we evaluated its effect on large-scale vision-language model pre-training, focusing on the CLIP ViT-Base model~\citep{radford2021learning} with the Lion optimizer~\citep{chen2024symbolic}. The model comprises 151 million parameters and is trained on the LAION-400M~\citep{schuhmann2021laion} dataset. Training is conducted for 20{,}000 steps, covering 320M image-text pairs.

The key training hyperparameters are as follows: a learning rate of 0.002, weight decay of 0.2, and batch size of 32{,}768. We employ patch-dropout with a drop rate of 0.5~\citep{li2023scaling}, following recent best practices~\citep{wortsman2023stable}. The learning rate is linearly warmed up for the first 5{,}000 steps~\citep{goyal2017accurate}, and subsequently decayed according to a cosine schedule~\citep{loshchilov2016sgdr}.

Following pre-training, we report downstream zero-shot evaluation results on the ImageNet~\citep{russakovsky2015imagenet} validation set. The results are shown in Figure~\ref{fig:openclip_vit_b_32_combine} in the main text.

\section{Additional Evaluation Results}
\subsection{Additional Downstream Benchmark Results}
\label{appendix:downstream_tasks}

The Two-Shot evaluation results of Llama-2 1.3B/7B, Qwen3-1.7B, and Mixtral 8x1B models on standard benchmarks are presented in Table~\ref{tab:llama_eval_two_shot_benchmarks_all}.

\begin{table}[!h]
\centering
\caption{The Two-Shot evaluation results of Llama-2 1.3B/7B, Qwen3-1.7B, and Mixtral 8x1B models on standard benchmarks. The best scores in each column are \textbf{bolded}. HellaSw. = HellaSwag, W.G. = WinoGrande.}
\label{tab:llama_eval_two_shot_benchmarks_all}
\resizebox{1.0\linewidth}{!}{\begin{tabular}{c|l|ccccccccc|c}
\toprule
\multirow{2}{*}{Model} & \multirow{2}{*}{Method}
& ARC-E & ARC-C & BoolQ & HellaSw. & OBQA & PIQA & W.G. & MMLU & SciQ
& \multirow{2}{*}{Avg.} \\
& & acc\_norm & acc\_norm & acc & acc\_norm & acc\_norm & acc\_norm & acc & acc & acc\_norm & \\
\midrule

\multirow{4}{*}{Llama-2 1.3B}
& GlobalGC             & \textbf{47.26} & 25.60          & \textbf{50.31} & 46.44          & \textbf{32.20} & 69.64          & 52.33          & 25.07          & 77.80          & 47.41 \\
& ClipByValue          & 47.10          & 25.77          & 56.54          & 43.97          & 30.00          & 68.88          & 52.96          & \textbf{26.09} & 77.20          & \textbf{47.61} \\
& Clippy               & 46.55          & 25.85          & 49.76          & 45.71          & 30.00          & \textbf{70.02} & 53.20          & 25.69          & 77.70          & 47.16 \\
& \textbf{AdaGC}       & 46.04          & \textbf{26.19} & 49.72          & \textbf{47.51} & 31.00          & 69.70          & \textbf{54.38} & 24.98          & \textbf{78.50} & 47.56 \\
\midrule

\multirow{5}{*}{Llama-2 7B}
& GlobalGC             & 55.81          & 28.58          & \textbf{60.70} & 56.54          & 33.00          & 73.72          & 56.75          & 25.51          & 83.20          & 52.64 \\
& ClipByValue          & 51.94          & 26.88          & 57.55          & 53.36          & 32.40          & 72.31          & 54.14          & 26.63          & 81.60          & 50.75 \\
& AGC                  & 52.95          & 28.67          & 56.15          & 55.69          & \textbf{35.40} & 73.07          & 56.43          & \textbf{26.88} & 82.80          & 52.00 \\
& Clippy               & 52.86          & 29.10          & 56.48          & 53.76          & 31.80          & 73.07          & 55.72          & 26.03          & 82.60          & 51.27 \\
& \textbf{AdaGC}       & \textbf{56.86} & \textbf{29.61} & 59.36          & \textbf{57.89} & 33.60          & \textbf{73.99} & \textbf{57.62} & 26.46          & \textbf{85.90} & \textbf{53.47} \\
\midrule

\multirow{3}{*}{Qwen3-1.7B}
& GlobalGC             & 50.46          & 26.79          & 57.77          & 52.60          & \textbf{34.20} & 72.42          & 55.56          & 23.91          & 83.50          & 50.80 \\
& ZClip                & 52.99          & 27.82          & 56.70          & 54.57          & 31.80          & 73.45          & 55.80          & \textbf{24.95} & \textbf{84.60} & 51.41 \\
& \textbf{AdaGC}       & \textbf{54.46} & \textbf{28.07} & \textbf{60.92} & \textbf{56.48} & 33.40          & \textbf{73.83} & \textbf{57.77} & 24.75          & 84.10          & \textbf{52.64} \\
\midrule

\multirow{2}{*}{Mixtral 8x1B}
& GlobalGC             & 50.34          & 27.39          & \textbf{58.81} & 52.96          & \textbf{34.20} & 71.16          & 54.06          & \textbf{25.37} & 79.90          & 50.47 \\
& \textbf{AdaGC}       & \textbf{53.83} & \textbf{28.42} & 58.69          & \textbf{55.66} & 33.80          & \textbf{73.07} & \textbf{54.14} & 25.12          & \textbf{81.80} & \textbf{51.61} \\

\bottomrule
\end{tabular}}
\end{table}

\subsection{Comparison with Other Representative Stabilization Methods}
\label{appendix:results_other_baseline}

In addition to the clipping-based baselines discussed in the main text, we also compare AdaGC with several recent methods that aim to improve the stability and generalization of large language model (LLM) training, including SPAM~\citep{huang2025spam}, Scaled Embed~\citep{takase2023spike}, and WeSaR~\citep{nishida2024initialization}. The detailed results under the zero and two-shot settings and spike score are summarized in Table~\ref{tab:compare_with_other_methods} and~\ref{tab:spike_score_stat_other_methods}. The training dynamics are shown in Figures~\ref{fig:llama_1b_other} and \ref{fig:llama_7b_other}.

Among these methods, SPAM is designed to stabilize training by adjusting the optimizer's behavior, while Scaled Embed and WeSaR focus on initialization or embedding scaling strategies to suppress loss spikes. Our experiments show that, although some of these methods can partly mitigate instability or improve certain metrics, AdaGC generally achieves higher stability and better final performance across model scales. Notably, while WeSaR is also effective at suppressing loss spikes, its reliance on special parameter initialization limits its applicability to from-scratch training. In contrast, AdaGC works reliably under both from-scratch and resumed training regimes, providing stronger flexibility. Overall, these results demonstrate AdaGC's superior robustness and generalization compared to other non-clipping baselines.

\begin{table}[!h]
\centering
\caption{The Zero-Shot and Two-Shot evaluation results of Llama-2 1.3B/7B models on standard benchmarks.}
\label{tab:compare_with_other_methods}
\resizebox{1.0\linewidth}{!}{\begin{tabular}{c|l|ccccccccc|c}
\toprule
\multirow{2}{*}{Model} & \multirow{2}{*}{Method}
& ARC-E & ARC-C & BoolQ & HellaSw. & OBQA & PIQA & W.G. & MMLU & SciQ
& \multirow{2}{*}{Avg.} \\
& & acc\_norm & acc\_norm & acc & acc\_norm & acc\_norm & acc\_norm & acc & acc & acc\_norm & \\
\midrule

\multicolumn{12}{c}{One-Shot} \\
\midrule

\multirow{4}{*}{Llama-2 1.3B}
& WeSaR-GlobalGC       & \textbf{43.56} & 25.17          & \textbf{59.94} & 45.08          & 30.00          & \textbf{70.29} & 52.96          & 22.90          & 65.80          & 46.19 \\
& SPAM                 & 42.05          & 24.83          & 59.60          & 42.82          & 30.00          & 69.31          & 52.17          & 23.02          & 66.40          & 45.58 \\
& ScaledEmbed-GlobalGC & 42.21          & \textbf{25.51} & 59.66          & 45.50          & \textbf{31.80} & 70.02          & \textbf{53.28} & \textbf{23.22} & 65.20          & 46.27 \\
& \textbf{AdaGC}       & 42.09          & \textbf{25.51} & 58.01          & \textbf{47.29} & 30.40          & 69.70          & 52.33          & 22.98          & \textbf{68.70} & \textbf{46.33} \\
\midrule

\multirow{4}{*}{Llama-2 7B}
& WeSaR-GlobalGC       & \textbf{49.75} & 27.22          & 56.12          & 55.38          & \textbf{33.80} & 73.39          & 56.27          & 23.02          & 71.40          & 49.59 \\
& SPAM                 & 48.53          & 25.77          & 60.34          & 51.89          & 32.60          & 72.03          & 54.54          & 22.95          & 71.00          & 48.85 \\
& ScaledEmbed-GlobalGC & 48.57          & 26.71          & \textbf{60.89} & 54.32          & 32.60          & 72.25          & 55.33          & \textbf{23.66} & 70.50          & 49.42 \\
& \textbf{AdaGC}       & 49.58          & \textbf{28.92} & 57.28          & \textbf{57.94} & 32.80          & \textbf{74.32} & \textbf{58.09} & 23.62          & \textbf{76.60} & \textbf{51.01} \\

\midrule
\multicolumn{12}{c}{Two-Shot} \\
\midrule

\multirow{4}{*}{Llama-2 1.3B}
& WeSaR-GlobalGC       & 46.97          & 25.51          & 57.25          & 44.60          & 30.40          & \textbf{69.86} & 52.72          & \textbf{26.31} & 75.80          & 47.71 \\
& SPAM                 & 46.46          & 24.57          & 55.20          & 42.49          & 31.20          & 68.44          & 49.57          & 25.67          & 75.40          & 46.56 \\
& ScaledEmbed-GlobalGC & \textbf{47.77} & 26.02          & \textbf{58.84} & 45.45          & \textbf{32.60} & 69.26          & 53.20          & 24.61          & 77.50          & \textbf{48.36} \\
& \textbf{AdaGC}       & 46.04          & \textbf{26.19} & 49.72          & \textbf{47.51} & 31.00          & 69.70          & \textbf{54.38} & 24.98          & \textbf{78.50} & 47.56 \\
\midrule

\multirow{4}{*}{Llama-2 7B}
& WeSaR-GlobalGC       & 54.29          & 28.84          & \textbf{60.76} & 55.43          & 33.00          & 73.01          & 56.67          & 25.37          & 81.80          & 52.13 \\
& SPAM                 & 53.87          & 27.30          & 59.72          & 52.34          & 33.40          & 72.25          & 55.88          & 25.16          & 78.70          & 50.95 \\
& ScaledEmbed-GlobalGC & 54.50          & 27.30          & 55.11          & 54.58          & 31.80          & 71.93          & 55.56          & 26.26          & 82.40          & 51.04 \\
& \textbf{AdaGC}       & \textbf{56.86} & \textbf{29.61} & 59.36          & \textbf{57.89} & \textbf{33.60} & \textbf{73.99} & \textbf{57.62} & \textbf{26.46} & \textbf{85.90} & \textbf{53.47} \\

\bottomrule
\end{tabular}}
\end{table}

\begin{table}[!h]
  \centering
  \caption{Comparison of spike scores for various models under different methods.}
  \label{tab:spike_score_stat_other_methods}
  \resizebox{1.0\linewidth}{!}{
  \begin{tabular}{l|cccc|cccc}
    \toprule
    \specialrule{0em}{2pt}{2pt}
    Model       & \multicolumn{4}{c|}{Llama-2 1.3B}                                  & \multicolumn{4}{c}{Llama-2 7B} \\
    \specialrule{0em}{2pt}{2pt} 
    Method      & WeSaR-GlobalGC  & SPAM  & ScaledEmbed-GlobalGC   & \textbf{AdaGC}  & WeSaR-GlobalGC  & SPAM  & ScaledEmbed-GlobalGC     & \textbf{AdaGC} \\
    \specialrule{0em}{2pt}{2pt}
    \midrule
    \specialrule{0em}{2pt}{2pt}
    Total Steps      & 9K     & 9K        & 9K      & 9k                            & 9K               & 9K      & 9K            & 9K            \\
    Num Spikes       & 2      & 0         & 10      & 0                             & 1                & 3       & 8             & 0                \\
    Spike Score (\%) & 0.0222 & \textbf{0.0000}    & 0.1111  & \textbf{0.0000}               & 0.0111           & 0.0333  & 0.0889        & \textbf{0.0000}         \\
    \specialrule{0em}{2pt}{2pt}
    \bottomrule
  \end{tabular}
  }                                
\end{table}

\section{Additional Visualization Results}
\label{appendix:more_vis_results}

\subsection{Training Dynamics}

\begin{figure*}[!h]
    \centering
    \includegraphics[width=1\linewidth]{figs/all_qwen1b.pdf}
    \caption{Training dynamics of Qwen3-1.7B (with QK-norm; trained on 160B tokens) comparing AdaGC with baseline clipping methods. AdaGC remains beneficial on top of QK-Norm.}
    \label{fig:qwen3_1b}
\end{figure*}

\begin{figure*}[!h]
    \centering
    \includegraphics[width=1\linewidth]{figs/all_llama1b.pdf}
    \caption{Training dynamics of Llama-2 1.3B comparing AdaGC with baseline clipping methods.}
    \label{fig:llama_1b}
\end{figure*}

\begin{figure*}[!h]
    \centering
    \includegraphics[width=1\linewidth]{figs/all_llama1b_other.pdf}
    \caption{Training dynamics of Llama-2 1.3B comparing AdaGC with representative stabilization methods beyond the baseline clipping methods.}
    \label{fig:llama_1b_other}
\end{figure*}

\begin{figure*}[!h]
    \centering
    \includegraphics[width=1\linewidth]{figs/all_llama_7b_new_v2_other.pdf}
    \caption{Training dynamics of Llama-2 7B comparing AdaGC with representative stabilization methods beyond the baseline clipping methods.}
    \label{fig:llama_7b_other}
\end{figure*}

\subsection{Optimizer State Dynamics}

\begin{figure*}[!h]
    \centering
    \includegraphics[width=1\linewidth]{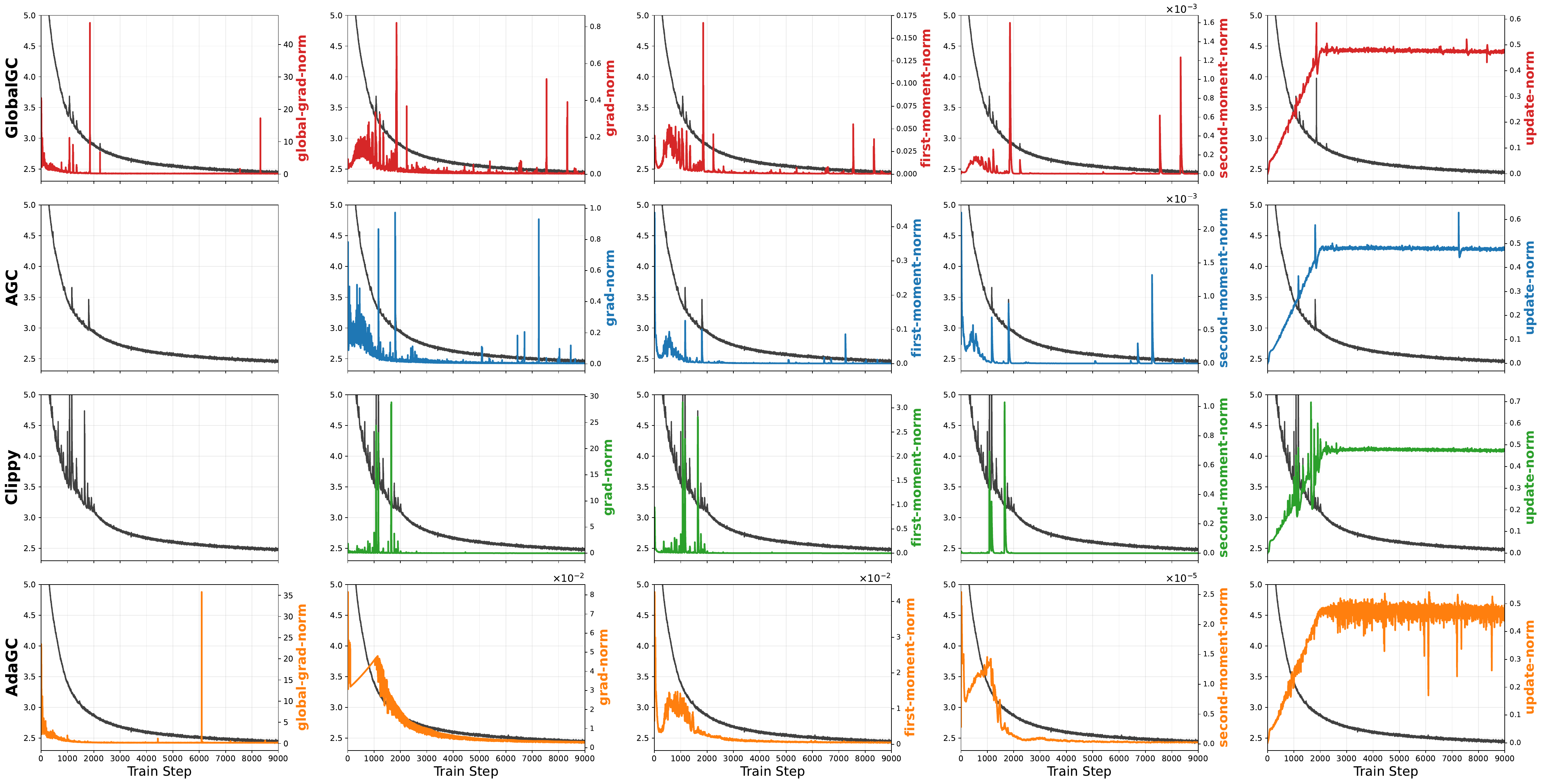}
    \caption{Visualization of the gradient norm, first-moment norm, second-moment norm, update norm, loss, and global gradient norm for the \texttt{embedding} of Llama-2 1.3B. Each row represents a different clipping method: the first row is GlobalGC, the second is AGC, the third is Clippy, and the fourth is our AdaGC. The black curve in each plot shows the loss trajectory.}
\end{figure*}

\begin{figure*}[!h]
    \centering
    \includegraphics[width=1\linewidth]{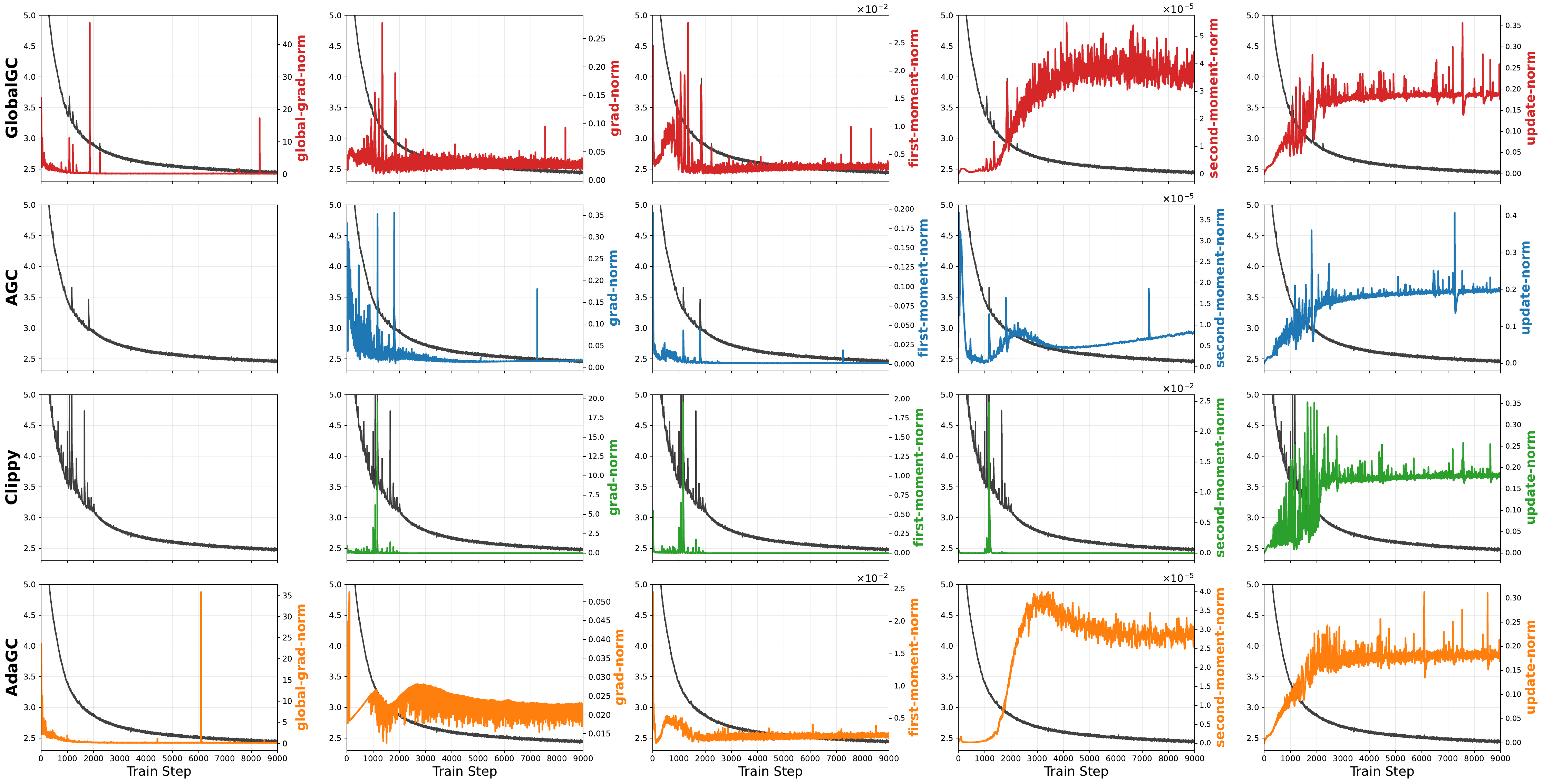}
    \caption{Visualization of the gradient norm, first-moment norm, second-moment norm, update norm, loss, and global gradient norm for the \texttt{encoder\_layers\_3\_self\_attention\_query\_key\_value} of Llama-2 1.3B. Each row represents a different clipping method: the first row is GlobalGC, the second is AGC, the third is Clippy, and the fourth is our AdaGC. The black curve in each plot shows the loss trajectory.}
\end{figure*}

\begin{figure*}[!h]
    \centering
    \includegraphics[width=1\linewidth]{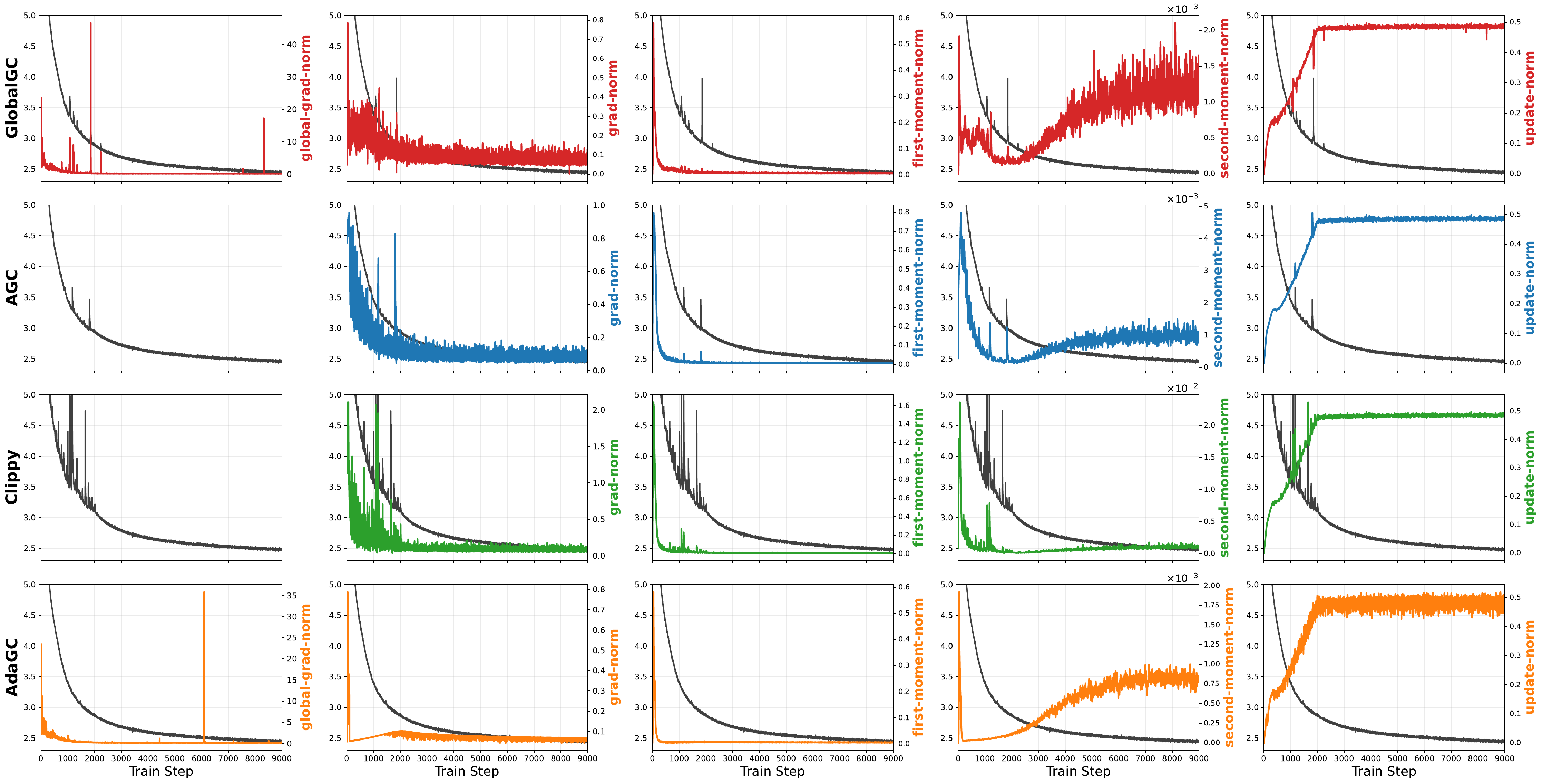}
    \caption{Visualization of the gradient norm, first-moment norm, second-moment norm, update norm, loss, and global gradient norm for the \texttt{LMHead} of Llama-2 1.3B. Each row represents a different clipping method: the first row is GlobalGC, the second is AGC, the third is Clippy, and the fourth is our AdaGC. The black curve in each plot shows the loss trajectory.}
\end{figure*}

\begin{figure*}[!h]
    \centering
    \includegraphics[width=1\linewidth]{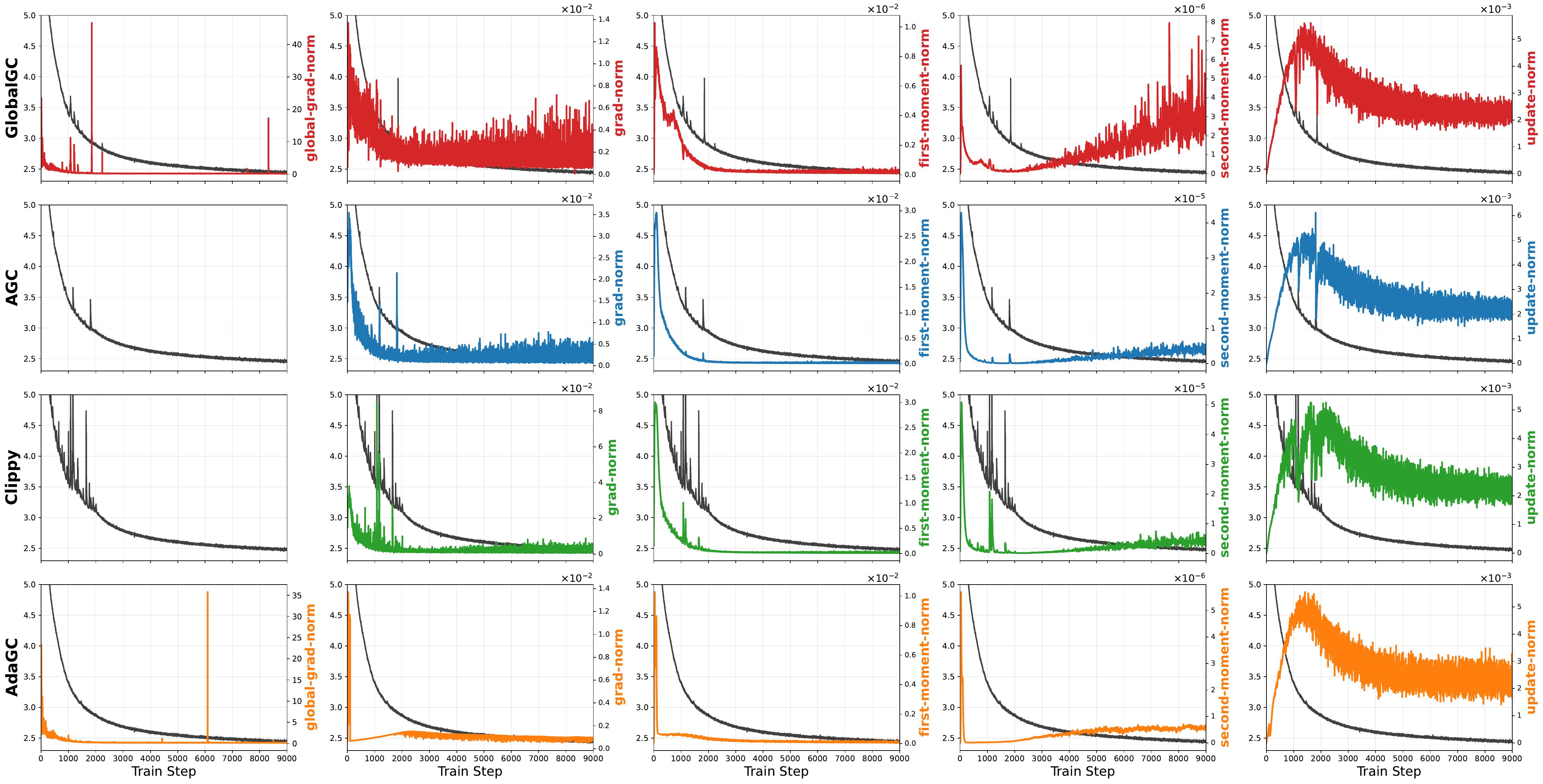}
    \caption{Visualization of the gradient norm, first-moment norm, second-moment norm, update norm, loss, and global gradient norm for the \texttt{encoder\_final\_layernorm} of Llama-2 1.3B. Each row represents a different clipping method: the first row is GlobalGC, the second is AGC, the third is Clippy, and the fourth is our AdaGC. The black curve in each plot shows the loss trajectory.}
\end{figure*}

\begin{figure*}[!h]
    \centering
    \includegraphics[width=1\linewidth]{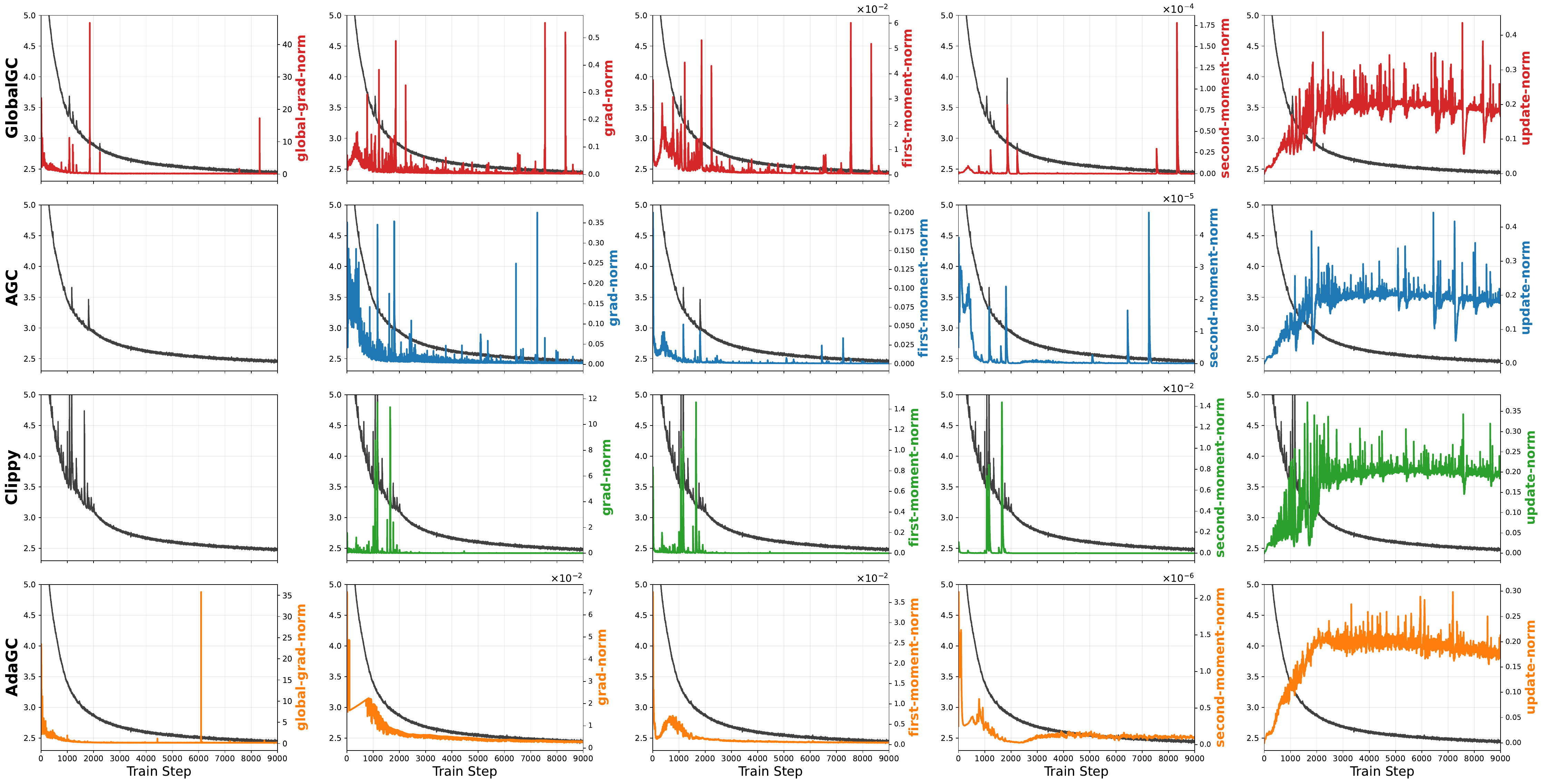}
    \caption{Visualization of the gradient norm, first-moment norm, second-moment norm, update norm, loss, and global gradient norm for the \texttt{encoder\_layers\_0\_self\_attention\_query\_key\_value} of Llama-2 1.3B. Each row represents a different clipping method: the first row is GlobalGC, the second is AGC, the third is Clippy, and the fourth is our AdaGC. The black curve in each plot shows the loss trajectory.}
\end{figure*}

\begin{figure*}[!h]
    \centering
    \includegraphics[width=1\linewidth]{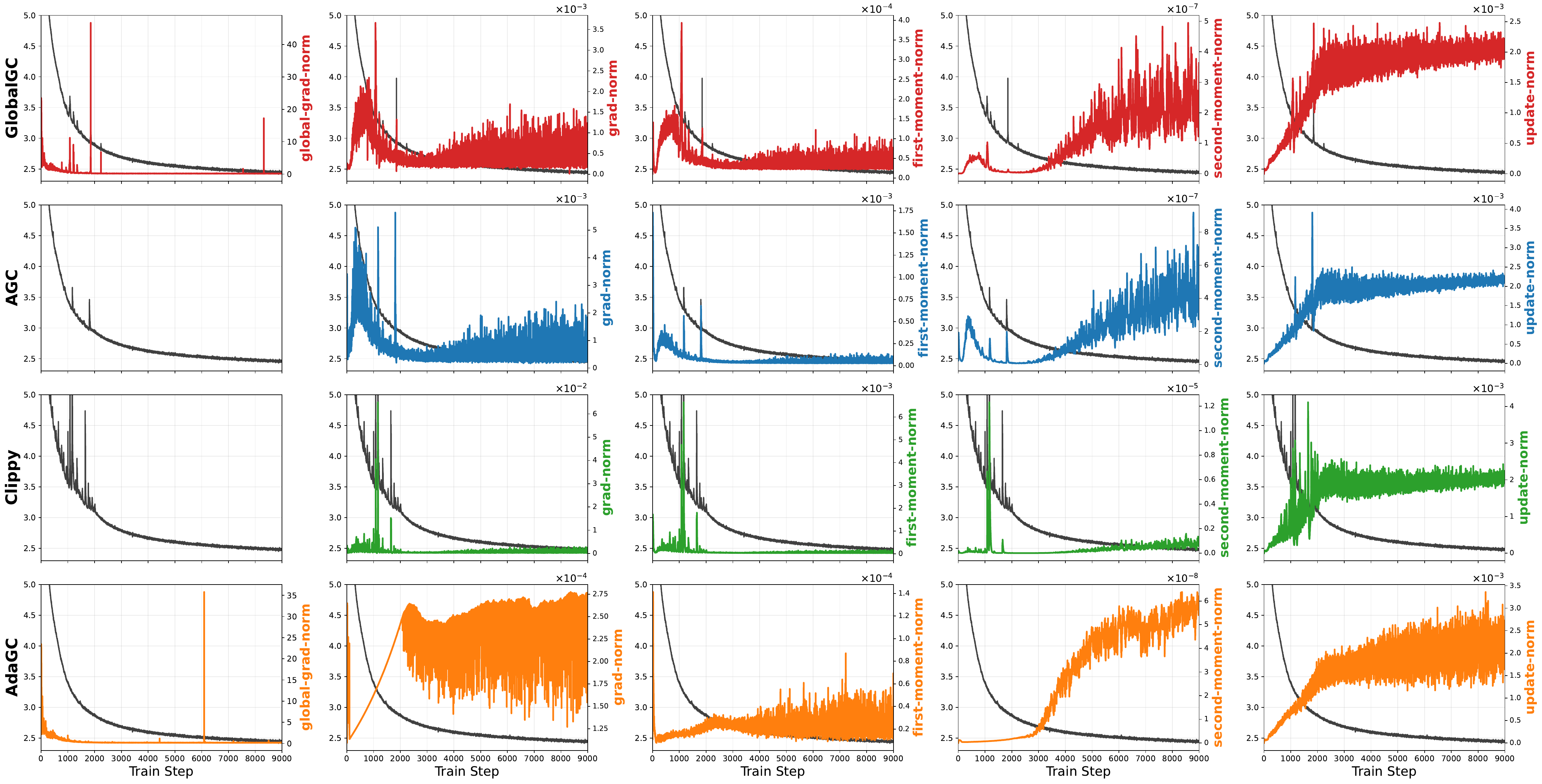}
    \caption{Visualization of the gradient norm, first-moment norm, second-moment norm, update norm, loss, and global gradient norm for the \texttt{encoder\_layers\_23\_input\_layernorm} of Llama-2 1.3B. Each row represents a different clipping method: the first row is GlobalGC, the second is AGC, the third is Clippy, and the fourth is our AdaGC. The black curve in each plot shows the loss trajectory.}
\end{figure*}

\clearpage
\section{Convergence Analysis and Proof}
Any operation that modifies gradients may potentially result in non-convergence. In this section, rather than providing a theoretical guarantee that AdaGC eliminates loss spikes, we present the convergence guarantee for Adam with AdaGC, stated as follows:
\begin{theorem}
    \label{theorem:1}
    Under mild assumptions, by selecting $\alpha_t = \mathcal{O}(1/\sqrt{T})$, $\beta_2 = 1- \mathcal{O}(1/T)$ and $\beta_1 < \sqrt{\beta_2}$, 
    when $\tau$ is randomly chosen from $\{1,2,\cdots,T\}$ with equal probabilities, it holds that
    \[
    \mathbb{E} \|\nabla f(\theta_\tau)\|^2  = \mathcal{O}\left(\frac{1}{\sqrt{T}}\right).
    \]
\end{theorem}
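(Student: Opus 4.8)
The plan is to adapt the standard nonconvex analysis of Adam-type methods---a descent lemma followed by careful bookkeeping of the momentum bias and the adaptive-preconditioner drift, which is precisely where the hypotheses $\beta_1 < \sqrt{\beta_2}$ and $\beta_2 = 1 - \mathcal{O}(1/T)$ enter---to the one genuinely new ingredient here, the per-tensor adaptive clipping map $\boldsymbol{g}_t \mapsto \widehat{\boldsymbol{g}}_t$. I would run the argument under the usual mild assumptions: $f$ is $L$-smooth, bounded below by $f^*$, the stochastic gradients are unbiased with bounded second moment $\mathbb{E}[\|\boldsymbol{g}_t\|^2 \mid \mathcal{F}_{t-1}] \le G^2$, and---to accommodate the decoupled weight decay---either $\lambda_w = 0$ in the analysis or the iterates remain in a bounded set. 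Because the warm-up phase $t < T_{start}$ lasts a fixed number of steps, its contribution to any telescoped sum is $\mathcal{O}(1)$, hence $\mathcal{O}(1/T)$ after averaging, so I assume $t \ge T_{start}$ throughout. The first substep is to verify that clipping and the EMA are benign: from $h_{t,i} \in (0,1]$ one gets $\|\widehat{\boldsymbol{g}}_{t,i}\| \le \min\{\|\boldsymbol{g}_{t,i}\|,\ \lambda_{rel}\gamma_{t-1,i}\}$, and substituting this into $\gamma_{t,i} = \beta\gamma_{t-1,i} + (1-\beta)\|\widehat{\boldsymbol{g}}_{t,i}\|$ together with $\|\boldsymbol{g}_{t,i}\| \le G$ shows by induction that $\gamma_{t,i}$ stays in a band $[\underline{\gamma}, \overline{\gamma}]$ bounded away from $0$ and $\infty$ (it is never inflated, since clipping only shrinks, and it cannot collapse faster than the factor $\beta + (1-\beta)\lambda_{rel}$ allows). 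Consequently $\|\widehat{\boldsymbol{g}}_t\| \le \widehat{G}$ for a constant $\widehat{G}$, the bias-corrected second moment satisfies $\widehat{v}_t^{\,j} \le \widehat{G}^2$, and the effective per-coordinate stepsize $\alpha_t/(\sqrt{\widehat{v}_t^{\,j}} + \epsilon_1)$ is sandwiched between $\alpha_t/(\widehat{G} + \epsilon_1)$ and $\alpha_t/\epsilon_1$.

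Next I would apply $L$-smoothness along $\boldsymbol{\theta}_t - \boldsymbol{\theta}_{t-1} = -\alpha_t\lambda_w\boldsymbol{\theta}_{t-1} - \alpha_t \widehat{\boldsymbol{m}}_t/(\sqrt{\widehat{\boldsymbol{v}}_t} + \epsilon_1)$ and take conditional expectation. The second-order term is $\mathcal{O}(\alpha_t^2)$ by the boundedness above; the first-order term is $-\alpha_t\langle \nabla f(\boldsymbol{\theta}_{t-1}),\, \mathbb{E}[\widehat{\boldsymbol{m}}_t/(\sqrt{\widehat{\boldsymbol{v}}_t} + \epsilon_1) \mid \mathcal{F}_{t-1}]\rangle$ plus the benign weight-decay piece. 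I would then replace $\widehat{\boldsymbol{v}}_t$ by $\widehat{\boldsymbol{v}}_{t-1}$ up to an $\mathcal{O}(1 - \beta_2) = \mathcal{O}(1/T)$ correction so the preconditioner decouples from $\widehat{\boldsymbol{m}}_t$, unroll $\widehat{\boldsymbol{m}}_t$ as a bias-corrected convex combination of $\widehat{\boldsymbol{g}}_1, \dots, \widehat{\boldsymbol{g}}_t$, and split each $\widehat{\boldsymbol{g}}_s = \nabla f(\boldsymbol{\theta}_{s-1}) + (\widehat{\boldsymbol{g}}_s - \boldsymbol{g}_s) + (\boldsymbol{g}_s - \nabla f(\boldsymbol{\theta}_{s-1}))$ into true gradient, clipping bias, and zero-mean noise. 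The momentum-staleness terms created by the unrolling are absorbed using $\beta_1 < \sqrt{\beta_2}$ together with $\sum_t \alpha_t^2 = \mathcal{O}(1)$, exactly as in modification-free Adam analyses.

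The step I expect to be the \textbf{main obstacle} is controlling the clipping bias. Bounding $\langle \nabla f, \mathbb{E}[\widehat{\boldsymbol{g}}_t - \boldsymbol{g}_t]\rangle$ crudely only gives an $\mathcal{O}(1)$ per-step quantity, which is too weak for the rate; instead one must lower-bound $\langle \nabla f_i(\boldsymbol{\theta}_{t-1}), \mathbb{E}[\widehat{\boldsymbol{g}}_{t,i} \mid \mathcal{F}_{t-1}]\rangle$ directly, block by block. Since clipping only rescales $\boldsymbol{g}_{t,i}$ by a positive factor at most $1$, it never reverses direction, and a clipped-SGD-style argument yields a lower bound of order $\min\{\|\nabla f_i(\boldsymbol{\theta}_{t-1})\|,\ \lambda_{rel}\gamma_{t-1,i}\} \cdot \|\nabla f_i(\boldsymbol{\theta}_{t-1})\|$. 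Because the EMA band keeps $\gamma_{t-1,i} \ge \underline{\gamma} > 0$, the steps split into ``good'' ones where $\|\nabla f_i\|$ is below roughly $\lambda_{rel}\gamma_{t-1,i}$---on which the clean $\|\nabla f_i\|^2$ descent holds up to a small correction controlled by the clipping-activation probability $\Pr(\|\boldsymbol{g}_{t,i}\| > \lambda_{rel}\gamma_{t-1,i} \mid \mathcal{F}_{t-1})$, which a Chebyshev bound handles via the bounded-variance assumption---and ``bad'' ones where $\|\nabla f_i\|$ sits above threshold, on which $f$ nevertheless decreases by $\Omega(\alpha_t)$. A counting argument then caps the number of bad steps at $\mathcal{O}(\sqrt{T})$, since each costs $\Omega(1/\sqrt{T})$ in objective value while the total decrease is at most $f(\boldsymbol{\theta}_0) - f^*$, and on those steps $\|\nabla f\|^2 \le G^2$ is harmless. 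Making this dichotomy airtight---in particular ruling out pathological regimes in which $\gamma_{t-1,i}$ lags far behind a genuinely growing true-gradient norm, which is exactly where the precise form of the ``mild assumptions'' must be fixed---is the delicate part.

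Finally, I would sum the per-step inequality over $t = 1, \dots, T$, take total expectation, telescope using $f \ge f^*$, and plug in the constant choice $\alpha_t \equiv \alpha = \Theta(1/\sqrt{T})$, obtaining $\alpha c \sum_{t=1}^{T} \mathbb{E}\|\nabla f(\boldsymbol{\theta}_{t-1})\|^2 \le (f(\boldsymbol{\theta}_0) - f^*) + \mathcal{O}(\alpha^2 T) + \mathcal{O}(1)$ for a constant $c > 0$, where the $\mathcal{O}(1)$ absorbs the warm-up, noise, momentum-staleness, preconditioner-drift, and residual-bias contributions (each finite thanks to $\beta_2 = 1 - \mathcal{O}(1/T)$ and $\beta_1 < \sqrt{\beta_2}$). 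Since $\alpha^2 T = \mathcal{O}(1)$, dividing through by $\alpha c T = \Theta(\sqrt{T})$ gives $\frac{1}{T}\sum_{t=1}^{T} \mathbb{E}\|\nabla f(\boldsymbol{\theta}_{t-1})\|^2 = \mathcal{O}(1/\sqrt{T})$, and because $\tau$ is drawn uniformly from $\{1, \dots, T\}$ the left-hand side is $\mathbb{E}\|\nabla f(\boldsymbol{\theta}_\tau)\|^2$, which is the claim.
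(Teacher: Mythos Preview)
Your route diverges from the paper's precisely at the step you flag as the main obstacle. The paper's ``mild assumptions'' are in fact strong enough that the clipping-bias difficulty evaporates: besides $L$-smoothness, lower-boundedness, and $\|g_t\|\le G$, the paper assumes (i) the per-block clipping factor is uniformly bounded below, $h_{t,i}\ge h_0>0$, and (ii) a pointwise relative-noise bound $\|g_t-\nabla f(\theta_t)\|\le p\,\|\nabla f(\theta_t)\|$ with $p<1$. Assumption (ii) forces $\langle \nabla_i f(\theta_t), g_{t,i}\rangle \ge (1-p^2)\|\nabla_i f(\theta_t)\|^2\ge 0$ \emph{almost surely}, so clipping---being a nonnegative rescaling---preserves the sign, and assumption (i) then yields $\mathbb{E}\langle \nabla f(\theta_t),\hat\eta_t\hat g_t\rangle \ge h_0\,\mathbb{E}\|\nabla f(\theta_t)\|_{\hat\eta_t}^2$ in one line. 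There is no good/bad-step dichotomy, no Chebyshev bound on clipping-activation probability, and no counting argument: the assumptions render $\hat g_t$ a descent direction deterministically up to the constant $h_0$. The rest of the paper's proof is an explicit preconditioner-drift decomposition---writing $\Delta_t-\tfrac{\beta_1\alpha_t}{\sqrt{\beta_2}\alpha_{t-1}}\Delta_{t-1}$ as $-(1-\beta_1)\hat\eta_t\hat g_t$ plus five residual pieces $A_t^1,\dots,A_t^5$, each $\mathcal{O}(\alpha(1-\beta_2))$ in norm---followed by a recursion $M_t\le\sqrt{\zeta}\,M_{t-1}+N_t-\tfrac{(1-\beta_1)h_0}{2}\mathbb{E}\|\nabla f(\theta_t)\|_{\hat\eta_t}^2$, a geometric sum, and a final lemma converting the weighted norm $\|\cdot\|_{\hat\eta_t}$ back to $\|\cdot\|^2$ at cost $\sqrt{G^2+\epsilon d}/\alpha_T$.

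Your plan, if it went through, would give the result under more natural bounded-variance assumptions, which is genuinely stronger than what the paper proves. But as written it has a gap: the claim that $\gamma_{t,i}$ stays bounded below by some $\underline\gamma>0$ is false in general. If $\|g_{s,i}\|$ is small for many consecutive steps, the update $\gamma_{t,i}=\beta\gamma_{t-1,i}+(1-\beta)\|\hat g_{t,i}\|$ decays like $\beta^t$, so no uniform $\underline\gamma$ exists absent further hypotheses; your ``it cannot collapse faster than the factor $\beta+(1-\beta)\lambda_{rel}$ allows'' bounds growth, not decay. You correctly anticipate that ``the precise form of the mild assumptions must be fixed'' here---the paper's fix is simply to \emph{assume} $h_{t,i}\ge h_0$, which sidesteps the EMA dynamics entirely and, combined with the relative-noise assumption, reduces the clipping analysis to a triviality.
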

Theorem \ref{theorem:1} shows that even with local clipped gradient, Adam with AdaGC can converge at the same rate as vanilla Adam~\citep{kingma2014adam}. 

In the following, we provide the necessary assumptions and lemmas for the proofs of Theorem \ref{theorem:1}.

\paragraph{Notations}  The $k$-th component of a vector $v_t$ is denoted as ${v}_{t, k}$.  Other than that, all computations that involve vectors shall be understood in the component-wise way. We say a vector $v_t \geq 0$ if every component of $v_t$ is non-negative, and $v_t \geq w_t$ if $v_{t,k} \geq w_{t,k}$ for all $k=1, 2, \ldots, d$. The $\ell_1$ norm of a vector $v_t$ is defined as $\|v_t\|_1 = \sum_{k=1}^d |{v}_{t, k}|$. The $\ell_2$ norm is defined as $\|v_t\|^2 =\langle v_t, v_t \rangle = \sum_{k=1}^d |{v}_{t,k}|^2$. Given a positive vector $\hat{\eta}_t$, it will be helpful to define the following weighted norm: $\|v_t\|^2_{\eta_t} = \langle v_t, \hat{\eta}_t v_t \rangle = \sum_{k=1}^d \hat{\eta}_{t, k}|{v}_{t, k}|^2$.

\begin{assumption}
\label{assumption_1}
The function f is lower bounded by $\underline{f}$ with L-Lipschitz gradient.
\end{assumption}

\begin{assumption}
\label{assumption_2}
The gradient estimator $g$ is unbiased with bounded norm, e.g,
\[
\begin{split}
    &\mathbb{E}[g|x_t] =\nabla f(x_t), \ \|g_t\| \leq G. 
\end{split}
\]
\end{assumption}

\begin{assumption}
\label{assumption_3}
The coefficient of clipping $h_{t,i}$ is lower bounded by some $h_0>0$.
\end{assumption}

\begin{assumption}
\label{assumption_4}
$\|g_t - \nabla f(x_t)\| \leq p \|\nabla f(x_t)\|$ holds for some $p<1$ and for all t.
\end{assumption}

\begin{remark}
    Assumption \ref{assumption_1} and Assumption \ref{assumption_2} are widely used in the proof of optimization algorithm with adaptive learning rates \citep{reddi2018convergence}. Assumption \ref{assumption_3} is because the gradient norm changes slowly when training the neural network, and the last assumption holds when the batch size is large enough.
\end{remark}

\begin{lemma}\label{lem1-001}
Let $\zeta := \beta_1^2/{\beta_2}$. We have the following estimate
\begin{equation}
m_t^2 \leq \frac{1}{(1-\zeta)(1-\beta_2)}v_t,~\forall t.
\end{equation}
\end{lemma}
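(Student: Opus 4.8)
The plan is to unroll the exponential-moving-average recursions for $m_t$ and $v_t$ into explicit geometrically weighted sums of the clipped gradients, and then to bound the square of the $m_t$-sum by the $v_t$-sum through an element-wise Cauchy--Schwarz inequality with a geometric weight.

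First I would expand the first-moment recursion. Writing $\hat g_j$ for the clipped gradient fed into the update at step $j$, and using $m_0 = 0$, a one-line induction on the recursion $m_t = \beta_1 m_{t-1} + (1-\beta_1)\hat g_t$ gives
\[
m_t = (1-\beta_1)\sum_{j=1}^{t}\beta_1^{\,t-j}\,\hat g_j,
\qquad
v_t = (1-\beta_2)\sum_{j=1}^{t}\beta_2^{\,t-j}\,\hat g_j^{\,2},
\]
where every operation is understood component-wise.

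Next, working in a fixed coordinate, I would split each summand as $\beta_1^{\,t-j}\hat g_j = \big(\beta_1/\sqrt{\beta_2}\big)^{t-j}\cdot \beta_2^{(t-j)/2}\hat g_j$ and apply Cauchy--Schwarz, obtaining
\[
m_t^{\,2} \;\le\; (1-\beta_1)^2\Big(\sum_{j=1}^{t}\zeta^{\,t-j}\Big)\Big(\sum_{j=1}^{t}\beta_2^{\,t-j}\hat g_j^{\,2}\Big)
\]
with $\zeta = \beta_1^2/\beta_2$. Since the analysis assumes $\beta_1 < \sqrt{\beta_2}$, we have $\zeta < 1$, so $\sum_{j=1}^{t}\zeta^{\,t-j} \le 1/(1-\zeta)$; substituting $\sum_{j=1}^{t}\beta_2^{\,t-j}\hat g_j^{\,2} = v_t/(1-\beta_2)$ and using $(1-\beta_1)^2 \le 1$ yields $m_t^{\,2} \le v_t/\big((1-\zeta)(1-\beta_2)\big)$, as claimed.

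I do not expect a genuine obstacle here; the only delicate points are (i) keeping the entire argument component-wise so that the final inequality is consistent with the vector-ordering convention fixed in the appendix, and (ii) invoking $\beta_1 < \sqrt{\beta_2}$ to guarantee $\zeta < 1$ and hence summability of the geometric series. A fully inductive proof directly from the two recursions is also possible, but the unrolling argument is shorter and makes the appearance of the factor $1/(1-\zeta)$ transparent.
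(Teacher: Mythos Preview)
Your proposal is correct and follows essentially the same route as the paper: unroll both recursions into geometric sums, apply Cauchy--Schwarz with the splitting $\beta_1^{t-j}\hat g_j = (\beta_1/\sqrt{\beta_2})^{t-j}\cdot \beta_2^{(t-j)/2}\hat g_j$, sum the geometric series in $\zeta$, and discard the factor $(1-\beta_1)^2\le 1$. The only cosmetic difference is that the paper folds the $(1-\beta_2)$ factor into the Cauchy--Schwarz weights so that the second factor equals $v_t$ directly, whereas you recover $v_t/(1-\beta_2)$ afterward.
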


\begin{proof}
By the iteration formula $m_t = \beta_1 m_{t-1} + (1-\beta_1)\hat{g}_t$ and $m_0 = 0$, we have
\begin{equation*}
m = \sum_{i=1}^t \beta_1^{t-i} (1-\beta_1)\hat{g}_i.
\end{equation*}

Similarly, by $v_t = \beta_2 v_{t-1} + (1-\beta_2) \hat{g}_t^2$ and $v_0 = 0$, we have
\begin{equation*}
v_t = \sum_{i=1}^t \beta_2^{t-i}(1-\beta_2) \hat{g}_i^2
\end{equation*}
It follows by arithmetic inequality that
\begin{equation*}
\begin{split}
m_t^2 &= \left( \sum_{i=1}^t \frac{(1-\beta_1)\beta_1^{t-i}}{\sqrt{(1-\beta_2)\beta_2^{t-i}}} \sqrt{(1-\beta_2)\beta_2^{t-i}} \hat{g}_i\right)^2 \\
&\leq \left(\sum_{i=1}^t \frac{(1 - \beta_1)^2\beta_1^{2(t-i)}}{(1-\beta_2)\beta_2^{t-i}}\right)
\left(\sum_{i=1}^t (1-\beta_2)\beta_2^{t-i}\hat{g}_i^2\right) 
= \left(\sum_{i=1}^t \frac{(1 - \beta_1)^2\beta_1^{2(t-i)}}{(1-\beta_2)\beta_2^{t-i}}\right)
v_t.
\end{split}
\end{equation*}
Further, we have
\begin{equation*}\label{1-005} 
\sum_{i=1}^t \frac{(1 - \beta_1)^2\beta_1^{2(t-i)}}{(1-\beta_2)\beta_2^{t-i}}
\leq \frac{1}{1-\beta_2}\sum_{i=1}^t \left(\frac{\beta_1^2}{\beta_2}\right)^{t-i} = \frac{1}{1-\beta_2}\sum_{k=0}^{t-1} \zeta^k 
\leq \frac{1}{(1-\zeta)(1-\beta_2)}. 
\end{equation*}
The proof is completed.
\end{proof}


\medskip

\begin{lemma}\label{lem1-004}
The following estimate holds
\[
\sum_{t=1}^T\|\Delta_t\|^2 \leq \frac{\alpha^2 G^2}{\epsilon}
\]
\end{lemma}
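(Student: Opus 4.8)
The plan is to read $\Delta_t$ as the adaptive update direction produced at step $t$, namely $\Delta_t = \alpha_t \widehat{m}_t/(\sqrt{\widehat{v}_t}+\epsilon_1)$ (the gradient-dependent part of the AdamW step in Algorithm \ref{algorithm1}), and to bound $\sum_t\|\Delta_t\|^2$ termwise using two ingredients: a lower bound on the denominator coming from the Adam stabilizer, and the fact that the bias-corrected momentum $\widehat{m}_t$ inherits the gradient bound $G$ of Assumption \ref{assumption_2}. No telescoping is needed — the $1/\sqrt{T}$ step size does all the work once each term is shown to be $\mathcal{O}(\alpha_t^2)$.

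First I would establish $\|\widehat{m}_t\|\le G$. Clipping is a contraction: since $h_{t,i}\in(0,1]$ we have $\|\widehat{g}_{t,i}\|\le\|g_{t,i}\|$, hence $\|\widehat{g}_t\|\le\|g_t\|\le G$. Unrolling the recursion $m_t=\beta_1 m_{t-1}+(1-\beta_1)\widehat{g}_t$ exactly as in the proof of Lemma \ref{lem1-001} gives $m_t=(1-\beta_1)\sum_{i=1}^t\beta_1^{t-i}\widehat{g}_i$, and since the weights satisfy $(1-\beta_1)\sum_{i=1}^t\beta_1^{t-i}=1-\beta_1^t$, the triangle inequality yields $\|m_t\|\le(1-\beta_1^t)G$, so $\|\widehat{m}_t\|=\|m_t\|/(1-\beta_1^t)\le G$. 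Next, because $\widehat{v}_t\ge 0$ componentwise, $(\sqrt{\widehat{v}_{t,k}}+\epsilon_1)^2\ge\epsilon_1^2$, so componentwise $\Delta_{t,k}^2\le\alpha_t^2\widehat{m}_{t,k}^2/\epsilon_1^2$; summing over the coordinates, $\|\Delta_t\|^2\le\alpha_t^2\|\widehat{m}_t\|^2/\epsilon_1^2\le\alpha_t^2 G^2/\epsilon_1^2$. Matching this to the statement just amounts to identifying the symbol $\epsilon$ in the lemma with $\epsilon_1^2$; if instead $\Delta_t$ is written with the stabilizer under the square root, the same step gives $\|\Delta_t\|^2\le\alpha_t^2 G^2/\epsilon$ directly.

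Finally I would sum over $t=1,\dots,T$. With the schedule $\alpha_t\equiv\alpha/\sqrt{T}$ fixed in Theorem \ref{theorem:1}, $\sum_{t=1}^T\alpha_t^2=T\cdot\alpha^2/T=\alpha^2$, which gives $\sum_{t=1}^T\|\Delta_t\|^2\le\alpha^2 G^2/\epsilon$, as claimed. The argument is essentially routine; the only step requiring a little care is $\|\widehat{m}_t\|\le G$, which hinges on clipping being norm-nonincreasing and on the EMA weights being (sub)normalized — both already built into the setup — together with checking that the intended $\Delta_t$ really is the adaptive part of the step and not $\theta_{t-1}-\theta_t$ (the latter would additionally carry the decoupled weight-decay term $\alpha_t\lambda_w\theta_{t-1}$ and would need a separate bound on $\|\theta_{t-1}\|$).
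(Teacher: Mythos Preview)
Your proof is correct and follows essentially the same approach as the paper's: bound the momentum by $G$ (via the convex-combination structure of the EMA and the fact that clipping is norm-nonincreasing), lower-bound the denominator by the stabilizer, and sum the step sizes $\alpha_t^2=\alpha^2/T$. The only minor discrepancy is that the paper's convergence analysis defines $\Delta_t=-\alpha_t m_t/(\sqrt{v_t}+\epsilon)$ with the \emph{un}corrected moments (rather than $\widehat{m}_t,\widehat{v}_t$ from Algorithm~\ref{algorithm1}), but your argument already covers that case since you show $\|m_t\|\le(1-\beta_1^t)G\le G$ en route to bounding $\|\widehat{m}_t\|$.
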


\begin{proof}
By using the definition of $m_{t}$, it holds $\|m_t\|^2\leq G^2$. 

Then,  $\|\Delta_t\|^2 = \|\frac{\alpha_tm_t}{\sqrt{v_t}+\epsilon}\|^2\leq \frac{G^2}{\epsilon}\alpha_t^2$ by using the definition of $ \Delta_{t}$. 

Therefore, $\sum_{t=1}^T\|\Delta_t\|^2 \leq \frac{G^2}{\epsilon}\sum_{t=1}^T\frac{\alpha^2}{T} = \frac{G^2\alpha^2}{\epsilon}$.

\end{proof}

\medskip

\begin{lemma}\label{lem1-005}
With the Assumption \ref{assumption_3} and \ref{assumption_4}, it holds that
\[
\mathbb{E} \left\langle \nabla f\left(\theta_t\right), \hat{\eta}_t\hat{g}_t \right\rangle \geq h_0\mathbb{E}  \left\|\nabla f\left(\theta_t\right)\right\|_{\hat{\eta}_t}^2. 
\]
\end{lemma}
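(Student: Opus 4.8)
The plan is to prove Lemma~\ref{lem1-005} by conditioning on the filtration up to step $t$ and exploiting the structure of the clipped gradient $\hat{g}_t = h_t g_t$ together with the fact that $g_t$ is an unbiased estimator of $\nabla f(\theta_t)$. First I would fix $t$ and work coordinatewise: since $\hat{\eta}_t$ is a positive diagonal scaling, it suffices to lower bound $\mathbb{E}\langle \nabla f(\theta_t), \hat{\eta}_t \hat{g}_t\rangle = \sum_k \hat{\eta}_{t,k}\,\mathbb{E}[\nabla_k f(\theta_t)\,\hat{g}_{t,k}]$. Note that here one has to be slightly careful, because $\hat{\eta}_t$ depends on $v_t$ hence on $\hat{g}_t$ — but in the convergence analysis this is handled by the standard trick of replacing $\hat{\eta}_t$ by a surrogate $\hat{\eta}_{t-1}$-measurable quantity, or by treating $\hat{\eta}_t$ as measurable w.r.t.\ the conditioning used here; I will assume whichever convention the paper adopts so that $\hat{\eta}_t$ can be pulled out of the conditional expectation.

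The core estimate is the following: write $\hat{g}_t = h_t g_t$ where $h_t = h_{t,i} \in (0,1]$ is the clipping coefficient (scalar per parameter block). Then
\[
\mathbb{E}\big[\langle \nabla f(\theta_t), \hat{\eta}_t \hat{g}_t\rangle \,\big|\, \theta_t\big]
= \mathbb{E}\big[h_t\, \langle \nabla f(\theta_t), \hat{\eta}_t g_t\rangle \,\big|\, \theta_t\big].
\]
Decompose $g_t = \nabla f(\theta_t) + (g_t - \nabla f(\theta_t))$, so that $\langle \nabla f(\theta_t), \hat{\eta}_t g_t\rangle = \|\nabla f(\theta_t)\|_{\hat{\eta}_t}^2 + \langle \nabla f(\theta_t), \hat{\eta}_t (g_t - \nabla f(\theta_t))\rangle$. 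By Assumption~\ref{assumption_4}, the error term is bounded in absolute value by $\|\nabla f(\theta_t)\|_{\hat{\eta}_t}\,\|g_t-\nabla f(\theta_t)\|_{\hat{\eta}_t} \le p\,\|\nabla f(\theta_t)\|_{\hat{\eta}_t}^2$ (using Cauchy--Schwarz in the $\hat{\eta}_t$-weighted inner product and the coordinatewise bound from Assumption~\ref{assumption_4}), so $\langle \nabla f(\theta_t), \hat{\eta}_t g_t\rangle \ge (1-p)\|\nabla f(\theta_t)\|_{\hat{\eta}_t}^2 \ge 0$. Since this quantity is nonnegative and $h_t \ge h_0 > 0$ by Assumption~\ref{assumption_3}, we get $h_t \langle \nabla f(\theta_t), \hat{\eta}_t g_t\rangle \ge h_0 (1-p)\|\nabla f(\theta_t)\|_{\hat{\eta}_t}^2$; taking expectations and then noting $1-p \le 1$... wait, that goes the wrong way, so I would instead keep the $(1-p)$ factor or, more likely, observe that the paper's intended statement absorbs $(1-p)$ into $h_0$ or states the bound with the combined constant — I would match the paper's exact constant, but the mechanism is: clipping coefficient bounded below $\Rightarrow$ can factor out $h_0$; unbiasedness plus the variance-domination Assumption~\ref{assumption_4} $\Rightarrow$ the residual inner product stays nonnegative so multiplying by $h_0 \le h_t$ only decreases it.

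The main obstacle I anticipate is the measurability/correlation issue between $h_t$ and $g_t$: the clipping coefficient $h_t = \min\{\lambda_{rel}\gamma_{t-1,i}/\|g_{t,i}\|, 1\}$ is itself a function of the current stochastic gradient $g_t$, so $h_t$ and $g_t$ are \emph{not} independent, and one cannot naively write $\mathbb{E}[h_t \langle \nabla f, \hat\eta_t g_t\rangle] = \mathbb{E}[h_t]\,\mathbb{E}[\langle\nabla f,\hat\eta_t g_t\rangle]$. The resolution — which is exactly why Assumption~\ref{assumption_4} is imposed — is that when $p<1$ the \emph{random} inner product $\langle \nabla f(\theta_t), \hat{\eta}_t g_t\rangle$ is nonnegative \emph{almost surely} (not just in expectation), since for every realization $\langle \nabla f, \hat\eta_t g_t\rangle \ge (1-p)\|\nabla f\|_{\hat\eta_t}^2 \ge 0$; hence multiplying by the random scalar $h_t \ge h_0$ and taking expectations gives the bound without needing independence. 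I would make this almost-sure nonnegativity the pivotal step, then close with Assumption~\ref{assumption_3} and linearity of expectation.
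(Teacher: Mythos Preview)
Your approach is essentially the paper's: establish almost-sure nonnegativity of the per-block inner product $\langle \nabla_i f(\theta_t), g_{t,i}\rangle$ via Assumption~\ref{assumption_4}, then use $h_{t,i}\ge h_0$ from Assumption~\ref{assumption_3} to factor out $h_0$. The paper obtains the nonnegativity via the polarization identity $\langle a,b\rangle = \tfrac12(\|a\|^2+\|b\|^2-\|a-b\|^2)$ rather than Cauchy--Schwarz, but that is cosmetic; you correctly diagnose that the whole point of Assumption~\ref{assumption_4} is to make the inner product nonnegative \emph{realization-by-realization}, which neutralizes the $h_t$--$g_t$ correlation.

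The one step you are missing, which explains your stray $(1-p)$ factor, is this: after replacing $h_{t,i}$ by $h_0$ you are left with $h_0\,\mathbb{E}\langle \nabla f(\theta_t), \hat{\eta}_t g_t\rangle$, and here you should \emph{not} reuse the almost-sure lower bound. Instead, invoke the unbiasedness of $g_t$ (Assumption~\ref{assumption_2}) together with the $\mathcal{F}_{t-1}$-measurability of $\hat{\eta}_t$ --- recall $\hat{\eta}_t = \alpha_t/\sqrt{\hat v_t+\epsilon}$ is built from $v_{t-1}$ and $\delta_t^2 = \mathbb{E}_t[\hat{g}_t^2]$, so the ``surrogate'' you anticipated is exactly the paper's $\hat{\eta}_t$ --- to get $\mathbb{E}\langle \nabla f(\theta_t), \hat{\eta}_t g_t\rangle = \mathbb{E}\|\nabla f(\theta_t)\|_{\hat{\eta}_t}^2$ on the nose. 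The almost-sure nonnegativity is needed \emph{only} to justify $h_{t,i}\langle\cdot,\cdot\rangle \ge h_0\langle\cdot,\cdot\rangle$; once $h_0$ is outside, the unclipped $g_t$ reappears and its unbiasedness delivers the constant $h_0$ with no $(1-p)$ loss. This is precisely what the paper's last displayed line does.
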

\begin{proof}
    According to Assumption \ref{assumption_4}, it holds that
    \[
    \begin{split}
    \left\langle \nabla_i f\left(\theta_t\right), g_{t,i}\right\rangle &= -\frac{1}{2} \left(\left\|\nabla_i f\left(\theta_t\right) - g_{t,i}\right\|^2 - \left\|\nabla_i f\left(\theta_t\right)\right\|^2 - \left\|g_{t,i}\right\|^2\right)\\
    &\geq (1-p^2) \left\|\nabla_i f\left(\theta_t\right)\right\|^2 \geq 0.    
    \end{split}
    \]

    Thus, it holds that
    \[
    \begin{split}
    \mathbb{E} \left[\left\langle \nabla f(x_t), \hat{\eta}_t \hat{g}_t \right\rangle \right]&= \mathbb{E} \left[\sum_i \left\langle \nabla_i f(\theta_t), h_{t,i} \hat{\eta}_{t,i}g_{t,i} \right\rangle\right]\\
    &\geq  h_0 \mathbb{E}\left[ \sum_i \left\langle \nabla_i f(x_t), h_{t,i} \hat{\eta}_{t,i}g_{t,i} \right\rangle\right]\\
    &= h_0 \mathbb{E} \left\langle \nabla f(\theta_t), \hat{\eta}_t g_t \right\rangle = h_0 \mathbb{E} \|\nabla f(\theta_t)\|^2_{\hat{\eta}_t}.
    \end{split}
    \]
\end{proof}

\medskip

Let $\Delta_t := \theta_{t+1} - \theta_t = -\alpha_t m_t/(\sqrt{v_t}+\epsilon)$. Let $\hat{v}_t = \beta_2 v_{t-1} + (1-\beta_2) \delta_t^2$, where $\delta_t^2 = \mathbb{E}_t \left[\hat{g}_t^2\right]$ and let $\hat{\eta}_t = \alpha_t/\sqrt{\hat{v}_t + \epsilon}$.

\begin{lemma}\label{lem1-006}
Let $M_t = \mathbb{E} \left[\langle \nabla f(\theta_{t}), \Delta_{t}\rangle + L\|\Delta_{t}\|^2\right]$. Let $\alpha_t = \alpha/\sqrt T$ and $\beta_2 = 1-\beta/T$. Then, for $T\geq 1$ we have
\begin{equation}\label{1-037}
\begin{split}
\sum_{t=1}^T M_t 
\leq \frac{C_2}{1-\sqrt{\zeta}} + \frac{LG^2\alpha^2}{(1-\sqrt{\zeta})\epsilon} - \frac{(1-\beta_1)h_0}{2}\sum_{t=1}^T\mathbb{E} \|\nabla f(\theta_t)\|_{\hat{\eta}_t}^2,
\end{split}
\end{equation}
where $C_2 = \frac{5}{2(1-\beta_1)h_0 } \left((1-\beta_1)^2 \frac{4\alpha \beta G^4}{\epsilon^3} + \beta_1^2 \alpha\beta \left(\frac{G^4}{\beta_2 \epsilon^3} + \frac{(1+\epsilon) G^2}{(1-\zeta) \epsilon \beta_2} + \frac{G^4}{\beta_2}\right)\right).$
\end{lemma}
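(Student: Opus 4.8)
The plan is to establish the bound on $\sum_{t=1}^T M_t$ via a standard descent-lemma argument adapted to handle the mismatch between the "online" preconditioner $\hat{\eta}_t = \alpha_t/\sqrt{\hat{v}_t + \epsilon}$ (built from the conditional second-moment surrogate $\delta_t^2$) and the algorithm's actual preconditioner $\alpha_t/(\sqrt{v_t}+\epsilon)$. First I would decompose $M_t = \mathbb{E}[\langle \nabla f(\theta_t), \Delta_t\rangle + L\|\Delta_t\|^2]$ by writing $\Delta_t = -\hat{\eta}_t \odot m_t + (\hat{\eta}_t \odot m_t + \Delta_t)$, so that $\langle \nabla f(\theta_t),\Delta_t\rangle = -\langle \nabla f(\theta_t), \hat{\eta}_t m_t\rangle + \langle \nabla f(\theta_t), \hat{\eta}_t m_t + \Delta_t\rangle$. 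The first term is the "good" term: unrolling $m_t = \sum_{j\le t}(1-\beta_1)\beta_1^{t-j}\hat{g}_j$ and using a telescoping/Abel-summation argument together with Lemma~\ref{lem1-005} (which gives $\mathbb{E}\langle\nabla f(\theta_t),\hat{\eta}_t \hat{g}_t\rangle \ge h_0\mathbb{E}\|\nabla f(\theta_t)\|^2_{\hat{\eta}_t}$) should produce the negative drift term $-\tfrac{(1-\beta_1)h_0}{2}\sum_t \mathbb{E}\|\nabla f(\theta_t)\|^2_{\hat{\eta}_t}$, with the geometric factor $\tfrac{1}{1-\sqrt\zeta}$ arising from summing $\beta_1^{t-j}$-type weights against the slowly-varying $\hat{\eta}_t$ (using $\hat\eta_{j}\le \hat\eta_t/\sqrt{\zeta}^{\,t-j}$-style comparisons since $\beta_1<\sqrt{\beta_2}$).

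Next I would bound the two error terms. For $\langle \nabla f(\theta_t), \hat{\eta}_t m_t + \Delta_t\rangle$, I would write $\hat\eta_t m_t + \Delta_t = \alpha_t m_t\big(\tfrac{1}{\sqrt{\hat v_t+\epsilon}} - \tfrac{1}{\sqrt{v_t}+\epsilon}\big)$ and control the preconditioner difference: since $\hat v_t$ and $v_t$ share the $\beta_2 v_{t-1}$ part and differ only in the $(1-\beta_2)$-weighted last term ($\delta_t^2$ vs.\ $\hat g_t^2$, both bounded by $G^2$ in view of Assumption~\ref{assumption_2}), this difference is $\mathcal{O}(1-\beta_2) = \mathcal{O}(\beta/T)$ in an appropriate sense. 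Summing over $t$, multiplying by $T$, and using $\|\nabla f\|\le G$ and $\|m_t\|\le G$, the contribution is $\mathcal{O}(\alpha\beta)$ — this is where the $C_2$ constant with its $G^4/\epsilon^3$, $(1+\epsilon)G^2/((1-\zeta)\epsilon\beta_2)$ terms comes from, after applying Young's inequality to split a cross term into a small multiple of $\sum_t\mathbb{E}\|\nabla f(\theta_t)\|^2_{\hat\eta_t}$ (absorbed into the negative drift, explaining the $\tfrac{5}{2(1-\beta_1)h_0}$ prefactor) plus the explicit constants. For the curvature term $L\sum_t\mathbb{E}\|\Delta_t\|^2$, I would directly invoke Lemma~\ref{lem1-004}, giving $L\alpha^2 G^2/\epsilon$, which after the same geometric bookkeeping becomes the $\tfrac{LG^2\alpha^2}{(1-\sqrt\zeta)\epsilon}$ term.

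Finally I would collect terms: the good term yields $-\tfrac{(1-\beta_1)h_0}{2}\sum_t\mathbb{E}\|\nabla f(\theta_t)\|^2_{\hat\eta_t}$, the preconditioner-mismatch error yields $\tfrac{C_2}{1-\sqrt\zeta}$ (after absorbing its Young's-inequality residual into the drift, which is why only half of the $(1-\beta_1)h_0$ coefficient survives), and the curvature term yields $\tfrac{LG^2\alpha^2}{(1-\sqrt\zeta)\epsilon}$, matching \eqref{1-037}. The main obstacle I anticipate is the careful handling of the momentum coupling: because $m_t$ aggregates clipped gradients $\hat g_j$ across many past steps while the descent inequality at step $t$ only naturally interacts with $\nabla f(\theta_t)$, one must exchange the order of summation over $t$ and $j$ and repeatedly exploit $\beta_1 < \sqrt{\beta_2}$ (equivalently $\zeta<1$) to convert $\beta_1^{|t-j|}$-weighted sums of $\hat\eta_j$-norms into $\hat\eta_t$-norms at a controlled cost $\tfrac{1}{1-\sqrt\zeta}$; getting the constants to line up exactly with $C_2$ requires tracking how $\hat v_t \le G^2$ (hence $\hat\eta_{t,k}\ge \alpha_t/(\sqrt{G^2}+\epsilon)$ type lower bounds) interacts with the $\epsilon^{-3}$ factors from differentiating $x\mapsto 1/(\sqrt{x}+\epsilon)$, together with Lemma~\ref{lem1-001} to relate $\|m_t\|^2$ back to $v_t$ where needed.
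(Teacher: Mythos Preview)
Your decomposition $\Delta_t = -\hat\eta_t m_t + (\hat\eta_t m_t + \Delta_t)$ followed by unrolling $m_t$ is a genuinely different route from the paper, and the sketch has a gap at the momentum step. After unrolling you face terms $\langle \nabla f(\theta_t),\hat\eta_t\hat g_j\rangle$ with $j<t$; Lemma~\ref{lem1-005} says nothing about these, and the device you propose to repair this --- a pointwise comparison of the form $\hat\eta_j \le \hat\eta_t/\sqrt{\zeta}^{\,t-j}$ --- does not hold. The preconditioner $\hat\eta_t$ is built from $\hat v_t = \beta_2 v_{t-1} + (1-\beta_2)\delta_t^2$, which involves only $\beta_2$ and the data; there is no mechanism that inserts $\beta_1$ (and hence $\zeta=\beta_1^2/\beta_2$) into a ratio $\hat\eta_j/\hat\eta_t$. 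At best one has $\hat v_t \ge \beta_2^{t-j}v_j$, which gives a $\sqrt{\beta_2}$ rather than $\sqrt\zeta$ factor, and even that compares $\hat v_t$ to $v_j$ rather than $\hat v_j$. So the geometric bookkeeping you rely on to produce the $1/(1-\sqrt\zeta)$ factor is not justified, and the cross-time inner products remain uncontrolled.

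The paper avoids this entirely by a one-step recursion. It writes
\[
\Delta_t \;-\; \frac{\beta_1\alpha_t}{\sqrt{\beta_2}\,\alpha_{t-1}}\,\Delta_{t-1}
\;=\; -(1-\beta_1)\,\hat\eta_t\hat g_t \;+\; \text{(five explicit error terms }A_t^1,\dots,A_t^5\text{)},
\]
where each $A_t^i$ carries a factor $(1-\beta_2)=\beta/T$. Taking the inner product with $\nabla f(\theta_t)$, using $L$-smoothness for $\langle \nabla f(\theta_t)-\nabla f(\theta_{t-1}),\Delta_{t-1}\rangle$, applying Lemma~\ref{lem1-005} only to the single current-step term, and Young's inequality (with coefficient $\tfrac{h_0(1-\beta_1)}{10}$) to each $A_t^i$, one obtains directly
\[
M_t \;\le\; \sqrt{\zeta}\,M_{t-1} \;+\; N_t \;-\; \tfrac{(1-\beta_1)h_0}{2}\,\mathbb{E}\|\nabla f(\theta_t)\|_{\hat\eta_t}^2,
\qquad N_t = \tfrac{C_2}{T} + L\,\mathbb{E}\|\Delta_t\|^2.
\]
The factor $1/(1-\sqrt\zeta)$ then comes from summing this scalar recursion, and $\sum_t N_t$ is bounded by $C_2$ plus Lemma~\ref{lem1-004}. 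The point is that the $\sqrt\zeta$ arises because $m_t$ decays like $\beta_1$ while $\sqrt{v_t}$ decays like $\sqrt{\beta_2}$ across one step of the \emph{actual} update $\Delta_t$, not from any multi-step comparison of $\hat\eta_t$'s. Your preconditioner-mismatch and curvature pieces are fine; the missing ingredient is this recursive identity in place of the full unrolling.
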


\begin{proof}
To split $M_t$, firstly we introduce the following two equalities. Using the definitions of $v_{t}$ and $\hat{v}_{t}$, we obtain
\[
   \begin{split}
       \frac{\left(1-\beta_1\right)\alpha_t \hat{g}_t}{\sqrt{v_t}+\epsilon} &= \frac{\left(1-\beta_1\right)\alpha_t \hat{g}_t}{\sqrt{\hat{v}_t}+\epsilon} + \left(1-\beta_1\right)\alpha_t\hat{g}_t\left(\frac{1}{\sqrt{v_t}+\epsilon} - \frac{1}{\sqrt{\hat{v}_t}+\epsilon}\right)\\
       & = \left(1-\beta_1\right)\hat{\eta}_t\hat{g}_t + \left(1-\beta_1\right)\alpha_t\hat{g}_t\frac{\left(1-\beta_2\right)\left(\sigma_t^2-\hat{g}_t^2\right)}{\left(\sqrt{v_t}+\epsilon\right)\left(\sqrt{\hat{v}_t}+\epsilon\right)\left(\sqrt{v_t} + \sqrt{\hat{v}_t}\right)}\\
       & = \left(1-\beta_1\right)\hat{\eta}_t\hat{g}_t + \left(1-\beta_1\right)\hat{\eta}_t\hat{g}_t\frac{\left(1-\beta_2\right)\left(\sigma_t^2-\hat{g}_t^2\right)}{\left(\sqrt{v_t}+\epsilon\right)\left(\sqrt{v_t} + \sqrt{\hat{v}_t}\right)}
   \end{split}
\]

In addition, we can obtain:
\[ 
\begin{split}
& \beta_1\alpha_tm_{t-1}\left(\frac{1}{\sqrt{\beta_2 v_{t-1}}+\sqrt{\beta_2} \epsilon} - \frac{1}{\sqrt{v_t}+\epsilon}\right) \\
&= \beta_1\alpha_tm_{t-1}\frac{\left(1-\beta_2\right)\hat{g}_t^2}{\left(\sqrt{v_t}+\epsilon\right)\left(\sqrt{\beta_2 v_{t-1}} + \sqrt{\beta_2}\epsilon\right)\left(\sqrt{v_t} + \sqrt{\beta_2 v_{t-1}}\right)}
+ \beta_1 \alpha_t m_{t-1} \frac{\left(1-\sqrt{\beta_2}\right)\epsilon}{\left(\sqrt{v_t} + \epsilon\right)\left(\sqrt{\beta_2v_{t-1}} + \sqrt{\beta_2}\epsilon\right)}\\
& = \beta_1\alpha_tm_{t-1}\frac{\left(1-\beta_2\right)\hat{g}_t^2}{\left(\sqrt{\hat{v}_t}+\epsilon\right)\left(\sqrt{\beta_2 v_{t-1}} + \sqrt{\beta_2}\epsilon\right)\left(\sqrt{v_t} + \sqrt{\beta_2 v_{t-1}}\right)} 
\\
&\qquad + \beta_1\alpha_tm_{t-1}\frac{\left(1-\beta_2\right)\hat{g}_t^2}{\left(\sqrt{\beta_2 v_{t-1}} + \sqrt{\beta_2}\epsilon\right)\left(\sqrt{v_t} + \sqrt{\beta_2 v_{t-1}}\right)}\left(\frac{1}{\sqrt{\hat{v}_t} + \epsilon}- \frac{1}{\sqrt{v_t} + \epsilon}\right)\\
&\qquad + \beta_1 \alpha_t m_{t-1} \frac{\left(1-\sqrt{\beta_2}\right)\epsilon}{\left(\sqrt{\hat{v}_t} + \epsilon\right)\left(\sqrt{\beta_2v_{t-1}} + \sqrt{\beta_2}\epsilon\right)} + \beta_1 \alpha_t m_{t-1} \frac{\left(1-\sqrt{\beta_2}\right)\epsilon}{\sqrt{\beta_2v_{t-1}} + \sqrt{\beta_2}\epsilon}\left(\frac{1}{\sqrt{\hat{v}_t} + \epsilon}- \frac{1}{\sqrt{v_t} + \epsilon}\right)\\
& = \beta_1m_{t-1}\hat{\eta}_t\frac{\left(1-\beta_2\right)\hat{g}_t^2}{\left(\sqrt{\beta_2 v_{t-1}} + \sqrt{\beta_2}\epsilon\right)\left(\sqrt{v_t} + \sqrt{\beta_2 v_{t-1}}\right)}\\
& \qquad + \beta_1\hat{\eta}_tm_{t-1}\frac{(1-\beta_2)^2\hat{g}_t^2(\sigma_t^2 -\hat{g}_t^2)}{(\sqrt{v_t}+\epsilon)(\sqrt{v_t} + \sqrt{\hat{v}_t})(\sqrt{\beta_2v_{t-1}} +\sqrt{\beta_2}\epsilon)(\sqrt{v_t} + \sqrt{\beta_2v_{t-1}})}
\\&\qquad  + \beta_1 \hat{\eta}_t  m_{t-1}\frac{(1-\sqrt{\beta_2})\epsilon}{\sqrt{\beta_2 v_{t-1}} + \sqrt{\beta_2}\epsilon} + \beta_1\hat{\eta}_tm_{t-1}\frac{(1-\sqrt{\beta_2})(1-\beta_2)\epsilon(\sigma_t^2 - \hat{g}_t^2)}{(\sqrt{v_t} + \epsilon)(\sqrt{v_t} + \sqrt{\hat{v}_t})(\sqrt{\beta_2v_{t-1}} + \sqrt{\beta_2}\epsilon)} .
\end{split}
\]

For simplicity, we denote 
\[
\begin{split}
    &A_t^1 = \left(1-\beta_1\right)\sqrt{\hat{\eta}_t}\hat{g}_t\frac{\left(1-\beta_2\right)\left(\sigma_t^2-\hat{g}_t^2\right)}{\left(\sqrt{v_t}+\epsilon\right)\left(\sqrt{v_t} + \sqrt{\hat{v}_t}\right)}\\
    &A_t^2 = \beta_1m_{t-1}\sqrt{\hat{\eta}_t}\frac{\left(1-\beta_2\right)\hat{g}_t^2}{\left(\sqrt{\beta_2 v_{t-1}} + \sqrt{\beta_2}\epsilon\right)\left(\sqrt{v_t} + \sqrt{\beta_2 v_{t-1}}\right)}\\
    &A_t^3 = \beta_1\sqrt{\hat{\eta}_t}m_{t-1}\frac{(1-\beta_2)^2\hat{g}_t^2(\sigma_t^2 -\hat{g}_t^2)}{(\sqrt{v_t}+\epsilon)(\sqrt{v_t} + \sqrt{\hat{v}_t})(\sqrt{\beta_2v_{t-1}} +\sqrt{\beta_2}\epsilon)(\sqrt{v_t} + \sqrt{\beta_2v_{t-1}})}\\
    &A_t^4 = \beta_1 \sqrt{\hat{\eta}_t} m_{t-1}\frac{(1-\sqrt{\beta_2})\epsilon}{\sqrt{\beta_2 v_{t-1}} + \sqrt{\beta_2}\epsilon}\\
    &A_t^5 = \beta_1\sqrt{\hat{\eta}_t}m_{t-1}\frac{(1-\sqrt{\beta_2})(1-\beta_2)\epsilon(\sigma_t^2 - \hat{g}_t^2)}{(\sqrt{v_t} + \epsilon)(\sqrt{v_t} + \sqrt{\hat{v}_t})(\sqrt{\beta_2v_{t-1}} + \sqrt{\beta_2}\epsilon)}\\
\end{split}
\]

Then, we obtain
\[
\begin{split}
    \Delta_t - \frac{\beta_1 \alpha_t}{\sqrt{\beta_2}\alpha_{t-1}}\Delta_{t-1} &= -\frac{\alpha_tm_t}{\sqrt{v_t} +\epsilon} + \frac{\beta_1\alpha_t m_{t-1}}{\sqrt{\beta_2 v_{t-1} + \sqrt{\beta_2}\epsilon}}\\
     &= -\frac{(1-\beta_1) \alpha_t \hat{g}_t}{\sqrt{v_t} + \epsilon} + \beta_1 \alpha_t m_{t-1} \left(\frac{1}{\sqrt{\beta_2v_{t-1}}+\sqrt{\beta_2}\epsilon} - \frac{1}{\sqrt{v_t} + \epsilon}\right)\\
     & = -(1-\beta_1)\hat{\eta}_{t} \hat{g}_t -\sqrt{\hat{\eta}_t}A_{t}^1 +\sqrt{\hat{\eta}_t} A_t^2+\sqrt{\hat{\eta}_t} A_t^3+ \sqrt{\hat{\eta}_t}A_t^4+\sqrt{\hat{\eta}_t} A_t^5
\end{split}
\]

Thus, it holds that
\begin{equation}
\label{lemma6-eq1}
\begin{split}
    \mathbb{E} \left\langle \nabla f(\theta_t),\Delta_t\right\rangle & = \frac{\beta_1\alpha_t}{\sqrt{\beta_2}\alpha_{t-1}} \left\langle \nabla f(\theta_t),\Delta_{t-1}\right\rangle + \mathbb{E} \left\langle \nabla f(\theta_t), \Delta_t - \frac{\beta_1\alpha_t}{\sqrt{\beta_2}\alpha_{t-1}}\Delta_{t-1}\right\rangle\\
    &=\frac{\beta_1\alpha_t}{\sqrt{\beta_2}\alpha_{t-1}}\left(\mathbb{E}\langle \nabla f(\theta_t), \Delta_{t-1}\rangle + \mathbb{E}\langle \nabla f(\theta_{t}) - \nabla f(\theta_{t-1}),\Delta_{t-1}\rangle \right)\\
    &\qquad  + \mathbb{E} \langle \nabla f(\theta_t), -(1-\beta_1) \hat{\eta}_t \hat{g}_t\rangle + \mathbb{E} \langle \nabla f(\theta_t), -\sqrt{\hat{\eta}_t}A_t^1\rangle+ \mathbb{E} \langle \nabla f(\theta_t), \sqrt{\hat{\eta}_t}A_t^2\rangle\\
    &\qquad + \mathbb{E} \langle \nabla f(\theta_t), \sqrt{\hat{\eta}_t}A_t^3\rangle + \mathbb{E} \langle \nabla f(\theta_t), \sqrt{\hat{\eta}_t}A_t^4\rangle+ \mathbb{E} \langle \nabla f(\theta_t), \sqrt{\hat{\eta}_t}A_t^5\rangle 
\end{split}
\end{equation}

For the first term of \eqref{lemma6-eq1}, it holds that
\[
\begin{split}
    &\frac{\beta_1\alpha_t}{\sqrt{\beta_2}\alpha_{t-1}}\left(\mathbb{E}\langle \nabla f(\theta_t), \Delta_{t-1}\rangle + \mathbb{E}\langle \nabla f(\theta_{t}) - \nabla f(\theta_{t-1}),\Delta_{t-1}\rangle \right)\\
    &\leq \frac{\beta_1\alpha_t}{\sqrt{\beta_2}\alpha_{t-1}}\left(\mathbb{E}\langle \nabla f(\theta_t), \Delta_{t-1}\rangle + \mathbb{E}\|\nabla f(\theta_{t}) - \nabla f(\theta_{t-1})\|\|\Delta_{t-1}\| \right)\\
    &\leq \frac{\beta_1\alpha_t}{\sqrt{\beta_2}\alpha_{t-1}}\left(\mathbb{E}\langle \nabla f(\theta_t), \Delta_{t-1}\rangle + L\mathbb{E} \|\Delta_{t-1}\|^2 \right)\\
    & = \frac{\beta_1\alpha_t}{\sqrt{\beta_2}\alpha_{t-1}}M_{t-1}
\end{split}
\]

For the second term of \eqref{lemma6-eq1}, it holds that
\[
\mathbb{E} \langle \nabla f(\theta_t), -(1-\beta_1) \hat{\eta}_t \hat{g}_t\rangle \leq -(1-\beta_1) h_0 \mathbb{E} \|\nabla f(\theta_t)\|_{\hat{\eta}_t}^2.
\]

For the rest of the terms, it holds that
\[
\begin{split}
    \mathbb{E} \langle \nabla f(\theta_t), -\sqrt{\hat{\eta}_t} A_t^1 \rangle \leq \frac{h_0(1-\beta_1)}{10}\mathbb{E} \|\nabla f(\theta_t)\|_{\hat{\eta}_t}^2 + \frac{5}{2(1-\beta_1)h_0}\left\|A_t^1\right\|^2\\
    \mathbb{E} \langle \nabla f(\theta_t), +\sqrt{\hat{\eta}_t} A_t^2 \rangle \leq \frac{h_0(1-\beta_1)}{10}\mathbb{E} \|\nabla f(\theta_t)\|_{\hat{\eta}_t}^2 + \frac{5}{2(1-\beta_1)h_0}\left\|A_t^2\right\|^2\\
    \mathbb{E} \langle \nabla f(\theta_t), +\sqrt{\hat{\eta}_t} A_t^3 \rangle \leq \frac{h_0(1-\beta_1)}{10}\mathbb{E} \|\nabla f(\theta_t)\|_{\hat{\eta}_t}^2 + \frac{5}{2(1-\beta_1)h_0}\left\|A_t^3\right\|^2\\
    \mathbb{E} \langle \nabla f(\theta_t), +\sqrt{\hat{\eta}_t} A_t^4 \rangle \leq \frac{h_0(1-\beta_1)}{10}\mathbb{E} \|\nabla f(\theta_t)\|_{\hat{\eta}_t}^2 + \frac{5}{2(1-\beta_1)h_0}\left\|A_t^4\right\|^2\\
    \mathbb{E} \langle \nabla f(\theta_t), +\sqrt{\hat{\eta}_t} A_t^5 \rangle \leq \frac{h_0(1-\beta_1)}{10}\mathbb{E} \|\nabla f(\theta_t)\|_{\hat{\eta}_t}^2 + \frac{5}{2(1-\beta_1)h_0}\left\|A_t^5\right\|^2\\
\end{split}
    \]

On the other hand, it holds that
\[
\begin{split}
&\left\|A_t^1\right\|^2 \leq (1-\beta_1)^2 \frac{4\alpha\beta G^4}{T \epsilon^3}, 
\left\|A_t^2\right\|^2 \leq \beta_1^2 \frac{\alpha \beta G^4}{T\beta_2 \epsilon^3}, 
\left\|A_t^3\right\|^2 \leq \beta_1^2 \frac{\alpha \beta G^2}{(1-\zeta) \epsilon T \beta_2},\\
&\left\|A_t^4\right\|^2 \leq \beta_1^2 \frac{\alpha \beta G^4}{T \beta_2}, 
\left\|A_t^5\right\|^2 \leq \beta_1^2 \frac{\alpha \beta G^2}{(1-\zeta) \beta_2  T}
\end{split}
\]
\end{proof}

Define $N_t = \frac{C_2}{T}+ L\mathbb{E}\|\Delta_t\|^2$, where $C_2 = \frac{5}{2(1-\beta_1)h_0 } \left((1-\beta_1)^2 \frac{4\alpha \beta G^4}{\epsilon^3} + \beta_1^2 \alpha\beta \left(\frac{G^4}{\beta_2 \epsilon^3} + \frac{(1+\epsilon) G^2}{(1-\zeta) \epsilon \beta_2} + \frac{G^4}{\beta_2}\right)\right)$.
It holds that
\[
    M_t \leq \frac{\beta_1\alpha_t}{\sqrt{\beta_2}\alpha_{t-1}}M_{t-1} + N_t - \frac{1-\beta_1}{2} \hat{\eta}_t \mathbb{E} \|\nabla f(\theta_t)\|^2_{\hat{\eta}_t}
    \leq \sum_{i=1}^t \sqrt{\zeta}^{t-i} N_i - \frac{1-\beta_1}{2} h_0 \mathbb{E} \|\nabla f(\theta_t)\|_{\hat{\eta}_t}^2 
\]

Thus, by summing $t$ from 1 to $T$, it holds that
\[
\begin{split}
\sum_{t=1}^T M_t &\leq \sum_{t=1}^T \sum_{i=1}^t \sqrt{\zeta}^{t-i} N_i - \frac{(1-\beta_1)h_0}{2}  \mathbb{E} \|\nabla f(\theta_t)\|_{\hat{\eta}_t}^2\\
& \leq \frac{1}{1-\sqrt{\zeta}}\sum_{t=1}^T N_t - \frac{(1-\beta_1)h_0}{2}\mathbb{E} \|\nabla f(\theta_t)\|_{\hat{\eta}_t}^2\\
&\leq \frac{C_2}{1-\sqrt{\zeta}} + \frac{LG^2\alpha^2}{(1-\sqrt{\zeta})\epsilon} - \frac{(1-\beta_1)h_0}{2}\sum_{t=1}^T\mathbb{E} \|\nabla f(\theta_t)\|_{\hat{\eta}_t}^2.
\end{split}
\]
\medskip

\begin{lemma}\label{lem1-007}
  Let $\tau$ be randomly chosen from  $\{1,2,\cdots,T\}$ with equal probabilities $p_\tau = \frac{1}{T}$. We have the following estimate:
  \[ 
  \mathbb{E}[\|\nabla f\left(\theta_\tau\right)\|^2]\leq \frac{\sqrt{G^2+\epsilon d}}{\alpha\sqrt{T}}\mathbb{E}\left[\sum_{t=1}^T\|\nabla f\left(\theta_t\right)\|_{\hat{\eta_t}}^2\right].
  \]
\end{lemma}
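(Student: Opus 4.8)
The plan is to reduce the weighted gradient norm $\|\nabla f(\theta_t)\|_{\hat{\eta}_t}^2$ to the ordinary norm $\|\nabla f(\theta_t)\|^2$ by extracting a uniform (in $t$ and in the coordinate index) lower bound on the entries of $\hat{\eta}_t$, and then to sum over $t$ and take expectations. The whole argument is a direct computation once one controls $\hat{v}_t$.

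First I would bound $\hat{v}_t$ in the $\ell_1$ sense. Since the clipping factor satisfies $h_{t,i}\le 1$, clipping never increases a gradient's norm, so by Assumption~\ref{assumption_2} we have $\|\hat{g}_t\|\le\|g_t\|\le G$, hence $\|\hat{g}_t^2\|_1=\|\hat{g}_t\|^2\le G^2$ and likewise $\|\delta_t^2\|_1=\mathbb{E}_t\|\hat{g}_t\|^2\le G^2$. Using $v_t=\beta_2 v_{t-1}+(1-\beta_2)\hat{g}_t^2$ with $v_0=0$, a one-line induction on the convex combination gives $\|v_t\|_1\le G^2$ for all $t$; applying the same reasoning to $\hat{v}_t=\beta_2 v_{t-1}+(1-\beta_2)\delta_t^2$ yields $\|\hat{v}_t\|_1\le G^2$. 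Consequently, for every coordinate $k$, $\hat{v}_{t,k}+\epsilon\le\|\hat{v}_t\|_1+\epsilon d\le G^2+\epsilon d$, and therefore each entry of $\hat{\eta}_t=\alpha_t/\sqrt{\hat{v}_t+\epsilon}$ with $\alpha_t=\alpha/\sqrt{T}$ satisfies $\hat{\eta}_{t,k}\ge \alpha/(\sqrt{T}\sqrt{G^2+\epsilon d})$.

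With this uniform lower bound, $\|\nabla f(\theta_t)\|_{\hat{\eta}_t}^2=\sum_k\hat{\eta}_{t,k}|\nabla_k f(\theta_t)|^2\ge \frac{\alpha}{\sqrt{T}\sqrt{G^2+\epsilon d}}\|\nabla f(\theta_t)\|^2$. Summing over $t=1,\dots,T$, taking expectations, and dividing by $T$ gives $\frac1T\sum_{t=1}^T\mathbb{E}\|\nabla f(\theta_t)\|^2\le \frac{\sqrt{G^2+\epsilon d}}{\alpha\sqrt{T}}\,\mathbb{E}\big[\sum_{t=1}^T\|\nabla f(\theta_t)\|_{\hat{\eta}_t}^2\big]$, and since $\tau$ is uniform on $\{1,\dots,T\}$ the left-hand side is exactly $\mathbb{E}\|\nabla f(\theta_\tau)\|^2$, which is the claim. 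The only point requiring care — and hence the main obstacle — is the chain $\hat{v}_{t,k}+\epsilon\le G^2+\epsilon d$: one must pass through the $\ell_1$ bound $\|\hat{v}_t\|_1\le G^2$ rather than the coordinate-wise bound alone, so that the per-coordinate additive $\epsilon$ correctly accumulates into $\epsilon d$, and one must check the EMA induction is valid for both $v_t$ and the smoothed surrogate $\hat{v}_t$ built from $\delta_t^2=\mathbb{E}_t[\hat{g}_t^2]$.
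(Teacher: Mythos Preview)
Your proof is correct and follows essentially the same route as the paper's: both arguments reduce to the single coordinatewise inequality $\hat{v}_{t,k}+\epsilon\le\|\hat{v}_t+\epsilon\|_1\le G^2+\epsilon d$, which yields the uniform lower bound $\hat{\eta}_{t,k}\ge \alpha/(\sqrt{T}\sqrt{G^2+\epsilon d})$ and hence the claimed comparison between the unweighted and $\hat{\eta}_t$-weighted norms. The only cosmetic difference is that the paper writes this as a multiply--divide trick $\|\nabla f(\theta_t)\|^2=\sqrt{\|\hat{v}_t+\epsilon\|_1}\cdot\|\nabla f(\theta_t)\|^2/\sqrt{\|\hat{v}_t+\epsilon\|_1}$ before invoking the same inequality, while you state the lower bound on $\hat{\eta}_{t,k}$ directly.
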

\begin{proof}
Note that $\|\hat{v}_t\|_1 = \beta_2 \|v_{t-1}\|_1 +\left(1-\beta_2\right) \|\sigma_t\|^2$ and $\|\hat{g}_t\|\leq G$. It is straightforward to prove $\|v_t\|_1 \leq G^2$. Hence, we have $\|\hat{v}_t + \epsilon\|_1 \leq G^2 + \epsilon d$. 

Utilizing this inequality, we have
\[
\begin{split} 
     \|\nabla f\left(\theta_t\right)\|^2 &= \frac{\|\nabla f\left(\theta_t\right)\|^2}{\sqrt{\|\hat{v}_t+\epsilon\|_1}}\sqrt{\|\hat{v}_t+\epsilon\|_1} = \sqrt{\|\hat{v}_t +\epsilon \|_1}\sum_{k=1}^d\frac{|\nabla_k f\left(\theta_t\right)|^2}{\sqrt{\sum_{l=1}^d \hat{v}_{t,l}+\epsilon}}\\
     &\leq \sqrt{\|\hat{v}_t+\epsilon\|_1}\alpha_t^{-1}\sum_{k=1}^d \frac{\alpha_t}{\sqrt{\hat{v}_{t,k}+\epsilon}}|\nabla_k f\left(\theta_t\right)|^2 
     = \sqrt{\|\hat{v}_t+\epsilon\|_1}\alpha_t^{-1}\|\nabla f\left(\theta_t\right)\|_{\hat{\eta}_t}^2 \\ 
     &\leq \sqrt{G^2+\epsilon d}\alpha_t^{-1}\|\nabla f\left(\theta_t\right)\|_{\hat{\eta}_t}^2 
       \leq \frac{\sqrt{G^2 + \epsilon d}}{\alpha_T}\|\nabla f\left(\theta_t\right)\|^2_{\hat{\eta}_t}.
\end{split}
\]
Then, by using the definition of $\theta_{\tau}$, we obtain
\[
     \mathbb{E}\left[\|\nabla f\left(\theta_\tau\right)\|^2\right] = \frac{1}{T}\sum_{t=1}^T\mathbb{E}\left[\|\nabla f\left(\theta_t\right)\|^2\right] \leq \frac{\sqrt{G^2 + \epsilon d}}{\alpha\sqrt{T}} \mathbb{E}\left[\sum_{t=1}^T \|\nabla f\left(\theta_t\right)\|^2_{\hat{\eta}_t}\right].
\]
Thus, the desired result is obtained. 
\end{proof}

\begin{theorem}
    Let $\{\theta_t\}$ be a sequence generated by AdaGC for initial values $\theta_1$ and $m_0 = v_0 = 0$. Assumptions \ref{assumption_1} to \ref{assumption_4} hold. With the hyperparameters $\alpha_t = \alpha/\sqrt{T}$,$\beta_2 = 1-\beta/T$ and $\zeta = \beta_1^2/\beta_2 < 1$. Let $\tau$ be randomly chosen from $\{1,2,\cdots, T\}$ with equal probabilities. We have
    \[
    \mathbb{E} \|\nabla f(\theta_\tau)\|^2 \leq \frac{C}{\sqrt{T}}
    \]
    where $C = \frac{\sqrt{G^2+\epsilon d}}{\alpha}\left(f(\theta_1) - \underline{f} +  \frac{C_2}{1-\sqrt{\zeta}} + \frac{LG^2\alpha^2}{(1-\sqrt{\zeta})\epsilon}\right)$ and $C_2 = \frac{5}{2(1-\beta_1)h_0 } \left((1-\beta_1)^2 \frac{4\alpha \beta G^4}{\epsilon^3} + \beta_1^2 \alpha\beta \left(\frac{G^4}{\beta_2 \epsilon^3} + \frac{(1+\epsilon) G^2}{(1-\zeta) \epsilon \beta_2} + \frac{G^4}{\beta_2}\right)\right)$.
\end{theorem}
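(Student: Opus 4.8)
The plan is to assemble the five preceding lemmas into a single telescoping argument. First I would invoke Assumption~\ref{assumption_1}: since $\nabla f$ is $L$-Lipschitz, the standard descent inequality applied to the step $\Delta_t := \theta_{t+1}-\theta_t$ gives
\[
f(\theta_{t+1}) \le f(\theta_t) + \langle \nabla f(\theta_t), \Delta_t\rangle + \tfrac{L}{2}\|\Delta_t\|^2 \le f(\theta_t) + \langle \nabla f(\theta_t), \Delta_t\rangle + L\|\Delta_t\|^2 ,
\]
so that after taking expectations $\mathbb{E}[f(\theta_{t+1})] - \mathbb{E}[f(\theta_t)] \le M_t$, with $M_t = \mathbb{E}[\langle \nabla f(\theta_t),\Delta_t\rangle + L\|\Delta_t\|^2]$ exactly as in Lemma~\ref{lem1-006}. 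Summing over $t=1,\dots,T$ and using the lower bound $\underline{f}$ from Assumption~\ref{assumption_1} yields $\underline{f} - f(\theta_1) \le \sum_{t=1}^T M_t$.

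Next I would substitute the master estimate on $\sum_{t=1}^T M_t$ that is derived right after the proof of Lemma~\ref{lem1-006}, namely
\[
\sum_{t=1}^T M_t \le \frac{C_2}{1-\sqrt{\zeta}} + \frac{LG^2\alpha^2}{(1-\sqrt{\zeta})\epsilon} - \frac{(1-\beta_1)h_0}{2}\sum_{t=1}^T \mathbb{E}\|\nabla f(\theta_t)\|_{\hat{\eta}_t}^2 .
\]
Combining the two displays and rearranging isolates the weighted gradient-norm sum:
\[
\frac{(1-\beta_1)h_0}{2}\sum_{t=1}^T \mathbb{E}\|\nabla f(\theta_t)\|_{\hat{\eta}_t}^2 \le f(\theta_1) - \underline{f} + \frac{C_2}{1-\sqrt{\zeta}} + \frac{LG^2\alpha^2}{(1-\sqrt{\zeta})\epsilon} .
\]
Finally, Lemma~\ref{lem1-007} converts the weighted sum into the target quantity via $\mathbb{E}\|\nabla f(\theta_\tau)\|^2 \le \frac{\sqrt{G^2+\epsilon d}}{\alpha\sqrt{T}}\,\mathbb{E}\big[\sum_{t=1}^T \|\nabla f(\theta_t)\|_{\hat{\eta}_t}^2\big]$; plugging in the previous display and folding the harmless constant $\tfrac{2}{(1-\beta_1)h_0}$ into the overall constant produces the advertised $\mathcal{O}(1/\sqrt{T})$ bound with the stated $C$.

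The genuinely delicate part is already packaged inside Lemma~\ref{lem1-006}, and the theorem proof itself is just the bookkeeping above together with two sanity checks. The first is that the hypothesis $\beta_1 < \sqrt{\beta_2}$ guarantees $\zeta = \beta_1^2/\beta_2 < 1$, so the geometric factors $1/(1-\sqrt{\zeta})$ (and the one used implicitly through Lemma~\ref{lem1-001}) are finite. The second is dimensional tracking of the $\alpha_t = \alpha/\sqrt{T}$, $\beta_2 = 1-\beta/T$ choices: in Lemma~\ref{lem1-006} the momentum step $\Delta_t - \tfrac{\beta_1\alpha_t}{\sqrt{\beta_2}\alpha_{t-1}}\Delta_{t-1}$ is split into a clean descent direction $-(1-\beta_1)\hat{\eta}_t\hat{g}_t$ (whose inner product with $\nabla f(\theta_t)$ is controlled below by $h_0\,\mathbb{E}\|\nabla f(\theta_t)\|_{\hat{\eta}_t}^2$ through Lemma~\ref{lem1-005} and Assumption~\ref{assumption_3}) plus five error terms $A_t^1,\dots,A_t^5$ coming from the mismatch between the realized second moment $v_t$ and its conditional surrogate $\hat{v}_t$; Young's inequality trades each $\langle\nabla f(\theta_t),\sqrt{\hat{\eta}_t}A_t^i\rangle$ against $\tfrac{1}{10}h_0(1-\beta_1)\mathbb{E}\|\nabla f(\theta_t)\|_{\hat{\eta}_t}^2 + \tfrac{5}{2(1-\beta_1)h_0}\|A_t^i\|^2$, and the crucial quantitative input is that $1-\beta_2 = \beta/T$ and $1-\sqrt{\beta_2} = \mathcal{O}(1/T)$, so each $\|A_t^i\|^2 = \mathcal{O}(1/T)$ once gradients are bounded by $G$ (Assumption~\ref{assumption_2}) and $\epsilon$ lower-bounds the denominators. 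I expect that error-term estimation — verifying all five $\|A_t^i\|^2$ bounds with the right powers of $1/T$, $\epsilon$, $G$, $\beta_2$, $(1-\zeta)$ — is the main obstacle; everything downstream of Lemma~\ref{lem1-006} is routine.
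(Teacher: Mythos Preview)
Your proposal is correct and follows essentially the same route as the paper: apply the $L$-smoothness descent inequality to introduce $M_t$, telescope, invoke the bound on $\sum_t M_t$ from Lemma~\ref{lem1-006}, and finish with Lemma~\ref{lem1-007}. You are in fact slightly more careful than the paper in flagging the extra factor $\tfrac{2}{(1-\beta_1)h_0}$ that arises when isolating $\sum_t \mathbb{E}\|\nabla f(\theta_t)\|_{\hat\eta_t}^2$; the paper's proof silently drops this factor in its final display, so the stated constant $C$ there is off by exactly that multiplicative factor, though of course the $\mathcal{O}(1/\sqrt{T})$ rate is unaffected.
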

\begin{proof}
    With the Lipschitz continuity condition of $f$, it holds that
    \[
    \mathbb{E} f(\theta_{t+1}) \leq \mathbb{E} \left[f(\theta_t) + \langle \nabla f(\theta_t), \Delta_t\rangle + \frac{L}{2} \|\Delta_t\|^2\right] \leq \mathbb{E} f(\theta_t) + M_t.
    \]

    By summing $t$ from $1$ to $T$, it holds that
    \[
    \mathbb{E} f(\theta_{T+1}) \leq f(\theta_1) + \sum_{t=1}^T M_t \leq f(\theta_1) + \frac{C_2}{1-\sqrt{\zeta}} + \frac{LG^2\alpha^2}{(1-\sqrt{\zeta})\epsilon} - \frac{(1-\beta_1)h_0}{2}\sum_{t=1}^T\mathbb{E} \|\nabla f(\theta_t)\|_{\hat{\eta}_t}^2
    \]

    Thus, it holds that
    \[
    \begin{split}
    \mathbb{E} \left[\|\nabla f(\theta_\tau\|^2\right] &\leq \frac{\sqrt{G^2+\epsilon d}}{\alpha\sqrt{T}}\mathbb{E} \left[\sum_{t=1}^T \|\nabla f(\theta_t)\|_{\hat{\eta}_t}^2\right]\\
    &\leq \frac{\sqrt{G^2+\epsilon d}}{\alpha\sqrt{T}}\left(f(\theta_1) - \mathbb{E} [f(\theta_{T+1})] +  \frac{C_2}{1-\sqrt{\zeta}} + \frac{LG^2\alpha^2}{(1-\sqrt{\zeta})\epsilon}\right) \\\
    & \leq \frac{\sqrt{G^2+\epsilon d}}{\alpha\sqrt{T}}\left(f(\theta_1) - \underline{f} +  \frac{C_2}{1-\sqrt{\zeta}} + \frac{LG^2\alpha^2}{(1-\sqrt{\zeta})\epsilon}\right)
    \end{split}
    \]
\end{proof}

\immediate\closeout\apxtocfile
\let\section\oldsection
\let\subsection\oldsubsection

\end{document}